\documentclass[final,12pt]{colt2025} 


\title[A Polynomial-time Algorithm for Online Sparse Linear Regression]
{A Polynomial-time Algorithm for Online Sparse Linear Regression \\with Improved Regret Bound under Weaker Conditions}
\usepackage{times}
\usepackage{enumerate}
\usepackage{color}
\usepackage{bm}
\usepackage{booktabs} 
\usepackage{multirow}
\usepackage{makecell}

\newtheorem{Mylemma}{Lemma}
\newtheorem{MyCoro}{Corollary}
\newtheorem{MyRemark}{Remark}
\newtheorem{assumption}{Assumption}
\newtheorem*{question}{Question}





\coltauthor{%
 \Name{Junfan Li} \Email{lijunfan@hit.edu.cn}\\
 \addr Harbin Institute of Technology (Shenzhen)
 \AND
 \Name{Shizhong Liao} \Email{szliao@tju.edu.cn}\\
 \addr Tianjin University
 \AND
 \Name{Zenglin Xu} \thanks{Corresponding author.}\Email{zenglin@gmail.com}\\
 \addr Fudan University and Shanghai Academy of AI for Science
 \AND
 \Name{Liqiang Nie} $^\ast$ \Email{nieliqiang@gmail.com}\\
 \addr Harbin Institute of Technology (Shenzhen)
}

\begin{document}

\maketitle

\begin{abstract}%
  In this paper,
  we study the problem of online sparse linear regression (OSLR)
  where the algorithms are restricted to accessing only $k$ out of $d$ attributes per instance for prediction,
  which was proved to be NP-hard.
  Previous work gave polynomial-time algorithms
  assuming the data matrix satisfies the linear independence of features,
  the compatibility condition, or the restricted isometry property.
  We introduce a new polynomial-time algorithm,
  which significantly improves previous regret bounds \citep{Ito2017Efficient}
  under the compatibility condition that is weaker than the other two assumptions.
  The improvements benefit from a tighter convergence rate of the $\ell_1$-norm error of our estimators.
  Our algorithm leverages the well-studied Dantzig Selector,
  but importantly with several novel techniques, including
  an algorithm-dependent sampling scheme for estimating the covariance matrix,
  an adaptive parameter tuning scheme,
  and a batching online Newton step with careful initializations.
  We also give novel and non-trivial analyses,
  including an induction method for analyzing the $\ell_1$-norm error,
  careful analyses on the covariance of non-independent random variables,
  and a decomposition on the regret.
  We further extend our algorithm to OSLR with additional observations
  where the algorithms can observe additional $k_0$ attributes after each prediction,
  and improve previous regret bounds \citep{Kale2017Adaptive,Ito2017Efficient}.
\end{abstract}

\begin{keywords}%
  Sparse linear regression, online learning, compatibility condition, Dantzig Selector %
\end{keywords}

\section{Introduction}

    For most online prediction tasks,
    algorithms are assumed to observe all of the attributes of an instance $\mathbf{x}_t\in\mathbb{R}^d$
    at each round $t=1,2,\ldots,T$.
    However,
    the assumption is hard to be satisfied in some real-world scenarios due to various constraints,
    such as computational constraint, human labors and privacy constraint
    \citep{Hazan2012Linear,Jain2012Differentially,Zolghadr2013Online,Murata2018Sample}.
    In the task of medical diagnosis of a disease \citep{Cesa-Bianchi2010Efficient},
    $\mathbf{x}_t$ contains the results of a large number of medical tests.
    However, many patients can only pay the cost for several medical tests.
    Thus $\mathbf{x}_t$ must be sparse.
    Another example is personalized recommendation \citep{Jain2012Differentially}.
    Due to the privacy constraint,
    search engines are not allowed to use sensitive attributes of users,
    such as gender, age, job and so on.
    $\mathbf{x}_t$ is also sparse.
    The algorithms typically make predictions using only limited attributes.

    \citet{Kale2014Open} first formulated online prediction problems with limited access to attributes
    as online sparse linear regression (OSLR).
    At each round $t$,
    an adversary gives an instance $\mathbf{x}_t$ to a learner.
    The learner can only observe $k$, $k<d$, attributes of $\mathbf{x}_t$ at most,
    and outputs a prediction $\hat{y}_t$.
    Then the adversary gives the true output $y_t$.
    The learner suffers a loss $(\hat{y}_t-y_t)^2$.
    The learner aims to minimize her cumulative losses over $T$ rounds.
    Typically, we compare the cumulative losses of the learner with that of any $k$-sparse competitor
    and define the regret as follows,
    \begin{equation}
    \label{eq:AAAI25:definition_regret}
        \forall \mathbf{w}\in \left\{\mathbf{v}\in\mathbb{R}^d:
        \Vert\mathbf{v}\Vert_0\leq k\right\},\quad
        \mathrm{Reg}(\mathbf{w})
        =\sum^T_{t=1}\left(\hat{y}_t-y_t\right)^2
        -\sum^T_{t=1}\left(\mathbf{w}^\top\mathbf{x}_t-y_t\right)^2.
    \end{equation}
    The primal goal is to develop an algorithm that runs in time $O(\mathrm{poly}(T,k,d))$ per-iteration,
    and enjoys a (an expected) regret of order $O(\mathrm{poly}(d,k)\cdot o(T))$.
    There is also a relaxation of OSLR denoted by $(k,k_0,d)$-OSLR,
    in which the algorithms are allowed to observe additional $k_0=O(k\ln{d})$ attributes
    after each prediction.

    The offline problems related to OSLR and $(k,k_0,d)$-OSLR
    concern the approximation of the $k$-sparse solution in linear systems
    that has been proven to be NP-hard by a reduction from set cover problem
    \citep{Natarajan1995Sparse,Foster2015Variable}.
    By a similar approach,
    it was shown that
    both OSLR and $(k,k_0,d)$-OSLR are also NP-hard \citep{Foster2016Online}.
    Specifically,
    there is no algorithm running in time $O(\mathrm{poly}(T,k,d))$ per-iteration
    and having an expected regret of $O(\mathrm{poly}(d)\cdot T^{1-\delta})$
    for any constant $\delta>0$ unless $\mathbf{NP}\subseteq \mathbf{BPP}$.
    They also proposed an inefficient algorithm for $(k,k_0,d)$-OSLR
    that enjoys an expected regret of $O(\frac{d^2}{k^2_0}\sqrt{kT\ln{d}})$
    at a $O((k+k_0)\binom{d}{k})$ space and per-round time complexity.
    Given the computational hardness result,
    it is necessary to further restrict the problem with additional regularity assumptions.
    Certain assumptions on the data matrix are adequate for
    approximating the sparse solutions of linear systems,
    such as the restricted isometry property (RIP) \citep{Candes2005Decoding},
    the compatibility condition \citep{Sara2007The},
    the restricted eigenvalues condition \citep{Bickel2009Simultaneous},
    the restricted strong convexity (RSC) and smoothness of the loss function
    \citep{Murata2018Sample}, and so on.
    \cite{Kale2017Adaptive} proposed the first algorithm for $(k,k_0,d)$-OSLR
    (under a different name POSLR)
    based on the Dantzig Selector \citep{Candes2007The}
    that runs in time $O(\mathrm{poly}(d))$ per-iteration and enjoys a high-probability regret of
    $O(\frac{k^2d^3\ln{T}}{k^3_0}\ln\frac{Td}{\delta})$ under RIP.
    \cite{Ito2017Efficient} independently proposed algorithms for OSLR and $(k,k_0,d)$-OSLR,
    all of which run in time $O(d)$ per-iteration
    and enjoy an expected regret of $O(\mathrm{poly}(d,k)+\mathrm{poly}(d,k)\sqrt{T})$
    under the linear independence of features or compatibility condition.

    However, previous computationally efficient algorithms
    either exhibit undesirable reliance on problem-dependent parameters like
    $T$, $d$, $\min_{i\in S}\vert w^\ast_i\vert<1$,
    and numerical factor,
    or require stringent assumptions,
    in which
    $\mathbf{w}^\ast=(w^\ast_1,\ldots,w^\ast_d)$ is a $k$-sparse vector with support set $S$
    such that $y_t=\langle \mathbf{x}_t,\mathbf{w}^\ast\rangle+\eta_t$
    where $\eta_t$ is a noise.
    Specifically,
    for $(k,k_0,d)$-OSLR,
    the regret bound of the algorithm \citep{Kale2017Adaptive} depends on $O(d^3)$ under the stringent RIP,
    and the regret bound of the algorithm \citep{Ito2017Efficient}
    depends on $O(\sqrt{T})$ under the stronger linear independent features condition.
    For OSLR,
    the regret bounds of the two algorithms \citep{Ito2017Efficient}
    depend on $d^8$
    \footnote{There are typos in original paper \citep{Ito2017Efficient}.
    The correct regret bounds are $O(\frac{d^8}{k^8})$,
    not $O(\frac{d^{16}}{k^{16}})$ as erroneously stated on Page 2 of the original paper.},
    $\min_{i\in S}\vert w^\ast_i\vert^{-7}$
    or a large constant factor $128^2\cdot 36^8$
    under the linear independent features or the compatibility condition.
    Table \ref{tab:ICML2025:OSLR} and Table \ref{tab:ICML2025:POSLR}
    show previous regret bounds.
    It is natural to ask
    \begin{question}
        Whether there are polynomial-time algorithms for OSLR with regret bounds
        that exhibit improved dependence on $T$, $d$, $\min_{i\in S}\vert w^\ast_i\vert$ and numerical factor,
        and for $(k,k_0,d)$-OSLR with regret bounds
        that exhibit improved dependence on $T$ and $d$ under mild assumptions?
    \end{question}

\subsection{Main Results}

    In this paper,
    we will answer the question affirmatively.
    We first propose a new polynomial-time algorithm for OSLR, named DS-OSLRC,
    and then extend the algorithm to $(k,k_0,d)$-OSLR, named DS-POSLRC.
    Our algorithms enjoy much better regret bounds under the realizable assumption,
    that is, there is a true $k$-sparse vector,
    and the compatibility condition that is less restrictive than RIP
    and the linear independent features condition.
    Our main results are summarized in Table \ref{tab:ICML2025:OSLR} and Table \ref{tab:ICML2025:POSLR}.
    \begin{itemize}
      \item DS-OSLRC outputs an estimator $\hat{\mathbf{w}}_s$
            satisfying $\Vert \hat{\mathbf{w}}_s-\mathbf{w}^\ast\Vert_1=\tilde{O}(\sqrt{kd/s})$
            for sufficiently large values of $s$,
            improving previous convergence rates \citep{Ito2017Efficient,Kale2017Adaptive}.
      \item For OSLR, DS-OSLRC enjoys a $4\sqrt{T}+\frac{102^4k^2d^2}{\delta^8_Sh(\mathbf{w}^\ast)^2}\ln^2\frac{dT}{\delta}
          +O(1)$ high-probability regret bound,
          in which $h(\mathbf{w}^\ast)=\min_{i\in S}\vert w^\ast_i\vert<1$
          and $O(1)$ hides some lower order terms.
          The averaging per-round time complexity is $O(\frac{\mathrm{LP}_d}{\sqrt{T}}+k^2)$,
          in which $\mathrm{LP}_d=O(\mathrm{poly}(d))$.
          We significantly improve the regret bounds by \cite{Ito2017Efficient}
          in terms of $d$, $T$, $h(\mathbf{w}^\ast)$ and numerical factor,
          under weaker or the same assumptions.
          Besides,
          the regret bounds in \cite{Ito2017Efficient} hold in expectation,
          which are weaker than our high-probability regret bound.
      \item For $(k,k_0,d)$-OSLR,
            DS-POSLRC enjoys a
        $O(\frac{k^2d^2}{\delta^4_Sk^2_0}\ln\frac{dT}{\delta}
        +\frac{k^2d}{\delta^4_Sk_0}\ln(T)\ln\frac{dT}{\delta})$ high-probability regret bound.
          The per-round time complexity is $O(\mathrm{LP}_d)$.
          We improve the regret bound by \cite{Kale2017Adaptive}
          by a factor of $O(\min\{\frac{d^2}{k^2_0},\frac{d}{k_0}\ln{T}\})$,
          and significantly improve the regret bound by \cite{Ito2017Efficient} in terms of $T$,
          under weaker assumptions.
    \end{itemize}

    \begin{table}[!t]
      \centering
      \begin{tabular}{l|r|r|l}
      \toprule
        Algorithm       & Regret bound w.r.t. $\mathbf{w}^\ast$& Per-round time & Assumptions\\
        \hline
        \cite{Ito2017Efficient}
        & $8\sqrt{kT}+\frac{8192^2\cdot d^8}{\sigma^8_dk^7h(\mathbf{w}^\ast)^7}+O(1)$ & $O(d)$
        &(a), (b), {\color{magenta} (c)}, (g), (i)\\
        \cite{Ito2017Efficient}
        & $8\sqrt{kT}+\frac{128^2\cdot 36^8\cdot d^8}{\delta^8_Sk^3h(\mathbf{w}^\ast)^7}+O(1)$ &$O(d)$
        &(a), (b), {\color{magenta} (d)}, (g), (i)\\
        {\color{magenta}DS-OSLRC} (Ours)
        &{\color{magenta}$4\sqrt{T}+\frac{102^4k^2d^2}{\delta^8_Sh(\mathbf{w}^\ast)^2}\ln^2\frac{dT}{\delta}+O(1)$}
        &{\color{magenta}$O\left(\frac{\mathrm{LP}_d}{\sqrt{T}}+k^2\right)$}&
        {(a), (b), {\color{magenta} (d)}, (f)}\\
      \bottomrule
      \end{tabular}
      \caption{Comparison of the algorithms for OSLR.
      $\delta\in(0,1)$ is a probability parameter,
      $h(\mathbf{w}^\ast)=\min_{i\in S}\vert w^\ast_i\vert< 1$,
      $\sigma_d\leq \delta_S$.
      $\mathrm{LP}_d=O(\mathrm{poly}(d))$ is the time complexity of solving a linear programming
      with $d$ constraints and $d$ variables.
      Assumptions: (a) realizable, (b) sparsity, (c) linear independence of features,
      (d) compatibility condition,
      (g) bounded noise,
      (f) Gaussian noise, and (i) i.i.d. instances.
      \textbf{Assumption (c) is stronger than (d).}}
      \label{tab:ICML2025:OSLR}
    \end{table}

    \begin{table}[!t]
      \centering
      \begin{tabular}{l|r|r|l|r}
      \toprule
        Algorithm       & Regret bound w.r.t. $\mathbf{w}^\ast$& Per-round time & Assumptions & $k_0$\\
      \toprule
        \cite{Foster2016Online}  & $O\left(\frac{d^2}{k^2_0}\sqrt{kT\ln{d}}\right)$ & $O(d^k)$ & -&$\geq 2$\\
        \cite{Ito2017Efficient}
        & $O\left(\frac{d}{\sigma^2_dk_0}\sqrt{T}\right)$ & $O(d)$ &(a), (b), {\color{magenta}(c)}, (g), (i)&$\geq 2$\\
        \cite{Kale2017Adaptive}  & $O\left(\frac{k^2d^3}{k^3_0}\ln{(T)}\ln\frac{dT}{\delta}\right)$
        & $O(\frac{\mathrm{LP}_d}{T/\log{T}}+d)$ & (a), (b), {\color{magenta}(e)}, (f)&$\geq 2$\\
        {\color{magenta}DS-POSLRC}
        & {\color{magenta}$O\left(\frac{k^2d}{\delta^4_Sk_0}(\frac{d}{k_0}+\ln{T})\ln\frac{dT}{\delta}\right)$}
        & {\color{magenta}$O\left(\mathrm{LP}_d\right)$}
        & (a), (b), {\color{magenta}(d)}, (f)&$\geq 3$\\
      \bottomrule
      \end{tabular}
      \caption{Comparison of the algorithms for $(k,k_0,d)$-OSLR.
      Assumption (e) is RIP,
      and the others follow Table \ref{tab:ICML2025:OSLR}.
      \textbf{Assumption (c) is stronger than (e). (e) is stronger than (d)}.}
      \label{tab:ICML2025:POSLR}
    \end{table}

\subsection{Technical Challenges and Contributions}

    DS-OSLRC builds upon the well-studied Dantzig Selector \citep{Candes2007The}.
    Despite this foundation,
    achieving such significant improvements on the regret bounds
    necessitates more novel methodologies in algorithm design and regret analyses.
    We explain as follows.

    The improvements on problem-dependent parameters, such as $d,T,h(\mathbf{w}^\ast)$,
    stem from a new algorithm-dependent sampling scheme,
    and a batching online Newton step (ONS) \citep{Hazan2007Logarithmic} with careful initializations.
    At some exploration round $s>1$,
    let $\hat{\mathbf{w}}_s$ be the estimator of $\mathbf{w}^\ast$.
    Previous algorithms \citep{Kale2017Adaptive,Ito2017Efficient}
    uniformly sample $k$ or $k_0$ attributes from $\mathbf{x}_s$
    to estimate $\mathbf{x}_s$ and $\mathbf{x}_s\mathbf{x}^\top_s$.
    Whereas our sampling scheme uses $\hat{\mathbf{w}}_{s-1}$ to construct sampling probability
    and simultaneously combines uniformly sampling without replacement.
    What's more,
    it requires novel analyses on $\Vert \Delta_s(S)\Vert_1$,
    in which $\Delta_s(S)=\hat{\mathbf{w}}_s(S)-\mathbf{w}^\ast$
    and $\hat{\mathbf{w}}_s(S)$ extracts the elements of $\hat{\mathbf{w}}_s$ restricted to $S\subseteq [d]$.
    We will prove that $\Vert \Delta_s(S)\Vert_1$ satisfies the following inequality
    \begin{equation}
    \label{eq:COLT2025:high_level_upper_bound}
    \begin{split}
        &\frac{\delta^2_S\Vert \Delta_s(S)\Vert_1}{k}\leq c_1\frac{(d-1)(d-2)}{(k-1)(k-2)s}\ln\frac{d}{\delta}+
        c_2\sqrt{\frac{d-1}{s(k-1)} \ln\frac{d}{\delta}} +\mu_s+\\
        &\quad\qquad\frac{c_3}{s}\sqrt{\frac{(d-1)(d-2)}{(k-1)(k-2)}\sum^s_{\tau=1}\Vert\Delta_{\tau-1}(S)\Vert^2_1\ln\frac{d}{\delta}}
        +c_4\sqrt{\frac{(d-1)(d-2)}{(k-1)(k-2)s}\ln\frac{d^2}{\delta}}\Vert\Delta_s(S)\Vert_1,
    \end{split}
    \end{equation}
    where $\delta_S,c_1,c_2,c_3,c_4$ are constants
    and $\mu_s$ is an estimator of $\sqrt{\sum^s_{\tau=1}\Vert\Delta_{\tau-1}(S)\Vert^2_1\ln\frac{d}{\delta}}$.
    We will use a novel induction method to solve the inequality
    and prove that $\Vert \Delta_s(S)\Vert_1=\tilde{O}(\sqrt{kd/s})$ for sufficiently large $s$,
    making it possible to refine the dependence on $d$ and $h(\mathbf{w}^\ast)$.
    We further refine the dependence on $T$
    by using ONS to update parameters throughout each epoch $\mathcal{T}_s\subseteq [T]$.
    Through novel analyses,
    DS-OSLRC enjoys a regret of
    $O(\sqrt{T}+\frac{k^2d^2}{h(\mathbf{w}^\ast)^2}\ln^2{(dT)})$.

    The algorithm-dependent sampling scheme also poses a technical challenge on the algorithm.
    To be specific,
    at each exploration round $s$,
    there is a threshold $\gamma_s>0$ related the Dantzig Selector,
    in which $\gamma_s$ depends on $\sum^s_{\tau=1}\Vert\Delta_{\tau-1}(S)\Vert^2_1$.
    As $\mathbf{w}^\ast$ is unknown,
    it is impossible to use the optimal $\gamma_s$.
    A similar issue exists in the online linearized LASSO algorithm \citep{Yang2023Online},
    in which the regularization parameter depends on $\Vert\Delta_{s-1}\Vert_1$.
    In fact, it is enough to construct $\hat{\gamma}_s\geq c\gamma_s$,
    where $c$ is a constant.
    To this end,
    we use a ``guess-then-verification'' technique.
    Specifically,
    if $\gamma_s$ is known,
    then we can set an exact value for $\mu_s$ in \eqref{eq:COLT2025:high_level_upper_bound}
    and solve the inequality.
    In this way,
    we obtain the ideal convergence rate that allows us to infer $\hat{\gamma}_s$.
    Then we use an induction method to verify the correctness,
    and solve \eqref{eq:COLT2025:high_level_upper_bound} for obtaining the real convergence rate.

    The improvement on the constant factor also
    stems from a tighter convergence rate of $\Vert\Delta_s(S)\Vert_1$,
    and requires novel analyses,
    including careful analyses on the covariance of non-independent random variables,
    and a decomposition on the regret.
    Since DS-OSLRC samples $k$ attributes without replacement,
    the observed attributes are not independent.
    It is necessary to control the covariances.
    Regarding the regret analysis,
    we decompose the regret in each epoch $\mathcal{T}_s$ into two components,
    $$
        \underbrace{\sum_{t\in \mathcal{T}_s}
        \left[\ell(\langle\hat{\mathbf{w}}_s(S_s),\mathbf{x}_t\rangle,y_t)
        -\ell(\mathbf{w}_s,\mathbf{x}_t\rangle,y_t)\right]}_{\mathrm{R}_1}
        +\underbrace{\sum_{t\in \mathcal{T}_s}
        \left[\ell(\mathbf{w}_s,\mathbf{x}_t\rangle,y_t)
        -\ell(\langle\mathbf{w}^\ast,\mathbf{x}_t\rangle,y_t)\right]}_{\mathrm{R}_2},
    $$
    in which $\mathrm{Supp}(\mathbf{w}_s)\subset S_s$.
    $\mathrm{R}_1$ can be bounded by the regret of ONS.
    Moreover, the constant factor associated with $\mathrm{R}_1$ is quite small.
    $\mathrm{R}_2$ depends on $\Vert\mathbf{w}_s-\mathbf{w}^\ast\Vert_1$.
    We note that $\mathbf{w}_s$ can be selected arbitrarily.
    We will use $\mathbf{w}_s=\mathbf{w}^\ast(S_s\cap S)$,
    rather than the obvious but sub-optimal $\hat{\mathbf{w}}_s(S_s)$.
    It can be proved $\Vert \mathbf{w}^\ast(S_s\cap S)-\mathbf{w}^\ast\Vert_1
    \leq \Vert \hat{\mathbf{w}}_s(S_s)-\mathbf{w}^\ast\Vert_1$,
    thereby further reducing the numerical factor.

\subsection{Related Work}

    \cite{Yang2023Online} formulated another variant of online sparse linear regression
    where the algorithms observe all attributes of each $\mathbf{x}_s$, $s=1,\ldots,T$,
    and aim to compute a sequence of estimators $\{\hat{\mathbf{w}}_s\}^T_{s=1}$ approximating $\mathbf{w}^\ast$,
    and proposed an online linearized LASSO algorithm, named OLin-LASSO.
    The algorithm was proved to achieve the optimal $\tilde{O}(\sqrt{k/s})$ $\ell_2$-norm error bound
    under the RSC condition \citep{Murata2018Sample}.
    RSC is stronger than the restricted eigenvalues
    and the compatibility condition, but is weaker than RIP.
    The DA-GL algorithm \citep{Yang2010Online}, which uses dual averaging to solve group LASSO,
    can also be applied to solve this variant.
    Without additional assumptions,
    DA-GL runs in time $O(d)$ per-iteration
    and returns a solution whose $\ell_2$-norm error is bounded by a constant.
    The I-LAMM algorithm \citep{Fan2018I-LAMM}
    first computes a good initialization $\hat{\mathbf{w}}_0$ on a batch of initial data,
    and then iteratively computes a sequence of better estimators.
    I-LAMM is also optimal under the restricted eigenvalues condition.
    The three algorithms are clearly unsuitable for OSLR and $(k,k_0,d)$-OSLR,
    as they require the complete information of each instance.

    Another related problem is linear regression with limited observations (LRLO)
    \citep{Cesa-Bianchi2010Efficient,Cesa-Bianchi2011Efficient,Hazan2012Linear}
    where the algorithms only observe $k$ attributes of each instance at training phase,
    but aim to learn a linear model with low excess risk.
    \cite{Murata2018Sample} studied a harder variant of LRLO
    where the algorithms only observe $k$ attributes of each instance at both training and test phase,
    and proposed an exploration-then-exploitation algorithm
    with improved sample complexity under RSC.
    Since the algorithms are not required to give predictions for training instances,
    the two problems are inherently less complex than OSLR.
    In other words,
    any algorithm for OSLR is applicable for the two problems.
    However, the reverse is not true.


\section{Notations and Preliminaries}

\subsection{Notations}

    For any vector $\mathbf{x}\in\mathbb{R}^d$,
    denote by $x_i$ its $i$-th coordinate.
    For any matrix $\mathbf{X}\in\mathbb{R}^{m\times n}$,
    denote by $X_{i,j}$ the element on the $i$-th row and $j$-th column.
    For any $d\in\mathbb{N}$,
    let $[d] = \{1,2,\ldots,d\}$.
    We use the notation $\Vert\cdot\Vert_p$ where $p\in\{0,1,2,\infty\}$,
    to denote the $p$-norm in $\mathbb{R}^d$.
    For any matrix $\mathbf{X}\in\mathbb{R}^{m\times n}$,
    let $\Vert \mathbf{X}\Vert_{\infty}=\max_{i\in[m]}\sum^n_{j=1}\vert X_{i,j}\vert$ be the infinity norm.
    Let $\left\{\left(\mathbf{x}_t,y_t\right)\right\}_{t\in[T]}$ be a sequence of examples,
    where $\mathbf{x}_t\in\mathcal{X}=\{\mathbf{x}\in\mathbb{R}^d:\Vert \mathbf{x}\Vert_{\infty}\leq 1\}$
    is an instance.
    Let $\mathbb{I}_{E}$ be the indictor function.
    If the event $E$ occurs, then $\mathbb{I}_{E}=1$.
    Otherwise, $\mathbb{I}_{E}=0$.
    For any subset $S\subset[d]$,
    let $S^c=[d]\setminus S$.
    For any $\mathbf{x}\in\mathbb{R}^d$ and $S\subset[d]$,
    we define $\mathbf{x}(S):=
    \left(x_1\cdot\mathbb{I}_{1\in S},x_2\cdot\mathbb{I}_{2\in S},...,x_d\cdot\mathbb{I}_{d\in S}\right)$.
    Let $\mathcal{N}(0,\sigma^2)$ be a normal distribution with variance $\sigma^2$,
    $\mathbf{a}_d$ be the $d$-dimensional vector of $a$'s,
    and $\mathbf{0}_{m\times n}$ be a matrix whose elements are all zeros.
    Let $\mathrm{Supp}(\mathbf{w})=\{i\in[d]:w_i\neq 0\}$ be the support set of a vector $\mathbf{w}$.

\subsection{Sufficient Conditions for Computationally Efficient Algorithms for OSLR}

    \begin{assumption}[Realizable]
    \label{ass:ICML2025:Realizable assumption}
        For each $(\mathbf{x}_t,y_t)$, $t=1,2,...,T$,
        there exists a $\mathbf{w}^\ast$ such that
        $$
            y_t = \langle \mathbf{w}^\ast, \mathbf{x}_t\rangle+\eta_t,
        $$
        in which $\eta_1,\eta_2,\ldots,\eta_T$ are independent noises from $\mathcal{N}(0,\sigma^2)$,
        $\Vert \mathbf{x}_t\Vert_{\infty}\leq 1$
        and $\Vert \mathbf{w}^\ast\Vert_1\leq 1$.
    \end{assumption}
    The normal distribution implies,
    with probability at least $1-\delta$,
    $\vert y_t\vert\leq 1+\sigma\sqrt{2\ln\frac{1}{\delta}}$
    for a fixed $t$.

    \begin{assumption}[Sparsity]
    \label{ass:COLT2025:sparsity}
        $\mathbf{w}^\ast$ is $k$-sparse,
        i.e., $\Vert \mathbf{w}^\ast\Vert_0\leq k$.
    \end{assumption}

    \begin{assumption}[$(\delta_S,S,\alpha)$-compatibility condition \citep{Geer2009On}]
    \label{ass:ICML25:restricted_eigenvalue}
        Let $S\subset[d]$ be an arbitrary subset and
        $\Omega_S=\{\mathbf{w}\in\mathbb{R}^d:\mathbf{w}\neq \mathbf{0}_d,
        \Vert\mathbf{w}(S^c)\Vert_1\leq \alpha \Vert\mathbf{w}(S)\Vert_1\}$.
        The data matrix $\mathbf{X}\in\mathbb{R}^{d\times T}$
        satisfies the $(\delta_S,S,\alpha)$-compatibility condition,
        if there is a constant $\delta_S>0$ such that
        $$
            \min_{\mathbf{w}\in\Omega_S}\frac{\vert S\vert\cdot \Vert \mathbf{X}^\top\mathbf{w}\Vert^2_2}{T\cdot \Vert\mathbf{w}(S)\Vert^2_1}
            =\delta^2_S.
        $$
    \end{assumption}

    The $\left(\delta_S,S,\alpha\right)$-compatibility condition requires
    that the covariance matrix $\mathbf{X}\mathbf{X}^\top$
    enjoys a kind of ``restricted'' positive definiteness on the restricted set $\Omega_S$,
    and the $\ell_1$-norm and the norm defined by $\mathbf{X}\mathbf{X}^\top$ to be compatible.
    As the equivalence of $\ell_1$-norm and $\ell_2$-norm of a vector,
    the compatibility condition is essentially equivalent to the restricted eigenvalues condition.
    For the sake of fair comparison with the results by \citep{Ito2017Efficient},
    we adopt the compatibility condition.
    We can obtain similar or better results under the restricted eigenvalues condition
    by slightly adjusting the analysis.
    The compatibility condition is weaker than the three popular assumptions adopted for solving OSLR,
    including the linear independence of features, RIP, and RSC.

    In this work,
    we only consider the case $3\leq k\leq d-3$.
    For $k\leq 2$ and $k\geq d-2$,
    it is easy to establish an algorithm running in time $O(d)$ or $O(d^2)$.
    Specifically,
    we reduce OSLR to an instance of the adversarial multi-armed bandits problem \citep{Auer2002The}
    by enumerating all of the $N=\binom{d}{k}$ combinations of features
    and corresponding each combination as an arm.
    At each round $t\in[T]$,
    we maintain a parameter vector $\mathbf{w}_{t,i}$ for each combination $i\in[N]$.
    First,
    we select a $\mathbf{w}_{t,I_t}$, $I_t\in[N]$, and make a prediction $\hat{y}_t$.
    After observing $y_t$,
    $\mathbf{w}_{t,I_t}$ can be updated by online gradient descent \citep{Zinkevich2003Online}.
    The technical challenge is that $\mathbf{w}_{t,i}$ can not be updated for all $i\neq I_t$.
    We must carefully design a master algorithm for choosing $I_t$
    and an elaborate parameter updating strategy.
    The same technical challenge exists in the problem of corralling a band of bandit algorithms
    \citep{Agarwal2017Corralling,Foster2020Adapting,Luo2022Corralling}.
    Luckily,
    by the master algorithm proposed in \citep{Foster2020Adapting},
    it is possible to obtain an expected regret of $O\left(\sqrt{NT}\right)$.
    A previous algorithm \citep{Ito2018Online} uses a batching technique
    to combine any bandit algorithms and online learning algorithms, like ONS,
    but only provides a regret of $\tilde{O}\left(N^{\frac{1}{3}}T^{\frac{2}{3}}\right)$.

\section{Algorithm}

    For clarity,
    we first provide a comprehensive overview of the key components and workflow of the proposed algorithm.

\subsection{Overview of Algorithm}

    We first introduce some notations.
    Let $\mathcal{T}=\left\{s^2:s=1,\ldots,\lfloor\sqrt{T}\rfloor\right\}$
    and $\mathcal{I}_s=\left\{1,2^2,\ldots,s^2\right\}$.
    For any $s\in \left\{1,\ldots,\lfloor\sqrt{T}\rfloor\right\}$, let
    $\mathcal{T}_s=\left\{s^2+1,s^2+2,\ldots,(s+1)^2-1\right\}$.
    If $s=\lfloor\sqrt{T}\rfloor$,
    then we define $(s+1)^2-1:=T$.
    It is obvious that $[T]=\mathcal{T}\cup^{\lfloor\sqrt{T}\rfloor}_{s=1}\mathcal{T}_s$.

    Overall,
    our algorithm alternately executes an exploration round and an exploitation epoch.
    Since the algorithms only access $k$ attributes of each instance,
    it is impossible to conduct an exploration process per-round.
    Under certain assumptions,
    both LASSO \citep{Tibshirani1996Regression,Bunea2007Sparsity}
    and the Dantzig Selector \citep{Candes2007The}
    can learn an estimator denoted by $\hat{\mathbf{w}}$ satisfying
    $\Vert \hat{\mathbf{w}}-\mathbf{w}^\ast\Vert_p=O\left(1/\sqrt{T}\right)$
    using $T$ examples where $p\in\{1,2\}$.
    Therefore,
    it is enough to conduct an exploration process across several rounds.
    Specifically,
    our algorithm will execute exploration at each round $t\in\mathcal{T}$.
    At any round $t\in\mathcal{T}$,
    our algorithm will sample $k$ attributes
    and solve the Dantzig Selector, i.e., \eqref{eq:ICML25:new_Dantzig_Selector},
    to obtain an estimator of $\mathbf{w}^{\ast}$ denoted by $\hat{\mathbf{w}}_s$
    where $s=\sqrt{t}$.
    The key technical contribution is the development of a novel algorithm-dependent sampling scheme.

    Given $\hat{\mathbf{w}}_s$,
    we further construct an estimator of the true support set $S$, denoted by $S_s$.
    Then our algorithm transitions to the exploitation epoch $\mathcal{T}_s$
    where our algorithm always observes the attributes indexed by $S_s$,
    and runs ONS initialized with $\hat{\mathbf{w}}_s(S_s)$.
    Although our algorithm uses ONS to learn $\mathbf{w}^{\ast}$,
    we still term the epoch ``exploitation''
    because it fully relies on $S_s$ and $\hat{\mathbf{w}}_s(S_s)$.
    We will propose a careful initialization scheme for ONS
    when our algorithm enters the next exploitation epoch $\mathcal{T}_{s+1}$.

    However,
    solving \eqref{eq:ICML25:new_Dantzig_Selector} requires the prior of $\mathbf{w}^{\ast}$.
    To be specific,
    there is a parameter $\gamma_s$ depending on $\Vert \Delta_{\tau}(S)\Vert_1,\tau<s$,
    where $ \Delta_{\tau}(S):=\hat{\mathbf{w}}_{\tau}(S)-\mathbf{w}^{\ast}$.
    To tune $\gamma_s$ adaptively,
    we first assume that $\Vert\Delta_{\tau}(S)\Vert_1$, $\tau<s$, are known.
    Then we can prove that $\Vert \Delta_s(S)\Vert_1$ satisfies the following inequality
    \begin{equation}
    \label{eq:COLT2025:high_level_upper_bound_ideal}
    \begin{split}
        &\frac{\delta^2_S\Vert \Delta_s(S)\Vert_1}{k}\leq c_1\frac{(d-1)(d-2)}{(k-1)(k-2)s}\ln\frac{d}{\delta}+
        c_2\sqrt{\frac{d-1}{s(k-1)} \ln\frac{d}{\delta}} +\sqrt{\sum^s_{\tau=1}\Vert\Delta_{\tau-1}(S)\Vert^2_1\ln\frac{d}{\delta}}+\\
        &\quad\qquad\frac{c_3}{s}\sqrt{\frac{(d-1)(d-2)}{(k-1)(k-2)}\sum^s_{\tau=1}\Vert\Delta_{\tau-1}(S)\Vert^2_1\ln\frac{d}{\delta}}
        +c_4\sqrt{\frac{(d-1)(d-2)}{(k-1)(k-2)s}\ln\frac{d^2}{\delta}}\Vert\Delta_s(S)\Vert_1.
    \end{split}
    \end{equation}
    We can solve \eqref{eq:COLT2025:high_level_upper_bound_ideal} by an induction method,
    yielding an ideal convergence rate on $\Vert \Delta_s(S)\Vert_1$
    that nearly matches the final convergence rate in Lemma \ref{lemma:estimator_error:DS-OSLRC}.
    We term it ``an ideal convergence rate''
    as it requires the prior of $\mathbf{w}^{\ast}$.
    Then we use the ideal convergence rate to estimate
    $\Vert \Delta_{\tau}(S)\Vert_1$ and $\gamma_s$,
    which only depend on $\tau<s$ and some known constants.
    For simplicity,
    we will not give the ideal convergence rate,
    but implicitly use it to define the estimator of $\gamma_s$
    in Section \ref{sec:COLT2025:adaptive_parameter_tuning}.

\subsection{Exploration}

    We use the Dantzig Selector to learn a sequence of estimators,
    as it can obtain a smaller constant factor in the convergence rate compared to LASSO.
    For the sake of clarity,
    we first introduce the Dantzig Selector in the full information setting.
    For a fixed $t\in \mathcal{T}$,
    let $\mathbf{X}_{\mathcal{I}_s}=(\mathbf{x}_1,\mathbf{x}_{2^2},\ldots,\mathbf{x}_{s^2})
    \in\mathbb{R}^{d\times s}$
    and $\mathbf{Y}_{\mathcal{I}_s}=(y_1,y_{2^2},\ldots,y_{s^2})^\top\in\mathbb{R}^s$,
    in which $s=\sqrt{t}$.
    The Dantzig Selector can return an estimator of $\mathbf{w}^\ast$,
    denoted by $\hat{\mathbf{w}}_s$,
    by solving the following constrained problem.
    $$
        \min_{\mathbf{w}\in\mathbb{R}^d}\Vert\mathbf{w}\Vert_1,\quad
        \mathrm{s.t.}~\left\Vert \frac{1}{s}\mathbf{X}_{\mathcal{I}_s}
        \mathbf{Y}_{\mathcal{I}_s}-\frac{1}{s}\mathbf{X}_{\mathcal{I}_s}\mathbf{X}^\top_{\mathcal{I}_s}\mathbf{w}
        \right\Vert_{\infty}\leq \gamma_s.
    $$
    Unluckily,
    the algorithms for OSLR can not directly observe $\mathbf{X}_{\mathcal{I}s}$.
    Therefore,
    it is necessary to construct estimators of $\mathbf{X}_{\mathcal{I}_s}$
    and $\mathbf{X}_{\mathcal{I}_s}\mathbf{X}^\top_{\mathcal{I}_s}$.
    Previous algorithms \citep{Ito2017Efficient,Kale2017Adaptive,Murata2018Sample}
    uniformly sample $k$ attributes from ${\mathbf{x}}_{\tau^2}$
    for constructing unbiased estimators of ${\mathbf{x}}_{\tau^2}$ and ${\mathbf{x}}_{\tau^2}{\mathbf{x}}^\top_{\tau^2}$
    for all $\tau=1,2,\ldots,s$.
    Nevertheless,
    employing such a basic sampling scheme degrades the convergence rate concerning its reliance on the dimension $d$.
    To address this issue,
    we will define an algorithm-dependent sampling scheme.
    A better sampling probability should be defined as a function of $\mathbf{w}^\ast$,
    such as $\mathbf{q}^{\ast}=(q^{\ast}_1,\ldots,q^{\ast}_d)$ in which
    $q^{\ast}_i=\frac{\vert w^\ast_i\vert}{\Vert \mathbf{w}^\ast\Vert_1}$, $i\in[d]$,
    implying $x_{\tau^2,i}$ will not be selected for any $i\notin S$.
    However, it is infeasible to construct $\mathbf{q}^{\ast}$ as $\mathbf{w}^\ast$ is unknown.
    Benefit from the sequential nature of online learning,
    we can utilize the estimator in the last round, denoted by $\hat{\mathbf{w}}_{s-1}$,
    as a reliable approximation of $\mathbf{w}^\ast$
    and construct a sampling distribution
    \footnote{A similar but fundamentally different idea was adopted by the OLin-LASSO Algorithm \citep{Yang2023Online},
     in which $\hat{\mathbf{w}}_{s-1}$ was used to approximate the square loss function at round $s$.},
    denoted by $\mathbf{q}_s=(q_{s,1},\ldots,q_{s,d})$,
    in which $q_{s,i}=\frac{\vert \hat{w}_{s-1,i}\vert}{\Vert \hat{\mathbf{w}}_{s-1}\Vert_1}$.
    It is still far from optimality to sample from $\mathbf{q}_s$
    due to $\hat{\mathbf{w}}_{s-1}\neq \mathbf{w}^\ast$.
    The error between $\hat{w}_{s-1,i}$ and ${w}^\ast_i$, $i\in[d]$,
    makes the estimators suffering large variances.
    Next we explain our sampling scheme.

    At any round $t\in\mathcal{T}$,
    we first sample a feature $x_{t,I_1}$ from $\{x_{t,1},\ldots,x_{t,d}\}$
    following $\mathbf{q}_s$,
    and then uniformly sample $k-1$ features denoted by $x_{t,I_2},\ldots,x_{t,I_k}$
    from $\{x_{t,1},\ldots,x_{t,d}\}\setminus \{x_{t,I_1}\}$ without replacement,
    in which $s=\sqrt{t}$.
    For simplicity,
    let $B_s=\{I_1,\ldots,I_k\}$,
    $\mathbb{P}[i\in B_s]$ be probability that the $i$-th feature is observed
    and $\mathbb{P}[i,j\in B_s]$
    be the probability that both the $i$-th feature and the $j$-th feature are observed.
    Initializing $\hat{\mathbf{w}}_0=\frac{1}{d}\mathbf{1}_d$.
    Next we construct unbiased estimators of ${\mathbf{x}}_t$ and ${\mathbf{x}}_t{\mathbf{x}}^\top_t$.
    Let $\hat{\mathbf{x}}_t=(\hat{x}_{t,1},\hat{x}_{t,2},\ldots,\hat{x}_{t,d})^\top\in\mathbb{R}^d$
    and $\mathbf{h}_t\in\mathbb{R}^{d\times d}$ satisfy
    \begin{equation}
    \label{eq:COLT2025:estimators}
    \begin{split}
        \forall i\in[d],\qquad\hat{x}_{t,i}=&\frac{x_{t,i}}{\mathbb{P}[i\in B_s]}\cdot\mathbb{I}_{i\in B_s},\quad
        h_t[i,i]=\frac{x^2_{t,i}}{\mathbb{P}[i\in B_s]}\cdot\mathbb{I}_{i\in B_s},\\
        \forall i\neq j\in[d],\quad h_t[i,j]=&
        \frac{x_{t,i}x_{t,j}}{\mathbb{P}[i,j\in B_s]}\cdot\mathbb{I}_{i,j\in B_s}.
    \end{split}
    \end{equation}
    Then we define unbiased estimators of $\mathbf{X}_{\mathcal{I}_s}$
    and $\mathbf{X}_{\mathcal{I}_s}\mathbf{X}^\top_{\mathcal{I}_s}$ as follows,
    $$
        \hat{\mathbf{X}}_{\mathcal{I}_s}=
        \left[\hat{\mathbf{x}}_1,\hat{\mathbf{x}}_{2^2},\hat{\mathbf{x}}_{3^2}\ldots,\hat{\mathbf{x}}_{s^2}\right]
        \in\mathbb{R}^{d\times s},\quad
        \mathbf{H}_{\mathcal{I}_s}=\sum^s_{\tau=1}\mathbf{h}_{\tau^2}\in\mathbb{R}^{d\times d}.
    $$
    Let $g_{d,k}=\frac{(d-1)(d-2)}{(k-1)(k-2)}$.
    Now we define the following time-variant Dantzig Selector ($\mathrm{DS}(\gamma_s)$)
    \begin{equation}
    \label{eq:ICML25:new_Dantzig_Selector}
    \mathrm{DS}(\gamma_s):
        \min_{\mathbf{w}\in\mathbb{R}^d}\Vert\mathbf{w}\Vert_1\quad
        \mathrm{s.t.}~\left\Vert \frac{1}{s}\hat{\mathbf{X}}_{\mathcal{I}_s}\mathbf{Y}_{\mathcal{I}_s}
        -\frac{1}{s}\mathbf{H}_{\mathcal{I}_s}\mathbf{w}\right\Vert_{\infty}\leq \gamma_s,
    \end{equation}
    where $\gamma_s$ is a time-variant threshold defined as follows
    \begin{equation}
    \label{eq:ICML25:r_s}
        \gamma_s=\left(\frac{8}{3}+2\sigma\right)\frac{g_{d,k}}{s}\ln\frac{d}{\delta}+
        \frac{6.9+1.2\sigma}{\sqrt{s}}\sqrt{\frac{d-1}{k-1}\ln\frac{d}{\delta}}+
        \frac{1}{s}\sqrt{3g_{d,k}\sum^s_{\tau=1}\Vert\Delta_{\tau-1}(S)\Vert^2_1\ln\frac{d}{\delta}},
    \end{equation}
    in which $\Delta_{\tau-1}(S)=\hat{\mathbf{w}}_{\tau-1}(S)-\mathbf{w}^\ast$.
    Denoted by $\hat{\mathbf{w}}_s$ the optimal solution of $\mathrm{DS}(\gamma_s)$.
    $\hat{\mathbf{X}}_{\mathcal{I}_s}\mathbf{Y}_{\mathcal{I}_s}$ can be computed incrementally
    in a space and per-round time complexity of $O(d)$.
    Besides,
    computing $\mathbf{H}_{\mathcal{I}_s}$ necessitates a space and per-round time complexity of $O(d^2)$.
    $\mathrm{DS}(\gamma_s)$ can be recast as a linear programming \citep{Candes2007The},
    and be solved in polynomial time.
    For instance,
    the interior point method \citep{Karmarkar1984A} requires time in $O(d^{3.5}L)$
    and the algorithm by \cite{Vaidya1989Speeding} requires time in $O(d^{2.5}L)$,
    in which $L$ is the number of bits in the input of the linear programming.

\subsection{Exploitation}

    Let $S_0\subseteq [d]$ be an arbitrary subset satisfying $\vert S_0\vert=k$.
    For $s\geq 1$,
    it can be proved that $\Vert \hat{\mathbf{w}}_s-\mathbf{w}^\ast\Vert_1\rightarrow 0$ in probability,
    suggesting a natural estimation of the true support set $S$, denoted by $S_s$,
    $$
        S_s\subseteq [d],\quad \mathrm{s.t.}~\vert S_s\vert=k,~
        \forall i\in S_s, j\in [d]\setminus S_s,\quad
        \left\vert \hat{w}_{s,i}\right\vert
        \geq \left\vert\hat{w}_{s,j}\right\vert.
    $$
    Before $S_s$ converges to $S$,
    we can trust $\hat{\mathbf{w}}_s(S_s)$ throughout the $s$-th epoch $\mathcal{T}_s$,
    that is,
    $\hat{\mathbf{w}}_s(S_s)$ can serve as a good predictor for all $\mathbf{x}_t$, $t\in \mathcal{T}_s$.
    The regret in $\mathcal{T}_s$ obviously depends on the convergence rate $\Vert\hat{\mathbf{w}}_s-\mathbf{w}^\ast\Vert_1$.
    There is a $s_2$, such that $S_s=S$ for any $s>s_2$ with a high probability.
    In this case,
    $\hat{\mathbf{w}}_s(S_s)$ is not the best predictor for all $\mathbf{x}_t$,
    since $\Vert\hat{\mathbf{w}}_s-\mathbf{w}^\ast\Vert_1$ decays with $s$,
    not $t$ (the number of observed examples).
    To address this issue,
    we can solve an ordinary online linear regression
    on the observations $\{({\mathbf{x}}_t(S_s),y_t)\}_{t\in \mathcal{T}_s}$,
    and use an online learning algorithm to learn $\mathbf{w}^\ast$.
    Since the square loss function is exp-concave,
    we will use ONS
    \footnote{If we use online gradient descent \citep{Zinkevich2003Online},
    then the algorithm only requires a per-round time complexity in $O(k)$,
    but incurs larger regret.}.
    The per-round time complexity is $O(k^2)$.

    As $s_2$ is unknown,
    we will update $\hat{\mathbf{w}}_s(S_s)$ during the epoch $\mathcal{T}_s$ for all $s\geq 1$.
    We first define a feasible set $\mathcal{W}_s$ as follows
    $$
        \forall s\geq 1,\quad
        \mathcal{W}^t_s=\left\{ \mathbf{w}\in\mathbb{R}^d: \mathrm{Supp}(\mathbf{w})\subset S_s,
        t\in\mathcal{T}_s,\langle \mathbf{w},{\mathbf{x}}_t(S_s)\rangle\leq 1\right\},\quad
        \mathcal{W}_s=\cap_{t\in\mathcal{T}_s}\mathcal{W}^t_s.
    $$
    It is obvious that $\hat{\mathbf{w}}_s(S_s)\in \mathcal{W}_s$.
    Initializing a parameter vector $\bar{\mathbf{w}}_{s^2+1}(S_s)=\hat{\mathbf{w}}_s(S_s)$,
    and a covariance matrix $\mathbf{A}_s=\varepsilon\cdot\mathbf{I}_{k\times k}$.
    At each round $t\in\mathcal{T}_s$,
    the prediction is given by $\hat{y}_t=\langle\bar{\mathbf{w}}_t(S_s),\mathbf{x}_t\rangle$.
    Let $\mathbf{g}_t=2(\hat{y}_t-y_t){\mathbf{x}}_t(S_s)$ be the gradient.
    The parameters are updated as follows
    \footnote{{\color{red}
    There is a minor mistake regarding the projection operation in our paper published at COLT 2025.
    It has been addressed in the current version.
    We have also updated the pseudo-code of Algorithm \ref{alg:ICML2025:DS-OSLRC}.
    Our theoretical analyses, as well as all theoretical bounds, remain unaffected by those changes}.},
    $$
        \mathbf{A}_t = \mathbf{A}_{t-1}+\rho\mathbf{g}_t\mathbf{g}^\top_t,~
        \bar{\mathbf{w}}_{t+1}(S_s)
        =\mathop{\arg\min}_{\mathbf{w}\in\mathcal{W}^{t+1}_s}
        \left\Vert \mathbf{w}-\bar{\mathbf{w}}_t(S_s)+\mathbf{A}^{-1}_t\mathbf{g}_t\right\Vert_{\mathbf{A}_t}
        :=\mathcal{P}^t_s
        (\bar{\mathbf{w}}_t(S_s)-\mathbf{A}^{-1}_t\mathbf{g}_t),
    $$
    in which $\mathcal{P}^t_s(\cdot)$ is a projector operator \citep{Luo2016Efficient} defined as follows,
    $$
        \mathcal{P}^t_s\left(\mathbf{w}\right)
        =\mathbf{w}
        -\frac{\tau(\langle \mathbf{w},\mathbf{x}_{t+1}(S_s)\rangle)}
        {\langle\mathbf{x}_{t+1}(S_s),\mathbf{A}^{-1}_t\mathbf{x}_{t+1}(S_s)\rangle}
        \mathbf{A}^{-1}_t\mathbf{x}_{t+1}(S_s),\quad
        \tau(y)=\mathrm{sign}(y)\cdot \max\{\vert u\vert-1,0\}.
    $$
    If $S_s=S_{s-1}$,
    then resetting $\bar{\mathbf{w}}_{s^2+1}=\hat{\mathbf{w}}_s(S_s)$ and
    $\mathbf{A}_{s^2}=\varepsilon\cdot\mathbf{I}_{k\times k}$ must increase the regret.
    To address this issue,
    we redefine the initial configurations as follows
    $$
    \left\{
    \begin{array}{l}
        \bar{\mathbf{w}}_{s^2+1}(S_s)=\hat{\mathbf{w}}_s(S_s),~~\quad
        \mathbf{A}_{s^2}=\varepsilon\cdot\mathbf{I}_{k\times k},\quad \mathrm{if}~S_s\neq S_{s-1},\\
    \bar{\mathbf{w}}_{s^2+1}(S_s)=\bar{\mathbf{w}}_{s^2}(S_{s-1}),~
        \mathbf{A}_{s^2}=\mathbf{A}_{s^2-1},\quad\mathrm{otherwise}.
    \end{array}
    \right.
    $$

\subsection{Adaptive Parameters-tuning}
\label{sec:COLT2025:adaptive_parameter_tuning}

    By \eqref{eq:ICML25:r_s},
    $\gamma_s$ requires the prior information of $\mathbf{w}^\ast$.
    It is necessary to construct an estimator of $\gamma_s$, denoted by $\hat{\gamma}_s$.
    By theoretical analyses,
    we must ensure that $\mathrm{DS}(\hat{\gamma}_s)$ has a solution $\hat{\mathbf{w}}_s$
    satisfying $\Vert \hat{\mathbf{w}}_s\Vert_1\leq \Vert \mathbf{w}^\ast\Vert_1$.
    To this end,
    it is sufficient to make $\hat{\gamma}_s$ be a tight upper bound of $\gamma_s$.
    By the definition of $\gamma_s$,
    it is further reduced to provide a tight upper bound on
    $\frac{1}{s}\sqrt{3g_{d,k}\sum^s_{\tau=1}\Vert\Delta_{\tau-1}(S)\Vert^2_1\ln\frac{d}{\delta}}$.
    Let $\nu_s$ be an estimator such that
    $\nu_s\geq \frac{1}{s}\sqrt{3g_{d,k}\sum^s_{\tau=1}\Vert\Delta_{\tau-1}(S)\Vert^2_1\ln\frac{d}{\delta}}$.
    Then $\hat{\gamma}_s$ can be defined by
    \begin{equation}
    \label{eq:ICML25:hat_gamma_s}
        \hat{\gamma}_s=\left(\frac{8}{3}+2\sigma\right)\frac{g_{d,k}}{s}\ln\frac{d}{\delta}+
        \frac{6.9+1.2\sigma}{\sqrt{s}}\sqrt{\frac{d-1}{k-1}\ln\frac{d}{\delta}}+
        \nu_s.
    \end{equation}
    The challenge is how to define $\nu_s$.
    Note that if $\gamma_s$ is known,
    we can obtain the ideal convergence rate of $\Vert\Delta_s(S)\Vert_1$.
    To be specific,
    we first solve \eqref{eq:COLT2025:high_level_upper_bound_ideal} by an induction method,
    and obtain the ideal convergence rate.
    Then we use it to define $\nu_s$.
    We will verify whether $\hat{\gamma}_s\geq \gamma_s$ for all $s\geq 1$,
    and concurrently establish the real convergence rate of $\Vert\Delta_s(S)\Vert_1$ by an induction method.
    The real convergence rate nearly matches the ideal one.
    Let $\delta\in(0,1)$,
    $\mu_1=\frac{9}{9-2\sqrt{3}}$, and
    \begin{equation}
    \label{eq:COLT2025:s0-s1_DS-OSLRC}
        \mu_2=\frac{1}{1-\frac{\sqrt{6}}{9\sqrt{\frac{d-2}{k-2}\ln\frac{d^2}{\delta}}}},\quad
        s_0=\frac{24^2 k^2g_{d,k}}{\delta^4_S}\ln\frac{d^2}{\delta},\quad
        s_1=\frac{24^2 k^2g_{d,k}}{\delta^4_S}\frac{d-2}{k-2}
        \ln\left(\frac{d}{\delta}\right)\ln\frac{d^2}{\delta}.
    \end{equation}
    We consider the problems where the number of examples is sufficient large,
    i.e., $T> (s_1+1)^2$.
    Let
    \begin{equation}
    \label{eq:COLT2025:a1-a5_DS-OSLRC}
    \left\{
    \begin{split}
        a_1=&\left(\frac{64}{3}+\frac{32}{3}\sigma\right)\ln\frac{d}{\delta},\quad
        a_2=\frac{16\left(6.9+1.2\sigma\right)}{3}\sqrt{\ln\frac{d}{\delta}},\quad
        a_3=\frac{8}{3}\sqrt{3\ln\frac{d}{\delta}},\\
        a_4=&\delta^2_S\frac{a_1}{k}
            +24a_2\sqrt{\frac{k-2}{d-2}\ln\frac{d^2}{\delta}}
            +4a_3\left(24\sqrt{\ln\frac{d^2}{\delta}}+\frac{\delta^2_S}{k\sqrt{g_{d,k}}}\right),\\
        a_5=&\frac{9}{9-2\sqrt{3}}\left(\delta^2_S\frac{8+4\sigma}{9k}
            +\frac{32}{\sqrt{3}}
            +\frac{4\sqrt{3}\delta^2_S}{9k\sqrt{g_{d,k}\ln{\frac{d^2}{\delta}}}}\right)
            +a_2+\frac{2\sqrt{3}a_2}{9-2\sqrt{3}}\sqrt{\frac{k-2}{(d-2)\ln\frac{d^2}{\delta}}}.
    \end{split}
    \right.
    \end{equation}
    Then $\nu_s$ is defined as follows
    $$
        \nu_s=
        \left\{
        \begin{array}{ll}
        \frac{2}{\sqrt{s}}\sqrt{3 g_{d,k}\ln\frac{d}{\delta}},&s\leq [1,s_0],\\
        \frac{1}{s}\sqrt{3g_{d,k}\ln\frac{d}{\delta}}
        \left(\frac{48k}{\delta^2_S}\sqrt{g_{d,k}\ln\frac{d^2}{\delta}}+2\right),&s=s_0+1,\\
        \frac{s_0+1}{s}\nu_{s_0+1}+\frac{1}{s}\sqrt{3g_{d,k}\ln\frac{d}{\delta}}\cdot
        \frac{\mu_1 a_4}{\delta^4_S}\sqrt{\sum^{s-1}_{\tau=s_0+1}\frac{k^4g^2_{d,k}}{\tau^2}},&s\in(s_0+1,s_1],\\
        \frac{1}{s}\sqrt{3g_{d,k}\ln\frac{d}{\delta}}\left(\frac{48k}{\delta^2_S}\sqrt{g_{d,k}\ln\frac{d^2}{\delta}}+
        2+\frac{\mu_1a_4k^2g_{d,k}}{\delta^4_S}\sqrt{\sum^{s_1}_{\tau=s_0+1}\frac{1}{\tau^2}}\right),&s=s_1+1,\\
        \frac{s_1+1}{s}\nu_{s_1+1}+
        \frac{1}{s}\sqrt{3g_{d,k}\ln\frac{d}{\delta}}
        \cdot\frac{\mu_2a_5}{\delta^2_S}\sqrt{\sum^{s-1}_{\tau=s_1+1}\frac{k^2(d-1)}{\tau(k-1)}},&s>s_1+1.
        \end{array}
        \right.
    $$
    Now we can solve $\mathrm{DS}(\hat{\gamma}_s)$ and obtain the solution $\hat{\mathbf{w}}_s$.
    We name this algorithm DS-OSLRC (Dantzig Selector for OSLR with Compatibility condition)
    and show the pseudo-code in Algorithm \ref{alg:ICML2025:DS-OSLRC}.

    \begin{algorithm2e}[!t]
        \caption{\small{DS-OSLRC}}
        \LinesNumbered
        \footnotesize
        \label{alg:ICML2025:DS-OSLRC}
        \KwIn{$k$, $d$, $\sigma$, $\delta_S$, $\rho$, $\delta$}
        Initialize $\mathbf{A}_1=\varepsilon\cdot \mathbf{I}_{k\times k}$,
        $\hat{\mathbf{w}}_0=\frac{1}{d}\mathbf{1}_d$,
        $\hat{\mathbf{x}}_{\mathcal{I}_0}=\mathbf{0}_d$,
        $\mathbf{H}_{\mathcal{I}_0}=\mathbf{0}_{d\times d}$, $S_0$\;
        \For{$t=1,2,\ldots,T$}
        {
            \If{$t\in\mathcal{T}$}
            {
                $s=\sqrt{t}$\;
                Obtain $B_s\subseteq [d]$ from SAMPLING($k,d,\hat{\mathbf{w}}_{s-1}$)\;
                Output the prediction $\hat{y}_t=\langle \hat{\mathbf{w}}_{s-1}(B_s),\mathbf{x}_t(B_s)\rangle$\;
                Compute $\hat{\mathbf{X}}_{\mathcal{I}_s}\mathbf{Y}_{\mathcal{I}_s}=
                \hat{\mathbf{X}}_{\mathcal{I}_{s-1}}\mathbf{Y}_{\mathcal{I}_{s-1}}+
                \hat{\mathbf{x}}_{s^2}y_{s^2}$
                where $\hat{\mathbf{x}}_{s^2}$
                is computed by combining \eqref{eq:COLT2025:estimators} with \eqref{eq:COLT2025:probabilities}\;
                Compute $\mathbf{H}_{\mathcal{I}_s}=\mathbf{H}_{\mathcal{I}_{s-1}}+\mathbf{h}_{s^2}$
                where $\mathbf{h}_{s^2}$
                is computed by combining \eqref{eq:COLT2025:estimators} with \eqref{eq:COLT2025:probabilities}\;
                Compute $\hat{\gamma}_s$ by \eqref{eq:ICML25:hat_gamma_s}\;
                Obtain $\hat{\mathbf{w}}_s$ by solving $\mathrm{DS}(\hat{\gamma}_s)$ and select $S_s$\;
                \If{$S_s\neq S_{s-1}$}
                {
                    Initialize
                    $\tilde{\mathbf{w}}_{s^2+1}(S_s)=\hat{\mathbf{w}}_s(S_s)$\;
                    Initialize $\mathbf{A}_{s^2}=\varepsilon\cdot\mathbf{I}_{k\times k}$\;
                }
                \Else
                {
                    Initialize $\tilde{\mathbf{w}}_{s^2+1}(S_s)=\tilde{\mathbf{w}}_{s^2}(S_{s-1})$\;
                    Initialize $\mathbf{A}_{s^2}=\mathbf{A}_{s^2-1}$\;
                }
            }
            \Else
            {
                Compute $\bar{\mathbf{w}}_t(S_s)=
                \mathcal{P}^{t-1}_s\left(\tilde{\mathbf{w}}_t(S_s)\right)$\;
                Output the prediction $\hat{y}_t=\langle \bar{\mathbf{w}}_t(S_s),\mathbf{x}_t(S_s)\rangle$\;
                Compute $\mathbf{g}_t=2(\hat{y}_t-y_t){\mathbf{x}}_t(S_s)$\;
                Update $\mathbf{A}_t=\mathbf{A}_{t-1}+\rho \cdot \mathbf{g}_t\mathbf{g}^\top_t$\;
                Compute $\tilde{\mathbf{w}}_{t+1}(S_s)=\bar{\mathbf{w}}_t(S_s)-\mathbf{A}^{-1}_t\mathbf{g}_t$\;
            }
        }
    \end{algorithm2e}

    \begin{algorithm2e}[!t]
        \caption{\small{SAMPLING}}
        \LinesNumbered
        \footnotesize
        \label{alg:ICML2025:SAMPLING}
        \KwIn{$k$, $d$, $\mathbf{w}$}
        Initialize $B=\emptyset$\;
        Compute $\mathbf{q}=\frac{\mathbf{w}}{\Vert\mathbf{w}\Vert_1}$\;
        Sample $I_t\in [d]$ following $\mathbf{q}$\;
        Update $B=B\cup\{I_t\}$\;
        \For{$r=1,\ldots,k-1$}
        {
            Sampling $I_r\in[d]\setminus B$ uniformly\;
            $B=B\cup \{I_{r+1}\}$\;
        }
        Return $B$\;
    \end{algorithm2e}

\section{Main Results}

    In this section,
    we give the $\ell_1$-norm error of $\hat{\mathbf{w}}_s(S)$
    and the regret bound of DS-OSLRC.

\subsection{$\ell_1$-norm Error Bound}

    Lemma \ref{lemma:ICML25:sampling_probability:two_varianbles}
    gives the sampling probabilities,
    making it possible to compute $\hat{\mathbf{x}}_{s^2}$ and $\mathbf{h}_{s^2}$.
    The analysis is non-trivial,
    given that DS-OSLRC samples attributes without replacement.
    We do not prove Lemma \ref{lemma:ICML25:sampling_probability:two_varianbles},
    but instead, prove a more general version, namely,
    Lemma \ref{lemma:ICML25:sampling_probability} in Appendix \ref{sec:COLT2025:technical_lemmas}.
    \begin{Mylemma}
    \label{lemma:ICML25:sampling_probability:two_varianbles}
        For any $s\geq 1$,
        let $B_s$ be the output of Algorithm \ref{alg:ICML2025:SAMPLING}.
        \begin{equation}
        \label{eq:COLT2025:probabilities}
        \begin{split}
            \forall i\in[d],\quad\mathbb{P}\left[i\in B_s\right]
            =&\frac{d-k}{d-1}q_{s,i}+\frac{k-1}{d-1},\\
            \forall i\neq j\in[d],\quad\mathbb{P}\left[i,j\in B_s, i\neq j\right]
            =&\frac{(k-1)(k-2)}{(d-1)(d-2)}+ \frac{(k-1)(d-k)}{(d-1)(d-2)}\cdot(q_{s,i}+q_{s,j}).
        \end{split}
        \end{equation}
    \end{Mylemma}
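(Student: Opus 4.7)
The plan is to compute both probabilities by conditioning on the first sampled index $I_1$, since Algorithm \ref{alg:ICML2025:SAMPLING} draws $I_1$ from the non-uniform distribution $\mathbf{q}_s$ and then draws $I_2,\ldots,I_k$ uniformly without replacement from $[d]\setminus\{I_1\}$. This separation of the two sampling mechanisms is what makes the case analysis manageable.

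For the marginal $\mathbb{P}[i\in B_s]$, I would decompose the event $\{i\in B_s\}$ into $\{I_1=i\}$ and $\{I_1\ne i,\ i\in\{I_2,\ldots,I_k\}\}$. The first contributes $q_{s,i}$ directly. Conditioned on $I_1=j$ with $j\ne i$, the remaining sample of size $k-1$ is uniform from the $d-1$ remaining indices, so $i$ is included with probability $\frac{k-1}{d-1}$. Summing over $j\ne i$ yields a contribution of $(1-q_{s,i})\frac{k-1}{d-1}$, and combining terms and simplifying gives exactly $\frac{d-k}{d-1}q_{s,i}+\frac{k-1}{d-1}$.

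For the joint probability $\mathbb{P}[i,j\in B_s]$ with $i\ne j$, I would split on three mutually exclusive cases for $I_1$: (i) $I_1=i$, (ii) $I_1=j$, and (iii) $I_1\notin\{i,j\}$. In case (i), we need $j$ to appear among the $k-1$ uniform draws from $[d]\setminus\{i\}$, with probability $\frac{k-1}{d-1}$; case (ii) is symmetric. In case (iii), both $i$ and $j$ lie in the pool $[d]\setminus\{I_1\}$ of size $d-1$, and the chance that a uniform $(k-1)$-subset contains both of them is $\binom{d-3}{k-3}/\binom{d-1}{k-1}=\frac{(k-1)(k-2)}{(d-1)(d-2)}$. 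Weighting by the respective probabilities $q_{s,i}$, $q_{s,j}$, and $1-q_{s,i}-q_{s,j}$ and regrouping using $\frac{k-1}{d-1}-\frac{(k-1)(k-2)}{(d-1)(d-2)}=\frac{(k-1)(d-k)}{(d-1)(d-2)}$ recovers the claimed formula.

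The only genuine obstacle is the combinatorial bookkeeping for case (iii) of the joint probability: one must be precise that conditional on $I_1$, the remaining $k-1$ indices form a uniform random subset of $[d]\setminus\{I_1\}$, so hypergeometric counting applies. Everything else is routine algebraic simplification. Since this lemma is stated as a specialization of the more general Lemma \ref{lemma:ICML25:sampling_probability} proved in the appendix, I would in practice just verify that plugging the two-variable case into that general statement reproduces \eqref{eq:COLT2025:probabilities}; the direct conditioning argument sketched above serves as a self-contained derivation.
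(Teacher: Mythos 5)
Your proof is correct. The paper does not argue this lemma directly; it instead proves the more general Lemma \ref{lemma:ICML25:sampling_probability}, whose argument enumerates, for every ordered pair of positions $(m,n)$ in the sampling sequence, the probability $p_{m,n}[i,j]$ that $i$ is drawn at step $m$ and $j$ at step $n$, computing each entry via telescoping products of one-step conditional probabilities and then summing the whole table. Your route shares the same initial case split on the value of $I_1$ (namely $I_1=i$, $I_1=j$, or $I_1\notin\{i,j\}$), but then replaces the ordered step-by-step enumeration with a single structural observation: conditional on $I_1$, the remaining $k-1$ indices form a uniformly random $(k-1)$-subset of $[d]\setminus\{I_1\}$, so the conditional inclusion probabilities are the hypergeometric ratios $\binom{d-2}{k-2}/\binom{d-1}{k-1}=\frac{k-1}{d-1}$ and $\binom{d-3}{k-3}/\binom{d-1}{k-1}=\frac{(k-1)(k-2)}{(d-1)(d-2)}$. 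This is cleaner and less error-prone than the paper's table of $p_{m,n}[i,j]$; the ordered enumeration in the appendix is arguably only there because the general lemma also needs the three-index probability $\mathbb{P}[i,j,r\in B_s]$, and even that would follow from your argument with $\binom{d-4}{k-4}/\binom{d-1}{k-1}$ in the fourth case. Both computations finish with the same regrouping identity $\frac{k-1}{d-1}-\frac{(k-1)(k-2)}{(d-1)(d-2)}=\frac{(k-1)(d-k)}{(d-1)(d-2)}$ and agree with the stated formulas, so your derivation is a valid self-contained substitute for the paper's.
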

    Next we provide the $\ell_1$-norm error of $\hat{\mathbf{w}}_s(S)$,
    which serves as the foundation for regret analysis.

    \begin{Mylemma}[Estimation Error]
    \label{lemma:estimator_error:DS-OSLRC}
        Let $S=\mathrm{Supp}(\mathbf{w}^\ast)$,
        $0<\delta <1$ and $\hat{\mathbf{w}}_0=\frac{1}{d}\mathbf{1}_d$.
        If $3\leq k\leq d-3$, $T>(s_1+1)^2$,
        Assumptions \ref{ass:ICML2025:Realizable assumption}-\ref{ass:COLT2025:sparsity} hold,
        and $\mathbf{X}_{\mathcal{I}_s}$ satisfies the $(\delta_S,S,1)$-compatibility condition
        for all $s\geq 1$,
        then with probability at least $1-\sqrt{T}(6+\log_{1.5}\frac{d+k}{k-2})\delta$,
        DS-OSLRC guarantees
        $$
            \forall s\geq 1,~
            \Vert \Delta_s(S)\Vert_1 \leq
            \left\{
            \begin{array}{ll}
            \frac{16+8\sigma}{\delta^2_S}\frac{kg_{d,k}}{s}\ln\frac{d}{\delta}
            +\frac{\left(26+4.8\sigma\right)k}{\delta^2_S}\sqrt{\frac{(d-1)\ln\frac{d}{\delta}}{s(k-1)}}+
            \frac{22k}{\delta^2_S}\sqrt{\frac{g_{d,k}}{s}\ln\frac{d}{\delta}},&s\leq [1,s_0]\\
            \frac{9}{9-2\sqrt{3}}\frac{a_4}{\delta^4_S}\frac{k^2g_{d,k}}{s},&s\in(s_0,s_1]\\
            \mu_2\cdot\frac{a_5}{\delta^2_S}\sqrt{\frac{k^2(d-1)}{s(k-1)}},&s>s_1,\\
            \end{array}
            \right.
        $$
        in which $\mu_2$, $s_0$ and $s_1$ follow \eqref{eq:COLT2025:s0-s1_DS-OSLRC},
        $a_4$ and $a_5$ follow \eqref{eq:COLT2025:a1-a5_DS-OSLRC}.
    \end{Mylemma}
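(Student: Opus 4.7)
The plan is to combine the feasibility of the Dantzig Selector with a three-regime induction that exploits the piecewise construction of $\nu_s$ in the adaptive threshold $\hat\gamma_s$. The overall strategy is the ``guess-then-verify'' scheme described in the overview: each piece of $\nu_s$ is designed so that, whenever the inductive bound on $\|\Delta_\tau(S)\|_1$ for $\tau<s$ holds, one can show $\hat\gamma_s\ge\gamma_s$, which makes $\mathbf{w}^\ast$ feasible for $\mathrm{DS}(\hat\gamma_s)$ and unlocks the standard Dantzig-Selector deterministic analysis; substituting back then reproduces the inductive bound at step $s$ with the stated constants.

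\textbf{Step 1 (concentration for the plug-in moments).} First I would establish high-probability bounds on
$\bigl\Vert\tfrac{1}{s}\hat{\mathbf{X}}_{\mathcal{I}_s}\mathbf{Y}_{\mathcal{I}_s}-\tfrac{1}{s}\mathbf{X}_{\mathcal{I}_s}\mathbf{Y}_{\mathcal{I}_s}\bigr\Vert_\infty$ and $\bigl\Vert\tfrac{1}{s}\mathbf{H}_{\mathcal{I}_s}-\tfrac{1}{s}\mathbf{X}_{\mathcal{I}_s}\mathbf{X}_{\mathcal{I}_s}^\top\bigr\Vert_\infty$. Because each $\hat{\mathbf{x}}_{\tau^2}$ and each $\mathbf{h}_{\tau^2}$ is unbiased by construction, I apply Bernstein-type inequalities coordinate-wise, using the explicit probabilities of Lemma~\ref{lemma:ICML25:sampling_probability:two_varianbles} to compute variances. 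The non-independence produced by sampling $k$ attributes without replacement enters through pairwise probabilities; the associated cross-covariances must be controlled term by term, which produces the characteristic $g_{d,k}$ and $\tfrac{d-1}{k-1}$ factors. The Gaussian noise $\eta_t$ yields the $\sigma$-dependent contributions (with the usual $\sqrt{\ln(1/\delta)}$ tail). A union bound over $d$ coordinates, over $s\le\lfloor\sqrt T\rfloor$, and over the logarithmic ladder of inductive levels accumulates the factor $\sqrt T(6+\log_{1.5}\tfrac{d+k}{k-2})\delta$.

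\textbf{Step 2 (master inequality).} On the good event of Step~1, provided the inductive bound guarantees $\hat\gamma_s\ge\gamma_s$, the vector $\mathbf{w}^\ast$ is feasible for $\mathrm{DS}(\hat\gamma_s)$, hence $\|\hat{\mathbf{w}}_s\|_1\le\|\mathbf{w}^\ast\|_1$, and a standard computation places $\Delta_s$ in the cone $\Omega_S$ with $\alpha=1$. Combining this cone membership with the $(\delta_S,S,1)$-compatibility condition applied to $\mathbf{X}_{\mathcal{I}_s}$, and using the Dantzig constraint together with the Step~1 concentration, yields the master inequality~\eqref{eq:COLT2025:high_level_upper_bound} where $\mu_s$ equals the quantity currently substituted into $\hat\gamma_s$ via $\nu_s$.

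\textbf{Step 3 (three-regime induction).} For $s\le s_0$ the base case uses the trivial bound $\|\Delta_{\tau-1}(S)\|_1\le 2$, coming from $\|\hat{\mathbf{w}}_{\tau-1}\|_1,\|\mathbf{w}^\ast\|_1\le 1$; the dominant terms in~\eqref{eq:COLT2025:high_level_upper_bound} are then the first two on the right, while the self-referential $\|\Delta_s(S)\|_1$ term is absorbed because $s\le s_0$ forces the coefficient $c_4\sqrt{g_{d,k}\ln(d^2/\delta)/s}$ to be strictly below $\delta_S^2/(2k)$. For $s\in(s_0,s_1]$ the inductive hypothesis $\|\Delta_\tau(S)\|_1\lesssim k^2 g_{d,k}/(\delta_S^4\tau)$ exactly matches the form dictated by $\nu_s$; substituting into the square-root summation term $\sqrt{\sum_\tau \|\Delta_{\tau-1}(S)\|_1^2}$ reproduces a $1/s$ decay, and the self-reference on $\|\Delta_s(S)\|_1$ is resolved by moving it to the left-hand side, producing the factor $\mu_1=9/(9-2\sqrt 3)$. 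For $s>s_1$ the geometry of~\eqref{eq:COLT2025:high_level_upper_bound} changes: the $\sqrt{(d-1)/(s(k-1))}$ term becomes dominant, so the inductive hypothesis switches to $\|\Delta_\tau(S)\|_1\lesssim\sqrt{k^2(d-1)/(\tau(k-1))}$; the summation $\sum_\tau 1/\tau$ produces a logarithmic factor which is absorbed into $\ln(d/\delta)$ at the cost of the constant $\mu_2$, again via a left-hand-side absorption of the self-referential term.

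\textbf{Main obstacle.} The principal difficulty is the intertwined self-reference: $\hat\gamma_s$ requires a prior bound on past $\|\Delta_{\tau-1}(S)\|_1$, while the new bound on $\|\Delta_s(S)\|_1$ contains $\|\Delta_s(S)\|_1$ itself on the right-hand side. Handling this cleanly requires (i) choosing the transition thresholds $s_0,s_1$ so that the self-referential coefficients $c_4\sqrt{g_{d,k}\ln(d^2/\delta)/s}$ and $\tfrac{\sqrt 6}{9}\sqrt{(k-2)/((d-2)\ln(d^2/\delta))}$ are small enough to permit the left-hand-side absorption that produces $\mu_1$ and $\mu_2$, (ii) checking that the piecewise definition of $\nu_s$ at the boundaries $s=s_0+1$ and $s=s_1+1$ smoothly upper-bounds the exact $\gamma_s$ under the previous regime's hypothesis, and (iii) the bookkeeping of the constants $a_4,a_5$ so that the hypothesized rate is reproduced exactly, not merely up to an unspecified constant. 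Once these algebraic invariants are set up correctly, the induction closes and delivers the three-piece bound in the statement.
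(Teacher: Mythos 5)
Your proposal follows essentially the same route as the paper: feasibility of $\mathbf{w}^\ast$ for $\mathrm{DS}(\hat\gamma_s)$ via coordinate-wise martingale concentration (the paper's Lemma \ref{lemma:ICML25:optimal_solution_constraint}), the cone condition $\Vert\Delta_s(S^c)\Vert_1\le\Vert\Delta_s(S)\Vert_1$ feeding into the compatibility condition to get the master inequality \eqref{eq:ICML2025:generaly_bound_on_estimator_error}, and a three-regime induction whose left-hand-side absorptions produce $\mu_1$ and $\mu_2$. The decomposition, the role of $\nu_s$ at the boundaries $s_0+1$ and $s_1+1$, and the identification of the coupled self-reference between $\hat\gamma_s$ and $\Vert\Delta_s(S)\Vert_1$ all match the paper's argument.

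One detail in your Case 1 is backwards and, as stated, would fail. You claim that for $s\le s_0$ the self-referential term is absorbed because the coefficient $c_4\sqrt{g_{d,k}\ln(d^2/\delta)/s}$ is below $\delta_S^2/(2k)$; but this coefficient \emph{decreases} in $s$, so it is small only for $s>s_0$ (this is exactly the role of \eqref{eq:ICML2025:inequality_s_0}), and for small $s$ (e.g.\ $s=1$) it is far too large for absorption. The paper instead handles the regime $s\le s_0$ by bounding the self-referential term with the trivial estimate $\Vert\Delta_s\Vert_1\le 2\Vert\Delta_s(S)\Vert_1\le 4$ on the right-hand side, which is why the first branch of the lemma carries the $\frac{22k}{\delta_S^2}\sqrt{g_{d,k}\ln(d/\delta)/s}$ term rather than a $\mu_1$-type constant. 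The fix is immediate with the trivial bound you already invoke for the summation term, so the overall induction still closes, but the absorption mechanism belongs to Cases 2 and 3 only.
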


    By Lemma \ref{lemma:estimator_error:DS-OSLRC},
    it is easy to establish the $\ell_1$-norm error bound of $\hat{\mathbf{w}}_s$.
    To be specific,
    by Lemma \ref{lemma:ICML25:Dantzig2005},
    we have $\Vert \Delta_s\Vert_1\leq 2\Vert \Delta_s(S)\Vert_1$.
    We can also obtain the $\ell_2$-norm error bound
    by the inequality $\Vert \Delta_s\Vert_2\leq \Vert\Delta_s\Vert_1$.
    It is worth mentioning that by the restricted eigenvalues condition \citep{Bickel2009Simultaneous},
    we can obtain a tighter $\ell_2$-norm error bound.

    The Algorithm 3 in \citep{Ito2017Efficient} attains $\Vert\Delta_s\Vert_1=O(s^{-\frac{1}{4}})$
    (please refer to Lemma 14 in original paper),
    while our convergence rate is $\tilde{O}(s^{-\frac{1}{2}})$.
    The algorithm in \citep{Kale2017Adaptive} attains
    $
        \Vert\Delta_s\Vert_1 =O(\frac{d}{k_0}\sqrt{\frac{d}{k_0}\frac{k^2}{s}\ln\frac{d}{\delta}}).
    $
    For $s> s_1$,
    DS-OSLRC improves the convergence rate by a factor of $O(d)$.
    In the full information setting where algorithms can observe $\mathbf{x}_s$ for all $s=1,\ldots,T$,
    the OLin-LASSO algorithm \citep{Yang2023Online} attains $\Vert\Delta_s\Vert_1 =\tilde{O}(\sqrt{k/s})$.
    Our convergence rate only deteriorates by a factor of $O(\sqrt{d})$.

\subsection{Regret Bounds of DS-OSLRC}

    \begin{theorem}[Regret Bound w.r.t. $\mathbf{w}^\ast$]
    \label{thm:ICML2025:regret_bound_DS-OSLRC}
        Let $\varepsilon=k$, $\delta\in(0,1)$ and
        $$
            Y_{\delta}=1+\sigma\sqrt{2\ln\frac{1}{\delta}},\quad
            \rho=\frac{1}{2(1+Y_{\delta})^2},\quad
            s_2=4\frac{(\mu_2a_5)^2}{\delta^4_S}\frac{k^2(d-1)}{\min_{i\in S}\vert w^\ast_i\vert^2(k-1)}.
        $$
        Under the same assumptions in Lemma \ref{lemma:estimator_error:DS-OSLRC}
        and the condition
        $$
            T>(s_2+1)^2>(s_1+1)^2,
        $$
        with probability at least $1-(T+\sqrt{T}(6+\log_{1.5}\frac{d+k}{k-2})+1)\delta$,
        the regret of DS-OSLRC satisfies
        \begin{align*}
            \mathrm{Reg}(\mathbf{w}^\ast)
            \leq&4\sqrt{T}
            +\frac{2(\mu_2a_5)^2}{\delta^4_S}\cdot\frac{k^3(d-1)\ln\left(4(1+Y_{\delta})^2T+1\right)}
            {(k-1)\min_{i\in S}\vert w^\ast_i\vert^2}+\\
            &\frac{22a^2_4}{\delta^8_S}k^4g^2_{d,k}\ln\frac{(d-2)\ln\frac{d}{\delta}}{k-2}
            +\frac{2(2\mu_2a_5)^4}{\delta^8_S}\cdot
            \frac{k^4(d-1)^2}{\min_{i\in S}\vert w^\ast_i\vert^2(k-1)^2}
            +O(1),
        \end{align*}
        where $\mu_2$ follows \eqref{eq:COLT2025:s0-s1_DS-OSLRC},
        $a_4$ and $a_5$ follow \eqref{eq:COLT2025:a1-a5_DS-OSLRC}
        and $O(1)$ hides the lower order constant terms.
    \end{theorem}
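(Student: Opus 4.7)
The plan is to decompose the total regret over rounds $t\in[T]$ into the exploration rounds $\mathcal{T}$ and the exploitation epochs $\mathcal{T}_s$, and then to split the latter according to the three regimes $s\in[1,s_0]$, $s\in(s_0,s_1]$, $s\in(s_1,s_2]$ and $s>s_2$ that naturally arise from Lemma \ref{lemma:estimator_error:DS-OSLRC} and from the support-recovery threshold $s_2$. Since $|\mathcal{T}|=\lfloor\sqrt{T}\rfloor$ and each per-round loss is bounded by a constant depending on $Y_\delta$ with high probability (via the Gaussian tail bound on $y_t$ following Assumption \ref{ass:ICML2025:Realizable assumption}), the exploration rounds contribute at most $4\sqrt{T}$ up to constants, yielding the first explicit term in the bound.

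For a fixed epoch $\mathcal{T}_s$ the regret against $\mathbf{w}^\ast$ is decomposed as
\[
\sum_{t\in\mathcal{T}_s}\!\bigl[\ell(\hat y_t,y_t)-\ell(\langle\mathbf{w}_s,\mathbf{x}_t\rangle,y_t)\bigr]
+\sum_{t\in\mathcal{T}_s}\!\bigl[\ell(\langle\mathbf{w}_s,\mathbf{x}_t\rangle,y_t)-\ell(\langle\mathbf{w}^\ast,\mathbf{x}_t\rangle,y_t)\bigr]
=\mathrm{R}_1+\mathrm{R}_2,
\]
with the non-obvious choice $\mathbf{w}_s=\mathbf{w}^\ast(S_s\cap S)\in\mathcal{W}_s$ (which satisfies $\|\mathbf{w}_s-\mathbf{w}^\ast\|_1\le\|\hat{\mathbf{w}}_s(S_s)-\mathbf{w}^\ast\|_1$ by elementary coordinate-wise reasoning). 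To bound $\mathrm{R}_1$, I will invoke the standard ONS analysis for $\rho$-exp-concave losses (the squared loss is $\rho=\tfrac{1}{2(1+Y_\delta)^2}$-exp-concave on $\mathcal{W}_s$), noting that within a single epoch ONS runs on $k$-dimensional iterates so the regret against any $\mathbf{w}_s\in\mathcal{W}_s$ is $O\!\bigl(\tfrac{k}{\rho}\log(1+\rho\sum_t\|\mathbf{g}_t\|^2)\bigr)$. The careful initialization---inheriting $\mathbf{A}_{s^2}$ and $\bar{\mathbf{w}}_{s^2+1}$ whenever $S_s=S_{s-1}$---ensures that epochs with the same support are effectively stitched into a single ONS run, so the cumulative $\mathrm{R}_1$ across all $s>s_2$ (where $S_s=S$, see below) collapses to a single $O(\tfrac{k}{\rho}\log T)$ term, producing the second explicit term after plugging in $\rho$ and $\varepsilon=k$.

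To bound $\mathrm{R}_2$ I will use $\mathrm{R}_2\le 2(1+Y_\delta)\,|\mathcal{T}_s|\cdot\|\mathbf{w}_s-\mathbf{w}^\ast\|_1$ together with Lemma \ref{lemma:estimator_error:DS-OSLRC}, noting $|\mathcal{T}_s|\le 2s+1$. Splitting the sum $\sum_{s=1}^{\lfloor\sqrt T\rfloor}(2s+1)\|\Delta_s(S)\|_1$ across the three regimes and telescoping gives three contributions: the $s\le s_0$ part is absorbed into the $O(1)$ remainder; the $s\in(s_0,s_1]$ part uses the $\tfrac{k^2g_{d,k}}{s}$ rate and telescopes into $\log s_1/s_0\asymp\log\!\bigl(\tfrac{(d-2)\ln(d/\delta)}{k-2}\bigr)$, producing the $a_4^2$-term; the $s\in(s_1,s_2]$ part uses the $\sqrt{k^2(d-1)/(s(k-1))}$ rate and telescopes into $\sqrt{s_2}$, producing the $a_5^4$-term after substituting the definition of $s_2$ from the theorem. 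Crucially, for $s>s_2$ the $\ell_1$-bound from Lemma \ref{lemma:estimator_error:DS-OSLRC} combined with the definition of $s_2$ forces $\|\hat{\mathbf{w}}_s-\mathbf{w}^\ast\|_\infty<\tfrac12\min_{i\in S}|w^\ast_i|$, which guarantees $S_s=S$, hence $\mathbf{w}_s=\mathbf{w}^\ast$ and $\mathrm{R}_2=0$ on that regime; only $\mathrm{R}_1$ survives there.

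The main obstacle is the support-recovery argument coupled with the epoch-stitching for ONS: I must justify that once $s>s_2$ the set $S_s$ is \emph{frozen} at $S$ (so the ONS covariance and iterate carry forward uninterrupted), and simultaneously confirm that in the earlier regimes the resets of $\mathbf{A}_{s^2}=\varepsilon\mathbf{I}$ and $\bar{\mathbf{w}}_{s^2+1}=\hat{\mathbf{w}}_s(S_s)$ do not accumulate more than a constant overhead per epoch---this is where using $\mathbf{w}_s=\mathbf{w}^\ast(S_s\cap S)$ in $\mathrm{R}_2$ is essential, since it avoids paying for the ONS initialization error in $\mathrm{R}_1$. A secondary obstacle is the union bound: Lemma \ref{lemma:estimator_error:DS-OSLRC} already pays $\sqrt{T}(6+\log_{1.5}\tfrac{d+k}{k-2})\delta$; I will add a $T\delta$ contribution for the per-round Gaussian tail events on $\{|y_t|\le Y_\delta\}_{t\in[T]}$ and a single $\delta$ for any one-off concentration needed in the ONS analysis, giving the stated failure probability. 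Collecting the bounds produces the displayed inequality with all lower-order contributions swallowed into $O(1)$.
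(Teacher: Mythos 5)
Your high-level architecture matches the paper's (exploration/exploitation split, the $\mathrm{R}_1+\mathrm{R}_2$ decomposition with the competitor $\mathbf{w}_s=\mathbf{w}^\ast(S_s\cap S)$, ONS with stitched initializations for $\mathrm{R}_1$, the estimation-error lemma for $\mathrm{R}_2$, and support recovery at $s_2$ to freeze $S_s=S$). However, there is a genuine gap in how you bound $\mathrm{R}_2$. You propose the Lipschitz-type bound $\mathrm{R}_2\le 2(1+Y_\delta)\,|\mathcal{T}_s|\,\Vert\mathbf{w}_s-\mathbf{w}^\ast\Vert_1$, which is \emph{linear} in the $\ell_1$ error. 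The paper instead uses the exact identity available under realizability,
$$\ell(\langle\mathbf{w}_s,\mathbf{x}_t\rangle,y_t)-\ell(\langle\mathbf{w}^\ast,\mathbf{x}_t\rangle,y_t)=\langle\mathbf{w}_s-\mathbf{w}^\ast,\mathbf{x}_t\rangle^2-2\eta_t\langle\mathbf{w}_s-\mathbf{w}^\ast,\mathbf{x}_t\rangle,$$
so the systematic part is \emph{quadratic} in the error and the linear part is zero-mean noise controlled afterwards by Gaussian concentration (contributing only $2\sqrt{2\,\mathrm{Reg}_3\ln(1/\delta)}$). This distinction is not cosmetic: with the quadratic accounting, the regime $s\in(s_0,s_1]$ contributes $\sum_s s\cdot(k^2g_{d,k}/s)^2\asymp k^4g_{d,k}^2\ln(s_1/s_0)$ (the stated $a_4^2$ term) and the regime $s\in(s_1,s_2]$ contributes $\sum_s s\cdot\Theta(1/s)\cdot\mathrm{const}=\Theta(s_2)$ times a constant, which is exactly the stated $a_5^4$ term. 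With your linear bound the first of these sums is $\Theta(s_1-s_0)$ rather than a logarithm, and the second is $\Theta(s_2^{3/2})$ rather than $\Theta(s_2)$, yielding an extra factor of roughly $k^{3/2}/h(\mathbf{w}^\ast)$ and failing to recover the theorem's $h(\mathbf{w}^\ast)^{-2}$ dependence. Your claims that these sums ``telescope into $\log(s_1/s_0)$'' and ``into $\sqrt{s_2}$'' are each inconsistent with both accountings, so the arithmetic as written does not produce the displayed bound.

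A secondary misattribution: you assign the term $\frac{2(\mu_2a_5)^2}{\delta_S^4}\frac{k^3(d-1)}{(k-1)h(\mathbf{w}^\ast)^2}\ln(4(1+Y_\delta)^2T+1)$ to the single stitched ONS run on $s>s_2$. A single run only pays $\frac{k}{2}\ln(\cdots)$; the stated term equals $\frac{s_2k}{2}\ln(\cdots)$ and arises because the number of breakpoints (ONS restarts, each incurring a fresh $\varepsilon$-initialization cost and a fresh log-det bound) is only controlled by $n\le s_2$ via Lemma \ref{lemma:ICML2025:convergence_S_s_to_S}. Your statement that the resets cost ``a constant overhead per epoch'' undercounts this; the restarts are where the dominant $\ln T$ term actually comes from, and bounding their number by $s_2$ is an essential step you should make explicit.
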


    For the sake of simplicity,
    we only consider the scenario where $T$ is adequately large.
    It is easy to analyze the regret in the cases where $T\leq (s_2+1)^2$, or $s_2<s_1$.
    If $t\in\mathcal{T}$,
    DS-OSLRC solves a linear programming.
    The time complexity is denoted by $O(\mathrm{LP}_d)=O(\mathrm{poly}(d))$.
    Otherwise,
    the per-round time complexity is $O(k^2)$.
    Thus the average per-round time complexity is only $O(k^2+\frac{\mathrm{LP}_d}{\sqrt{T}})$.
    Let $\delta=\Theta(\frac{1}{T})$.
    Then DS-OSLRC achieves a
    $O\left(\sqrt{T}+\frac{k^2d^2}{\delta^8_Sh(\mathbf{w}^\ast)^2}\ln^2\frac{dT}{\delta}\right)$ regret bound
    within a time complexity of $O(\mathrm{poly}(d))$ per-iteration,
    making OSLR tractable.
    Note that if $k\ll d$ and $\sigma\leq 1$,
    then the numerical factor on the dominated term is
    $2(2\mu_2a_5)^4\ln^{-2}\frac{dT}{\delta}\approx 2(2a_2)^4\ln^{-2}\frac{dT}{\delta}
    \approx 102^4$.

    \cite{Ito2017Efficient} proposed two algorithms denoted by \textbf{alg2} and \textbf{alg3}, for OSLR,
    both of which enjoy an expected regret of $O(\sqrt{kT}+\mathrm{poly}(d,k))$ in time $O(d)$ per-iteration.
    Under the assumptions including (i) realizable,
    (ii) $\mathbf{x}_t\sim\mathcal{D}_{\mathbf{x}}$ for all $t\in[T]$,
    (iii) the features in $\mathbf{x}_t$ are linearly independent for all $t\in[T]$,
    and (iv) bounded noises,
    the expected regret of \textbf{alg2} satisfies
    $$
        \mathbb{E}[\mathrm{Reg}(\mathbf{w}^\ast)]
        \leq8\sqrt{kT}+\sum_{i\in S}\frac{8192^2d^4(d-1)^4}{\sigma^8_d\vert w^\ast_i\vert^7k^4(k-1)^4} +O(1)
        =O\left(\sqrt{kT}+\frac{d^8}{\sigma^8_dk^7h(\mathbf{w}^\ast)^7}\right),
    $$
    where $\sigma^2_d$ is the smallest eigenvalue of
    $\mathbb{E}_{\mathbf{x}\sim\mathcal{D}_{\mathbf{x}}}[\mathbf{x}\mathbf{x}^\top]$.
    It must be $\sigma_d \leq \delta_S$.
    The linear independence of features condition is stronger than the compatibility condition.
    What's more, the regret bound is much worse than ours
    w.r.t. the dependence on $d$, $\min_{i\in S}\vert w^\ast_i\vert$ and $T$.

    Under the same assumptions (except for the bounded noises and i.i.d. instances) with our algorithm,
    the expected regret of \textbf{alg3} satisfies
    \begin{align*}
        \mathbb{E}[\mathrm{Reg}(\mathbf{w}^\ast)]
        \leq&8\sqrt{kT}+
        \sum_{i\in S}\frac{128^2\cdot 36^8d^4(d-1)^4}{\delta^8_S\vert w^\ast_i\vert^7(k-1)^4}+O(1)
        =O\left(\sqrt{kT}+\frac{d^8}{\delta^8_Sk^3h(\mathbf{w}^\ast)^7}\right).
    \end{align*}
    The regret bound is also much worse than ours
    w.r.t. the dependence on $d$, $\min_{i\in S}\vert w^\ast_i\vert$ and $T$.
    Besides,
    the constant factor is also much larger than ours.

    \begin{MyRemark}
        We can also analyze the regret w.r.t. the best $k$-sparse linear regressor.
        To be specific,
        by the regret analysis in \citep{Kale2017Adaptive},
        we obtain,
        with probability at least $1-\delta$,
        \begin{equation}
        \label{eq:ICML2025:regret_w_ast_any_w}
            \max_{\mathbf{w}\in\{\mathbf{v}\in\mathbb{R}^d:\Vert\mathbf{v}\Vert_0\leq k\}}
            \sum^T_{t=1}[\ell(\langle \mathbf{w}^\ast,\mathbf{x}_t\rangle,y_t)
            -\ell(\langle \mathbf{w},\mathbf{x}_t\rangle,y_t)]
            \leq2k\sigma^2+ 4k\sigma^2\ln\frac{d}{\delta}.
        \end{equation}
        By incorporating the upper bound on $\mathrm{Reg}(\mathbf{w}^{\ast})$,
        i.e., Theorem \ref{thm:ICML2025:regret_bound_DS-OSLRC},
        we can obtain the regret bound w.r.t. the best $k$-sparse linear regressor.
    \end{MyRemark}

\subsection{Discussion of Lower Bounds and Upper Bounds}

    Next we compare our upper bounds with lower bounds.
    The results are summarized in Table \ref{tab:COLT25:Response}.

    The lower bound on estimator error is for sparse linear regression \citep{Candes2013How}.
    OSLR is an online, partial information variant of sparse linear regression.
    Therefore, this lower bound is applicable to OSLR.
    To be specific,
    any algorithm for OSLR can return an estimator of $\mathbf{w}^\ast$,
    denoted by $\hat{\mathbf{w}}_T$,
    which can serve as a solution of sparse linear regression.
    Note that the original lower bound established in \citep{Candes2013How}
    applies to any design matrix $\mathbf{X}$.
    We adapt the general bound to Gaussian designs whose entries are i.i.d. $\mathcal{N}(0,1)$.
    As Gaussian designs satisfy the compatibility condition,
    the lower bound remains applicable to OSLR under the assumptions in our paper.
    There is a gap of $O\left(\sqrt{d}\right)$ between the lower and upper bounds.
    We conjecture that the lower bound can be refined,
    given the partial information constraint inherent to OSLR.
    Under the condition that all of the attributes per instance can be observed,
    the OLin-LASSO algorithm \citep{Yang2023Online} indeed attains the lower bound.

    The lower bound on the regret remains unestablished.
    We can easily obtain a lower bound on regret by the lower bound on estimation error.
    Let $\mathbf{X}$ be a Gaussian design.
    Consider any algorithm that maintains an estimator $\hat{\mathbf{w}}_s$,
    and produces a prediction using $\hat{\mathbf{w}}_s(S_s)$
    where $S_s\subseteq[d]$ and $\vert S_s\vert\leq k$ at each round $s\geq 1$.
    The expected regret satisfies
    \begin{align*}
    \mathbb{E}[\mathrm{Reg}(\mathbf{w}^\ast)]
    =\sum^T_{s=1}\mathbb{E}[\langle \hat{\mathbf{w}}_s(S_s)-\mathbf{w}^{\ast},\mathbf{x}_s\rangle^2]
    =\Omega\left(\sum^T_{s=1}\frac{1}{s}k\ln\frac{d}{k}\right)
    =&\Omega\left(k\ln(T+1)\ln\frac{d}{k}\right),
    \end{align*}
    where the expectation is w.r.t. the noises and $\mathbf{X}$.

    \begin{table}[!t]
      \centering
      \begin{tabular}{l|r|r}
      \toprule
        lower bound on estimation error & upper bound for OSLR  & upper bound for $(k,k_0,d)$-OSLR  \\
      \midrule
        $\Omega\left(\sqrt{\frac{k}{T}\log{\frac{d}{k}}}\right)$
        & \multirow{2}{*}{$O\left(\sqrt{\frac{kd}{T}\log{\frac{dT}{\delta}}}\right)$}
        & \multirow{2}{*}{$O\left(\sqrt{\frac{k^2d}{Tk_0}\log{\frac{dT}{\delta}}}\right)$}\\
        \citep{Candes2013How}&&\\
      \midrule
        lower bound on regret& upper bound for OSLR & upper bound for $(k,k_0,d)$-OSLR  \\
      \midrule
        $\Omega\left(k\ln(T)\ln\frac{d}{k}\right)$
        & $O\left(\sqrt{T}+k^2d^2\ln^2\frac{dT}{\delta}\right)$
        & $O\left(\frac{k^2d}{k_0}(\frac{d}{k_0}+\ln{T})\ln\frac{dT}{\delta}\right)$\\
      \bottomrule
      \end{tabular}
      \caption{Lower bounds and upper bounds on estimation error and regret. The estimation error is
      $\min_{\hat{\mathbf{w}}}\max_{\mathbf{w}^\ast}\mathbb{E}[\Vert\hat{\mathbf{w}}-\mathbf{w}^\ast\Vert_2]$.}
      \label{tab:COLT25:Response}
    \end{table}

\section{Conclusion and Future Work}

    In this paper,
    we have proposed a new polynomial-time algorithm for OSLR
    that significantly improves previous regret bounds
    in terms of both problem-dependent parameters and constant factors under some mild assumptions.
    Notably,
    we assume the data matrix satisfies the compatibility condition that is less restrictive than
    the linear independence of features condition and RIP utilized in prior work.
    We further extend the algorithm to $(k,k_0,d)$-OSLR
    and improve previous regret bounds.

    Our work opens several directions for future research.
    The first one is to develop more efficient algorithms for OSLR
    that can avoid solving a linear programming
    while maintaining the regret bound under the same assumptions.
    The second one is to establish more efficient algorithms for $(k,k_0,d)$-OSLR.
    Our algorithm requires solving a linear programming at each round
    (please refer to Appendix \ref{sec:COLT2025:extension}).
    The third one is to establish tight lower bounds on both estimation error and regret.

\acks{This work is supported by the National Natural Science Foundation of China
under grants No. 62236003 and 62076181.
We also appreciate all the anonymous reviewers for their valuable comments and constructive suggestions.
}


\bibliography{OSLR}

\newpage

\appendix


\section{Extension to OSLR with Additional Observations}
\label{sec:COLT2025:extension}

    In this section,
    we extend DS-OSLRC to $(k,k_0,d)$-OSLR \citep{Foster2016Online},
    which is also called proper online sparse linear regression (POSLR) \citep{Kale2017Adaptive}.
    At each round $s\geq 1$,
    the learner chooses $k$ attributes from $\mathbf{x}_s$ and makes a prediction $\hat{y}_s$.
    Then the adversary gives the true output $y_s$.
    After that the learner can observe another $k_0>1$ attributes.
    In this work,
    we consider the case $k_0\geq 3$ and $k_0=O(k\ln{d})$.
    We will propose a new algorithm for $(k,k_0,d)$-OSLR,
    which can improve the previous regret bounds under weaker assumptions.

    Let $\hat{\mathbf{w}}_0=\frac{1}{d}\mathbf{1}_d$,
    and $S_0\subseteq [d]$ be an arbitrary subset satisfying $\vert S_0\vert=k$.
    For each $s\geq 1$,
    let $S_s\subseteq [d]$ follow the definition in DS-OSLRC.
    At the beginning of the $s$-th round,
    we choose $\mathbf{x}_s(S_{s-1})$
    and output $\hat{y}_s=\langle \hat{\mathbf{w}}_{s-1}(S_{s-1}),\mathbf{x}_s(S_{s-1})\rangle$.
    Given $y_s$,
    we choose $k_0$ attributes from $\{x_{s,i}:i\notin S_{s-1}\}$.
    Let $d'=d-k$.
    We construct $\hat{\mathbf{w}}'_{s-1}\in\mathbb{R}^{d'}$
    by removing the elements $\hat{w}_{s,i}$ from $\hat{\mathbf{w}}_{s-1}$ for all $i\in S_{s-1}$.
    Then we define
    $\bar{\mathbf{q}}_s=\frac{1}{\Vert \hat{\mathbf{w}}'_{s-1}\Vert_1}(\vert\hat{w}'_{s-1,1}\vert,\ldots,\vert\hat{w}'_{s-1,d'}\vert)$,
    and send $(k_0,d',\hat{\mathbf{w}}'_{s-1})$ into Algorithm \ref{alg:ICML2025:SAMPLING}
    that returns a set $B'_s\subseteq[d]$.
    Let $B_s=B'_s\cup S_{s-1}$.
    We can construct $\mathbf{H}_{\mathcal{I}_s}$ and
    $\hat{\mathbf{X}}_{\mathcal{I}_s}\mathbf{Y}_{\mathcal{I}_s}$ following DS-OSLRC,
    in which $\mathcal{I}_s=\{1,2,\ldots,s\}$.
    Let $g_{d',k_0}=\frac{(d'-1)(d'-2)}{(k_0-1)(k_0-2)}$,
    and $a_1,a_2,a_3$ follow the definition in \eqref{eq:COLT2025:a1-a5_DS-OSLRC}.
    Let
    \begin{equation}
    \label{eq:COLT2025:a1-a5_DS-POSLRC}
    \left\{
    \begin{split}
        \mu_1=&\frac{9}{9-2\sqrt{3}},\quad
        \mu_2=\frac{1}{1-\frac{\sqrt{6}}{9\sqrt{\frac{d'-2}{k_0-2}\ln\frac{d^2}{\delta}}}},\\
        s_0=&\frac{24^2 \cdot k^2g_{d',k_0}}{\delta^4_S}\ln\frac{d^2}{\delta},\quad
        s_1=\frac{24^2 \cdot k^2g_{d',k_0}}{\delta^4_S}\frac{d'-2}{k_0-2}
        \ln\left(\frac{d}{\delta}\right)\ln\frac{d^2}{\delta},\\
        a_4=&\delta^2_S\frac{a_1}{k}
            +24a_2\sqrt{\frac{k_0-2}{d'-2}\ln\frac{d^2}{\delta}}
            +4a_3\left(24\sqrt{\ln\frac{d^2}{\delta}}+\frac{\delta^2_S}{k\sqrt{g_{d',k_0}}}\right),\\
        a_5=&\frac{36}{9-2\sqrt{3}}\left(\delta^2_S\frac{2+\sigma}{9k}
            +\frac{8}{\sqrt{3}}
            +\frac{\frac{\sqrt{3}}{9}\delta^2_S}{k\sqrt{g_{d',k_0}\ln{\frac{d^2}{\delta}}}}\right)
            +a_2+\frac{2\sqrt{3}a_2}{9-2\sqrt{3}}\sqrt{\frac{k_0-2}{(d'-2)\ln\frac{d^2}{\delta}}}.
    \end{split}
    \right.
    \end{equation}
    Similar to DS-OSLRC,
    we define $\hat{\gamma}_s$ as follows,
    $$
        \hat{\gamma}_s=\left(\frac{8}{3}+2\sigma\right)\frac{g_{d',k_0}}{s}\ln\frac{d}{\delta}
        +\left(6.9+1.2\sigma\right)\sqrt{\frac{d'-1}{s(k_0-1)}\ln\frac{d}{\delta}}+\nu_s,
    $$
    in which
    $$
        \nu_s=
        \left\{
        \begin{array}{ll}
        \frac{2}{\sqrt{s}}\sqrt{3 g_{d',k_0}\ln\frac{d}{\delta}},&s\leq [1,s_0],\\
        \frac{1}{s}\sqrt{3g_{d',k_0}\ln\frac{d}{\delta}}
        \left(\frac{48k}{\delta^2_S}\sqrt{g_{d',k_0}\ln\frac{d^2}{\delta}}+2\right),&s=s_0+1,\\
        \frac{s_0+1}{s}\nu_{s_0+1}+\frac{1}{s}\sqrt{3g_{d',k_0}\ln\frac{d}{\delta}}\cdot
        \frac{\mu_1 a_4}{\delta^4_S}\sqrt{\sum^{s-1}_{\tau=s_0+1}\frac{k^4g^2_{d',k_0}}{\tau^2}},&s\in(s_0+1,s_1],\\
        \frac{1}{s}\sqrt{3g_{d',k_0}\ln\frac{d}{\delta}}\left(\frac{48k}{\delta^2_S}\sqrt{g_{d',k_0}\ln\frac{d^2}{\delta}}+
        2+\frac{\mu_1a_4k^2g_{d',k_0}}{\delta^4_S}\sqrt{\sum^{s_1}_{\tau=s_0+1}\frac{1}{\tau^2}}\right),&s=s_1+1,\\
        \frac{s_1+1}{s}\nu_{s_1+1}+
        \frac{1}{s}\sqrt{3g_{d',k_0}\ln\frac{d}{\delta}}
        \cdot\frac{\mu_2a_5}{\delta^2_S}\sqrt{\sum^{s-1}_{\tau=s_1+1}\frac{k^2(d'-1)}{\tau(k_0-1)}},&s>s_1+1.
        \end{array}
        \right.
    $$
    Now we can solve $\mathrm{DS}(\hat{\gamma}_s)$ and obtain the solution $\hat{\mathbf{w}}_s$.
    Different from DS-OSLRC,
    we do not use ONS to update parameters.
    There are two reasons.
    (i) We use $\hat{\mathbf{w}}_{s-1}(S_{s-1})$ to make a prediction at each round $s$,
    ensuring a tight regret bound.
    (ii) Although ONS can further improve the regret bound by a factor of $O(\ln{T})$,
    it also introduces additional terms and makes the algorithm more complicate.
    We name this algorithm DS-POSLRC (Dantzig Selector for POSLR with Compatibility condition)
    and show the pseudo-code in Algorithm \ref{alg:COLT2025:DS-POSLRC}.

    \begin{algorithm2e}[!t]
        \caption{\small{DS-POSLRC}}
        \LinesNumbered
        \footnotesize
        \label{alg:COLT2025:DS-POSLRC}
        \KwIn{$k$, $k_0$, $d$, $\delta_S$, $\delta$}
        Initialize $\hat{\mathbf{w}}_0=\frac{1}{d}\mathbf{1}_d$,
        $\hat{\mathbf{x}}_{\mathcal{I}_0}=\mathbf{0}_d$,
        $\mathbf{H}_{\mathcal{I}_0}=\mathbf{0}_{d\times d}$, $S_0$\;
        \For{$s=1,2,\ldots,T$}
        {
            Output the prediction
            $\hat{y}_s=\langle \hat{\mathbf{w}}_{s-1}(S_{s-1}),\mathbf{x}_s(S_{s-1})\rangle$\;
            Obtain $B'_s\subseteq [d]$ from SAMPLING($k_0,d',\hat{\mathbf{w}}'_{s-1}$)\;
            Let $B_s=B'_s\cup S_{s-1}$\;
            Compute $\hat{\mathbf{X}}_{\mathcal{I}_s}\mathbf{Y}_{\mathcal{I}_s}=
                \hat{\mathbf{X}}_{\mathcal{I}_{s-1}}\mathbf{Y}_{\mathcal{I}_{s-1}}+
                \hat{\mathbf{x}}_sy_s$\;
            Compute $\mathbf{H}_{\mathcal{I}_s}=\mathbf{H}_{\mathcal{I}_{s-1}}+\mathbf{h}_s$\;
            Compute $\hat{\gamma}_s$\;
            Output the solution of $\mathrm{DS}(\hat{\gamma}_s)$, denoted by $\hat{\mathbf{w}}_s$\;
            Select $S_s\subseteq [d]$\;
        }
    \end{algorithm2e}

    \begin{Mylemma}[Estimation Error]
    \label{lemma:estimator_error:DS-POSLRC}
        Let $S=\mathrm{Supp}(\mathbf{w}^\ast)$,
        $\hat{\mathbf{w}}_0=\frac{1}{d}\mathbf{1}_d$ and $3\leq k_0=O(k\ln{d})$.
        If $T>(s_1+1)^2$,
        Assumptions
        \ref{ass:ICML2025:Realizable assumption}-\ref{ass:COLT2025:sparsity} hold,
        and $\mathbf{X}_{\mathcal{I}_s}$ satisfies the $(\delta_S,S,1)$-compatibility condition
        for all $s\geq 1$,
        then with probability at least $1-T(6+\log_{1.5}\frac{d'+k}{k_0-2})\delta$,
        DS-POSLRC guarantees, $\forall s\geq 1$,
        $$
            \Vert \Delta_s(S)\Vert_1 \leq
            \left\{
            \begin{array}{ll}
            \frac{16+8\sigma}{\delta^2_S}\frac{kg_{d',k_0}}{s}\ln\frac{d}{\delta}
            +\frac{\left(26+4.8\sigma\right)k}{\delta^2_S\sqrt{s}}\sqrt{\frac{d'-1}{k_0-1}\ln\frac{d}{\delta}}+
            \frac{22k}{\delta^2_S}\sqrt{\frac{g_{d',k_0}}{s}\ln\frac{d}{\delta}},&s\leq [1,s_0]\\
            \frac{9}{9-2\sqrt{3}}\frac{a_4}{\delta^4_S}\frac{k^2g_{d',k_0}}{s},&s\in(s_0,s_1]\\
            \mu_2\frac{a_5}{\delta^2_S}\sqrt{\frac{k^2(d'-1)}{s(k_0-1)}},&s>s_1,\\
            \end{array}
            \right.
        $$
        in which $s_0$, $s_1$, $a_4$ and $a_5$ follow the definition in
        \eqref{eq:COLT2025:a1-a5_DS-POSLRC}.
    \end{Mylemma}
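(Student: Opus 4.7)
The plan is to mirror the proof of Lemma \ref{lemma:estimator_error:DS-OSLRC} for DS-OSLRC, with the substitution $(d,k)\mapsto(d',k_0)$ in every quantity governed by random sampling, while exploiting the fact that the attributes indexed by $S_{s-1}$ are observed deterministically and therefore contribute zero variance. First I would establish the analogue of Lemma \ref{lemma:ICML25:sampling_probability:two_varianbles} for the set $B'_s$ drawn by Algorithm \ref{alg:ICML2025:SAMPLING} on $(k_0,d',\hat{\mathbf{w}}'_{s-1})$, yielding sampling probabilities for the indices in $[d]\setminus S_{s-1}$ with effective parameters $(d',k_0)$. Combining these with the trivial probability $1$ for $i\in S_{s-1}$, one obtains unbiased estimators $\hat{\mathbf{x}}_s,\mathbf{h}_s$ of $\mathbf{x}_s$ and $\mathbf{x}_s\mathbf{x}_s^\top$ whose magnitudes and variances are controlled by $g_{d',k_0}=\tfrac{(d'-1)(d'-2)}{(k_0-1)(k_0-2)}$.

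Next I would show that, with high probability, $\mathbf{w}^\ast$ is feasible for $\mathrm{DS}(\hat{\gamma}_s)$, i.e., $\|s^{-1}\hat{\mathbf{X}}_{\mathcal{I}_s}\mathbf{Y}_{\mathcal{I}_s}-s^{-1}\mathbf{H}_{\mathcal{I}_s}\mathbf{w}^\ast\|_\infty\le \hat{\gamma}_s$. Decomposing this infinity norm into a Gaussian-noise component (from $\eta_t$) and a sampling-fluctuation component (from $\mathbf{h}_t\mathbf{w}^\ast-\mathbf{x}_t\mathbf{x}_t^\top\mathbf{w}^\ast$), I would apply a Bernstein-type bound for sums of conditionally unbiased, bounded, negatively associated random variables; the first two deterministic summands of $\hat{\gamma}_s$ control the Gaussian and bias-magnitude pieces, while $\nu_s$ dominates the stochastic term $s^{-1}\sqrt{3g_{d',k_0}\sum_{\tau=1}^s\|\Delta_{\tau-1}(S)\|_1^2\ln(d/\delta)}$. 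Combining feasibility of $\mathbf{w}^\ast$ with the minimality $\|\hat{\mathbf{w}}_s\|_1\le\|\mathbf{w}^\ast\|_1$, the $(\delta_S,S,1)$-compatibility condition delivers the inequality
\begin{equation*}
\begin{split}
\frac{\delta^2_S\|\Delta_s(S)\|_1}{k}\le{}& c_1\frac{g_{d',k_0}}{s}\ln\tfrac{d}{\delta}+c_2\sqrt{\tfrac{d'-1}{s(k_0-1)}\ln\tfrac{d}{\delta}}+\nu_s\\
&+\tfrac{c_3}{s}\sqrt{g_{d',k_0}\textstyle\sum_{\tau=1}^s\|\Delta_{\tau-1}(S)\|_1^2\ln\tfrac{d}{\delta}}+c_4\sqrt{\tfrac{g_{d',k_0}}{s}\ln\tfrac{d^2}{\delta}}\|\Delta_s(S)\|_1,
\end{split}
\end{equation*}
the $(d',k_0)$-analogue of \eqref{eq:COLT2025:high_level_upper_bound}.

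Then I would close the recursion by the same three-regime induction used for DS-OSLRC, mirroring the piecewise definition of $\nu_s$. For $s\le s_0$ I would use the crude bound $\|\Delta_{\tau-1}(S)\|_1\le 2$ coming from $\|\mathbf{w}^\ast\|_1,\|\hat{\mathbf{w}}_s\|_1\le 1$, which matches the first branch of $\nu_s$. For $s\in(s_0,s_1]$ I would substitute the first-regime bound into the accumulated sum and absorb the residual $c_4\sqrt{g_{d',k_0}\ln(d^2/\delta)/s}\|\Delta_s(S)\|_1$ term using the factor $\mu_1=9/(9-2\sqrt{3})$, giving the $O(k^2g_{d',k_0}/s)$ rate. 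For $s>s_1$ I would substitute the second-regime bound, and the $\mu_2$ factor would absorb the self-referential term, yielding the dominant $\tilde O(\sqrt{k^2(d'-1)/(s(k_0-1))})$ rate. At each regime the inductive hypothesis on $\|\Delta_{\tau-1}(S)\|_1$ is precisely strong enough to certify $\hat{\gamma}_s\ge\gamma_s$, so the Dantzig feasibility step remains valid throughout.

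The main obstacle will be the concentration step. Because $B_s=B'_s\cup S_{s-1}$ depends on the past through both $S_{s-1}$ and $\hat{\mathbf{w}}'_{s-1}$ and the sampling inside $B'_s$ is without replacement, the summands $\mathbf{h}_{\tau}\mathbf{w}^\ast-\mathbf{x}_{\tau}\mathbf{x}_{\tau}^\top\mathbf{w}^\ast$ across $\tau\in\mathcal{I}_s$ are neither independent nor identically distributed. I would handle this by conditioning on the filtration $\mathcal{F}_{s-1}$: given $\mathcal{F}_{s-1}$, the probabilities $\mathbb{P}[i\in B_s\mid\mathcal{F}_{s-1}]$ and $\mathbb{P}[i,j\in B_s\mid\mathcal{F}_{s-1}]$ are known quantities of $\hat{\mathbf{w}}'_{s-1}$ and $S_{s-1}$, so $\mathbf{h}_s$ is unbiased and the $\ell_\infty$ deviation forms a martingale to which a Freedman/Bernstein-type inequality applies. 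The crucial structural observation, which is what yields the improved $(d',k_0)$ rates over $(d,k)$, is that the variance of the $i$-th coordinate is zero whenever $i\in S_{s-1}$, so the effective ambient dimension of the fluctuation is $d'$ rather than $d$, and the effective sampling width is $k_0$ rather than $k$.
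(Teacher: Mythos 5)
Your proposal is correct and follows essentially the same route as the paper: the paper's own proof simply observes that Corollaries \ref{coro:data-dependent_second_moment} and \ref{coro:data-independent_second_moment} remain valid with $g_{d,k}$ replaced by $g_{d',k_0}$ (precisely because the coordinates in $S_{s-1}$ are observed deterministically and only the $k_0$-out-of-$d'$ sampling contributes variance), and then reuses the DS-OSLRC argument verbatim — feasibility of $\mathbf{w}^\ast$ via the martingale Bernstein bounds, the compatibility condition, and the three-regime induction with $\mu_1,\mu_2$. The only cosmetic difference is that you invoke negative association before settling on the filtration/martingale argument, which is the one the paper actually uses.
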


    \begin{proof}[of Lemma \ref{lemma:estimator_error:DS-POSLRC}]
        It can be easily confirmed that Corollary \ref{coro:data-dependent_second_moment}
        and Corollary \ref{coro:data-independent_second_moment}
        are valid with the parameter $g_{d',k_0}$.
        The proof of Lemma \ref{lemma:estimator_error:DS-POSLRC}
        is same with that of Lemma \ref{lemma:estimator_error:DS-OSLRC}.
    \end{proof}

    The algorithm in \citep{Kale2017Adaptive} attains
    $\Vert\Delta_s\Vert_1 =O\left(\frac{d}{k_0}\sqrt{\frac{d}{k_0}\frac{k^2}{s}\ln\frac{d}{\delta}}\right)$.
    For $s\geq s_1$,
    we improve the convergence rate by a factor of $O(\frac{d}{k_0})$.
    Next we give the regret bound of DS-POSLRC.

    \begin{theorem}[Regret Bound w.r.t. $\mathbf{w}^\ast$]
    \label{thm:ICML2025:regret_bound_DS-POSLRC}
        Let $\varepsilon=k$ and $\delta\in(0,1)$.
        Under the same assumptions in Lemma \ref{lemma:estimator_error:DS-POSLRC},
        with probability at least $1-(T(6+\log_{1.5}\frac{d+k}{k-2})+1)\delta$,
        the regret of DS-POSLRC satisfies
        $$
            \mathrm{Reg}(\mathbf{w}^\ast)\leq
            \frac{48^2k^2g'_{d',k_0}}{\delta^4_S}\ln\frac{d^2}{\delta}
            +\frac{2.7a^2_4k^2g_{d',k_0}}{64\delta^4_S\ln\frac{d^2}{\delta}}
            +9\mu^2_2a^2_5\frac{k^2(d'-1)}{\delta^4_S(k_0-1)}\ln\frac{T}{s_1+1}
            +O(1),
        $$
        in which $s_1$, $a_4$ and $a_5$ follow \eqref{eq:COLT2025:a1-a5_DS-POSLRC}
        and $O(1)$ hides the lower order constant terms.
    \end{theorem}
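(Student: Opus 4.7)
The plan is to turn the regret into a sum of squared $\ell_1$-prediction errors via the realizability identity, and then to invoke Lemma~\ref{lemma:estimator_error:DS-POSLRC} piecewise over its three regimes. Writing $y_s=\langle\mathbf{w}^\ast,\mathbf{x}_s\rangle+\eta_s$ and expanding the square loss gives, for every $s\geq 1$,
\[
(\hat y_s-y_s)^2-(\langle\mathbf{w}^\ast,\mathbf{x}_s\rangle-y_s)^2
=\langle\hat{\mathbf{w}}_{s-1}(S_{s-1})-\mathbf{w}^\ast,\mathbf{x}_s\rangle^2
-2\eta_s\langle\hat{\mathbf{w}}_{s-1}(S_{s-1})-\mathbf{w}^\ast,\mathbf{x}_s\rangle,
\]
so the regret splits into a deterministic squared-error part (bounded by $\Vert\mathbf{x}_s\Vert_\infty^2\leq 1$ times squared $\ell_1$ errors) plus a martingale difference sum.

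First I would bridge $\Vert\hat{\mathbf{w}}_{s-1}(S_{s-1})-\mathbf{w}^\ast\Vert_1$ and the quantity $\Vert\Delta_{s-1}(S)\Vert_1$ that Lemma~\ref{lemma:estimator_error:DS-POSLRC} controls. Decomposing $\hat{\mathbf{w}}_{s-1}(S_{s-1})-\mathbf{w}^\ast=(\hat{\mathbf{w}}_{s-1}-\mathbf{w}^\ast)-\hat{\mathbf{w}}_{s-1}(S_{s-1}^c)$, using Lemma~\ref{lemma:ICML25:Dantzig2005} to get $\Vert\hat{\mathbf{w}}_{s-1}-\mathbf{w}^\ast\Vert_1\leq 2\Vert\Delta_{s-1}(S)\Vert_1$, and using the top-$k$ definition of $S_{s-1}$ together with the Dantzig cone property (which forces $\Vert\hat{\mathbf{w}}_{s-1}(S_{s-1}^c)\Vert_1\leq\Vert\hat{\mathbf{w}}_{s-1}(S^c)\Vert_1\leq\Vert\Delta_{s-1}(S)\Vert_1$), I obtain $\Vert\hat{\mathbf{w}}_{s-1}(S_{s-1})-\mathbf{w}^\ast\Vert_1\leq 3\Vert\Delta_{s-1}(S)\Vert_1$, hence a per-round squared error bounded by $9\Vert\Delta_{s-1}(S)\Vert_1^2$. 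I would then split $[T]$ into the three epochs given by Lemma~\ref{lemma:estimator_error:DS-POSLRC}. On $[1,s_0]$ use the crude bound $\Vert\hat{\mathbf{w}}_{s-1}(S_{s-1})-\mathbf{w}^\ast\Vert_1\leq\Vert\hat{\mathbf{w}}_{s-1}\Vert_1+\Vert\mathbf{w}^\ast\Vert_1\leq 2$ (valid because the adaptive $\hat\gamma_s$ keeps $\mathbf{w}^\ast$ feasible, so $\Vert\hat{\mathbf{w}}_{s-1}\Vert_1\leq 1$), contributing at most $4s_0=\frac{48^2k^2g_{d',k_0}}{\delta_S^4}\ln\frac{d^2}{\delta}$. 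On $(s_0,s_1]$ substitute the $\mu_1 a_4 k^2 g_{d',k_0}/[\delta_S^4(s-1)]$ rate and sum $\sum_{s>s_0}(s-1)^{-2}\leq 1/s_0$; with $\mu_1^2=(9/(9-2\sqrt{3}))^2\leq 2.7$ this produces $\frac{2.7 a_4^2 k^2 g_{d',k_0}}{64\delta_S^4\ln(d^2/\delta)}$. On $(s_1,T]$ substitute the $\mu_2 a_5 k\sqrt{(d'-1)/[(s-1)(k_0-1)]}/\delta_S^2$ rate and use $\sum_{s=s_1+1}^{T}(s-1)^{-1}\leq\ln\frac{T}{s_1+1}+O(1)$, producing the third term.

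The martingale sum $-2\sum_s\eta_s\langle\hat{\mathbf{w}}_{s-1}(S_{s-1})-\mathbf{w}^\ast,\mathbf{x}_s\rangle$ I would bound by first conditioning on the high-probability event $\max_{s\leq T}|\eta_s|\leq\sigma\sqrt{2\ln(T/\delta)}$ (a $\delta$-event by a Gaussian union bound) and then applying Freedman's inequality. The quadratic variation satisfies $V_T\leq \sigma^2\sum_s\langle\hat{\mathbf{w}}_{s-1}(S_{s-1})-\mathbf{w}^\ast,\mathbf{x}_s\rangle^2$, which is precisely the already-controlled squared-prediction-error sum. The deviation is therefore $O\bigl(\sqrt{V_T\ln(1/\delta)}+\ln(T/\delta)\bigr)$, and $\sqrt{ab}\leq(a+b)/2$ re-absorbs it into a constant fraction of the leading terms up to additive logarithmic corrections that are hidden inside $O(1)$. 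A final union bound over the estimation event from Lemma~\ref{lemma:estimator_error:DS-POSLRC} (failure probability $T(6+\log_{1.5}\frac{d'+k}{k_0-2})\delta$), the Gaussian tail event and the Freedman event delivers the stated confidence $1-(T(6+\log_{1.5}\frac{d+k}{k-2})+1)\delta$.

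The main obstacle is the support-mismatch step: propagating the Dantzig cone property so that the price of $S_{s-1}\neq S$ is only the factor of $3$ in $\ell_1$ norm is what allows the three leading constants $48^2$, $2.7/64$ and $9\mu_2^2a_5^2$ to match those of Lemma~\ref{lemma:estimator_error:DS-POSLRC} rather than blow up. A secondary difficulty is verifying that the unbounded Gaussian noise cross-term is truly absorbed into $O(1)$; a naive Azuma bound would contribute an undesired $\sqrt{T}$, so the truncate-and-Freedman argument, whose variance proxy equals the squared-error sum itself, is essential for the final regret to contain no $\sqrt{T}$ term.
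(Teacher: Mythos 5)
Your proposal is correct and follows essentially the same route as the paper: the realizability expansion of the square loss, a factor-of-$3$ support-mismatch bound (the paper's Lemma \ref{lemma:COLT2025:approximation_error_hat_w(S_s)}, which you rederive via the Dantzig cone property and the top-$k$ definition of $S_{s-1}$), and the identical three-regime summation of $9\Vert\Delta_{s-1}(S)\Vert_1^2$ using Lemma \ref{lemma:estimator_error:DS-POSLRC}, yielding the same three leading constants. The only cosmetic difference is the noise cross-term: the paper bounds it in one step as $2\sqrt{2\,\mathrm{Reg}_3\ln(1/\delta)}$ via Gaussian martingale concentration, whereas you truncate and apply Freedman's inequality — both land the term in the $O(1)$ remainder.
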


    Next we compare our regret bound with previous results.
    Let $\delta=\Theta(\frac{1}{T})$.
    Then with probability at least $1-\delta$,
    the regret of DS-POSLRC satsifies
    $$
        \mathrm{Reg}(\mathbf{w}^\ast)=O\left(\frac{(kd')^2}{\delta^4_Sk^2_0}\ln\frac{Td^2}{\delta}
        +\frac{k^2(d'-1)}{\delta^4_S(k_0-1)}\ln(T)\ln\frac{Td}{\delta}\right).
    $$

    Under RIP,
    the first algorithm by \citep{Kale2017Adaptive} enjoys a regret of
    $O(\frac{k^2d^3}{k^2_0}\ln(T)\ln\frac{Td}{\delta})$.
    Our algorithm improves the regret bound by a factor of $O(\min\{\frac{d^2}{k^2_0},\frac{d}{k_0}\ln{T}\})$.
    Besides,
    our result requires the compatibility condition which is weaker than RIP.

    Under the linear independence of features condition,
    the first algorithm by \citep{Ito2017Efficient}
    enjoys an expected regret of $O(\frac{d}{\sigma^2_dk_0}\sqrt{T})$.
    The regret bound is much worse than ours in terms of the dependence on $T$.
    What's more,
    the linear independence of features condition is stronger than RIP
    and the compatibility condition.

    \begin{MyRemark}
        It is interesting to explore
        whether recomputing $\hat{\mathbf{w}}_s$ for $s\in \{2^0,2^1,2^2,\ldots\}$
        can decrease the per-round time complexity,
        while maintaining a similar convergence rate and regret bound,
        as illustrated by the first algorithm in \citep{Kale2017Adaptive}.
        In this work,
        our goal is to demonstrate the power of our algorithm-dependent sampling scheme.
        It is left as a further work to give more efficient algorithms for $(k,k_0,d)$-OSLR.
    \end{MyRemark}

\section{Technical Lemmas}
\label{sec:COLT2025:technical_lemmas}

    \begin{Mylemma}
    \label{lemma:ICML2025:summation:s^-2}
        For any two positive integers $2<a<b$, it must be
        $$
            \sum^b_{s=a}\frac{1}{s^2}\leq \frac{1}{a-1}-\frac{1}{b},\quad
            \sum^b_{s=a}\frac{1}{bs}\leq \frac{1}{2(a-1)}.
        $$
    \end{Mylemma}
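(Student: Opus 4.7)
The plan is to dispatch both bounds with short elementary arguments, requiring nothing beyond telescoping and an integral comparison.

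For the first bound, I would use the standard comparison $\frac{1}{s^2} \leq \frac{1}{s(s-1)}$, valid for every $s \geq a > 2$, together with the partial-fraction identity $\frac{1}{s(s-1)} = \frac{1}{s-1} - \frac{1}{s}$. Summing from $s=a$ to $s=b$ telescopes immediately to $\frac{1}{a-1} - \frac{1}{b}$. The hypothesis $a > 2$ ensures $a-1 \geq 2$, so all intermediate terms are well defined and the cancellation is clean.

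For the second bound, I would first pull out the constant factor $1/b$, reducing the claim to $\sum_{s=a}^{b} \frac{1}{s} \leq \frac{b}{2(a-1)}$. Since $1/x$ is decreasing, the pointwise inequality $\frac{1}{s} \leq \int_{s-1}^{s} \frac{dx}{x}$ yields
$$
\sum_{s=a}^{b}\frac{1}{s} \;\leq\; \int_{a-1}^{b} \frac{dx}{x} \;=\; \ln\frac{b}{a-1}.
$$
It therefore suffices to verify $\frac{\ln(b/(a-1))}{b} \leq \frac{1}{2(a-1)}$. Substituting $u = b/(a-1) \geq 1$ rewrites this as $\frac{2\ln u}{u} \leq 1$. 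A single derivative computation shows that $\ln u / u$ attains its global maximum $1/e$ at $u = e$, so $\frac{2\ln u}{u} \leq 2/e < 1$, and the inequality follows with room to spare.

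The only step requiring any care is choosing the integration interval $[a-1, b]$ rather than $[a,b]$ in the integral comparison, which exactly absorbs the leading summand $1/a$; otherwise a stray term would be left uncontrolled. Beyond that bookkeeping, no genuine obstacle arises and the lemma reduces to two short displayed lines, independent of any problem-specific machinery developed in the main text.
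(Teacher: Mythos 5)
Your proof of the first inequality is exactly the paper's: the comparison $\frac{1}{s^2}\leq\frac{1}{s(s-1)}$ followed by telescoping. For the second inequality you take a genuinely different and, in fact, cleaner route. The paper splits into two cases according to whether $b>2(a-1)$: in the large-$b$ case it cuts the sum near $s=b/2$, bounds the head via $\frac{1}{bs}\leq\frac{1}{2s(s-1)}$ (valid there because $b\geq 2(s-1)$) and telescopes, bounds the tail with the first inequality, and has to handle the parity of $b$ separately; in the small-$b$ case it uses $\frac{1}{bs}\leq\frac{1}{s^2}$ and again invokes the first inequality. Your argument replaces all of this with the integral comparison $\sum_{s=a}^{b}\frac{1}{s}\leq\ln\frac{b}{a-1}$ and the elementary fact $\max_{u>0}\frac{\ln u}{u}=\frac{1}{e}$, which after the substitution $u=b/(a-1)$ gives $\sum_{s=a}^{b}\frac{1}{bs}\leq\frac{1}{e(a-1)}<\frac{1}{2(a-1)}$. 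Your version is correct (the hypotheses $2<a<b$ guarantee $a-1\geq 2$ and $u>1$, so every step is licensed), avoids the case analysis entirely, and even yields a strictly better constant, $1/e$ in place of $1/2$; the paper's version has the minor virtue of being entirely discrete and of reusing its own first inequality rather than appealing to an integral bound.
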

    \begin{proof}[of Lemma \ref{lemma:ICML2025:summation:s^-2}]
        As $a>2$, we have
        \begin{align*}
            \sum^b_{s=a}\frac{1}{s^2} \leq \sum^b_{s=a}\frac{1}{s(s-1)}
            =\sum^b_{s=a}\left(\frac{1}{s-1}-\frac{1}{s}\right)
            =\frac{1}{a-1}-\frac{1}{b}.
        \end{align*}
        For the second inequality,
        we consider two cases.

        \noindent\textbf{case 1} $b> 2(a-1)$.
        If $b$ is even,
        then by the first inequality,
        we have
        \begin{align*}
            \sum^b_{s=a}\frac{1}{bs}
            =\sum^{\frac{b}{2}+1}_{s=a}\frac{1}{bs}
            +\sum^{b}_{s=\frac{b}{2}+2}\frac{1}{bs}
            \leq\sum^{\frac{b}{2}+1}_{s=a}\frac{1}{2s(s-1)}
            +\frac{1}{\frac{b}{2}+1}-\frac{1}{b}
            \leq&\frac{1}{2}\left(\frac{1}{a-1}-\frac{1}{\frac{b}{2}+1}\right)
            +\frac{1}{2}\frac{1}{\frac{b}{2}+1}\\
            =&\frac{1}{2(a-1)}.
        \end{align*}
        If $b$ is odd, then we replace $b/2$ with $(b-1)/2$ in the second term.

        \noindent\textbf{case 2} $b\leq 2(a-1)$.
        By the first inequality in the lemma,
        $$
            \sum^{b}_{s=a}\frac{1}{bs}
            \leq \sum^{b}_{s=a}\frac{1}{s^2}
            \leq \frac{1}{a-1}-\frac{1}{b}\leq \frac{1}{2(a-1)},
        $$
        which concludes the proof.
    \end{proof}

    \begin{Mylemma}
    \label{lemma:COLT2025:approximation_error_w_s}
        For any $s\geq 1$,
        let $\hat{\mathbf{w}}_s$ be the solution of $\mathrm{DS}(\hat{\gamma}_s)$.
        Let $S_s\subseteq[d]$ satisfy
        $\vert S_s\vert=k$ and for any $i\in S_s$
        and $j\in [d]\setminus S_s$, $\vert \hat{w}_{s,i}\vert\geq \vert \hat{w}_{s,j}\vert$.
        Let $\mathbf{w}_s=\mathbf{w}^{\ast}(S\cap S_s)$.
        It must be
        $$
            \Vert \mathbf{w}_s - \mathbf{w}^\ast\Vert_1
            \leq \Vert \hat{\mathbf{w}}_s - \mathbf{w}^\ast\Vert_1.
        $$
    \end{Mylemma}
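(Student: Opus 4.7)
The plan is to exploit the fact that $\mathbf{w}^{\ast}-\mathbf{w}_s$ is supported on $S\setminus S_s$, together with the top-$k$ defining property of $S_s$, to set up a one-to-one comparison between the indices in $S\setminus S_s$ and those in $S_s\setminus S$.

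First I would unpack the two $\ell_1$-norms. Since $\mathbf{w}_s=\mathbf{w}^{\ast}(S\cap S_s)$ agrees with $\mathbf{w}^{\ast}$ on $S\cap S_s$ and is zero elsewhere, while $\mathbf{w}^{\ast}$ itself is supported on $S$, we obtain
\begin{equation*}
\Vert \mathbf{w}_s-\mathbf{w}^{\ast}\Vert_1=\sum_{i\in S\setminus S_s}\vert w^{\ast}_i\vert.
\end{equation*}
On the other hand, dropping the coordinates outside $(S\setminus S_s)\cup (S_s\setminus S)$ and using $w^{\ast}_j=0$ for $j\in S_s\setminus S$ yields the lower bound
\begin{equation*}
\Vert \hat{\mathbf{w}}_s-\mathbf{w}^{\ast}\Vert_1\geq \sum_{i\in S\setminus S_s}\vert \hat{w}_{s,i}-w^{\ast}_i\vert+\sum_{j\in S_s\setminus S}\vert \hat{w}_{s,j}\vert.
\end{equation*}

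Next I would observe the cardinality inequality $\vert S\setminus S_s\vert\leq \vert S_s\setminus S\vert$. This follows from Assumption \ref{ass:COLT2025:sparsity} ($\vert S\vert\leq k$) and $\vert S_s\vert=k$, because $\vert S\setminus S_s\vert=\vert S\vert-\vert S\cap S_s\vert\leq k-\vert S\cap S_s\vert=\vert S_s\setminus S\vert$. This allows me to fix an injection $\pi:S\setminus S_s\hookrightarrow S_s\setminus S$. For each such $i$ the coordinate $\pi(i)$ lies in the top-$k$ of $\vert \hat{\mathbf{w}}_s\vert$ by the definition of $S_s$, whereas $i$ does not, so $\vert \hat{w}_{s,\pi(i)}\vert\geq \vert \hat{w}_{s,i}\vert$.

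The core step is a single triangle-inequality bound applied index by index. For $i\in S\setminus S_s$,
\begin{equation*}
\vert \hat{w}_{s,i}-w^{\ast}_i\vert+\vert \hat{w}_{s,\pi(i)}\vert\geq \vert w^{\ast}_i\vert-\vert \hat{w}_{s,i}\vert+\vert \hat{w}_{s,\pi(i)}\vert\geq \vert w^{\ast}_i\vert,
\end{equation*}
where the last inequality uses $\vert \hat{w}_{s,\pi(i)}\vert\geq \vert \hat{w}_{s,i}\vert$. Summing over $i\in S\setminus S_s$ and using the injectivity of $\pi$ to bound $\sum_i\vert \hat{w}_{s,\pi(i)}\vert\leq \sum_{j\in S_s\setminus S}\vert \hat{w}_{s,j}\vert$ gives $\sum_{i\in S\setminus S_s}\vert w^{\ast}_i\vert\leq \Vert \hat{\mathbf{w}}_s-\mathbf{w}^{\ast}\Vert_1$, which is exactly the claim.

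The only subtlety, and the main thing to be careful about, is the cardinality comparison: one must not implicitly assume $\vert S\vert=k$, since $\mathbf{w}^{\ast}$ is only guaranteed to be at most $k$-sparse. Once the inequality $\vert S\setminus S_s\vert\leq \vert S_s\setminus S\vert$ is secured, the injection $\pi$ exists and the remainder is a routine triangle-inequality argument combined with the top-$k$ selection rule for $S_s$.
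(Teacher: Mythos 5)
Your proposal is correct and follows essentially the same route as the paper: write $\Vert\mathbf{w}_s-\mathbf{w}^\ast\Vert_1=\sum_{i\in S\setminus S_s}\vert w^\ast_i\vert$, apply the triangle inequality, and replace $\sum_{i\in S\setminus S_s}\vert\hat{w}_{s,i}\vert$ by $\sum_{j\in S_s\setminus S}\vert\hat{w}_{s,j}\vert$ via the top-$k$ property of $S_s$ together with $\vert S\setminus S_s\vert\leq\vert S_s\setminus S\vert$. The only difference is that you make explicit, through the injection $\pi$, the cardinality-and-ordering justification that the paper leaves implicit in that replacement step.
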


    \begin{proof}[of Lemma \ref{lemma:COLT2025:approximation_error_w_s}]
        Unfolding $\Vert \mathbf{w}_s - \mathbf{w}^\ast\Vert_1$ gives
        \begin{align*}
            \Vert \mathbf{w}_s - \mathbf{w}^\ast\Vert_1
            =\sum_{i\in S\setminus S_s} \vert w^{\ast}_i\vert
            =\sum_{i\in S\setminus S_s} \vert w^{\ast}_i-\hat{w}_{s,i}+\hat{w}_{s,i}\vert
            \leq& \sum_{i\in S\setminus S_s} \vert w^{\ast}_i-\hat{w}_{s,i}\vert+
            \sum_{i\in S\setminus S_s}\vert\hat{w}_{s,i}\vert\\
            \leq& \sum_{i\in S\setminus S_s} \vert w^{\ast}_i-\hat{w}_{s,i}\vert+
            \sum_{i\in S_s\setminus S}\vert\hat{w}_{s,i}\vert\\
            \leq&\Vert\hat{\mathbf{w}}_s - \mathbf{w}^\ast\Vert_1,
        \end{align*}
        in which $w^\ast_i=0$ for all $i\in S_s\setminus S$.
        We conclude the proof.
    \end{proof}

    By Lemma \ref{lemma:COLT2025:approximation_error_w_s},
    our algorithm can significantly reduce the constant factor on the regret bound.
    To be specific,
    the proof of Theorem \ref{thm:ICML2025:regret_bound_DS-OSLRC}
    will make use of the following inequality
    \begin{equation}
    \label{eq:ICML2025:bounding_Delta(S_s)_by_Delta_s:ours}
        \Vert \mathbf{w}_s - \mathbf{w}^\ast\Vert^2_1
        \leq \Vert \hat{\mathbf{w}}_s - \mathbf{w}^\ast\Vert^2_1
        \mathop{\leq}^{\mathrm{Lemma}~\ref{lemma:ICML25:Dantzig2005}} 4\Vert \Delta_s(S)\Vert^2_1.
    \end{equation}
    The second approach to prove Theorem \ref{thm:ICML2025:regret_bound_DS-OSLRC}
    is to use the following inequality
    (please refer to Lemma 4 in \cite{Kale2017Adaptive} or Lemma 3 in \cite{Ito2017Efficient}),
    \begin{equation}
    \label{eq:ICML2025:bounding_Delta(S_s)_by_Delta_s:Kale2017}
        \Vert \hat{\mathbf{w}}_s(S_s)-\mathbf{w}^\ast\Vert^2_2
        \leq 3 \Vert \hat{\mathbf{w}}_s-\mathbf{w}^\ast\Vert^2_2.
    \end{equation}
    If we define $\mathbf{w}_s=\hat{\mathbf{w}}_s(S_s)$,
    then by \eqref{eq:ICML2025:bounding_Delta(S_s)_by_Delta_s:Kale2017}
    and Lemma \ref{lemma:ICML25:Dantzig2005},
    we can obtain
    \begin{align*}
        \Vert \hat{\mathbf{w}}_s(S_s)-\mathbf{w}^\ast\Vert^2_1
        \leq k\Vert \hat{\mathbf{w}}_s(S_s)-\mathbf{w}^\ast\Vert^2_2
        \leq 3k\Vert \Delta_s\Vert^2_2
        \leq 3k\Vert \Delta_s\Vert^2_1
        &\leq 12k\Vert \Delta_s(S)\Vert^2_1.
    \end{align*}
    Thus our analysis can reduce the constant factor by a factor of $3k$.

    \begin{Mylemma}[Bernstein's inequality for martingale]
    \label{lemma:ICML25:Hoeffding_inequality}
        Let $X_1,\ldots,X_n$ be a bounded martingale difference sequence w.r.t. the filtration
        $\mathcal{H}=(\mathcal{H}_k)_{1\leq k\leq n}$ and with $\vert X_k\vert\leq a$.
        Let $Z_t=\sum^t_{k=1}X_{k}$ be the associated martingale.
        Denote the sum of the conditional variances by
        \begin{align*}
            \Sigma^2_n=\sum^n_{k=1}\mathbb{E}\left[X^2_k\vert\mathcal{H}_{k-1}\right]\leq v.
        \end{align*}
        Then for all constants $a,v>0$,
        with probability at least $1-\delta$,
        \begin{align*}
            \max_{t=1,\ldots,n}Z_t < \frac{2}{3}a\ln\frac{1}{\delta}+\sqrt{2v\ln\frac{1}{\delta}}.
        \end{align*}
    \end{Mylemma}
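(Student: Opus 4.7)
The plan is to deploy the classical exponential supermartingale argument. The first step is to control the conditional moment generating function of each increment. Since $X_k$ is centered with $\vert X_k\vert\leq a$ and $\mathbb{E}[X_k^2\mid\mathcal{H}_{k-1}]=\sigma_k^2$, expanding the series
$$
\mathbb{E}[e^{\lambda X_k}\mid \mathcal{H}_{k-1}] = 1 + \sum_{j\geq 2}\frac{\lambda^j}{j!}\mathbb{E}[X_k^j\mid\mathcal{H}_{k-1}],
$$
bounding $\vert X_k\vert^j \leq a^{j-2}X_k^2$, and using $j!\geq 2\cdot 3^{j-2}$ gives the Bernstein-type MGF estimate
$$
\mathbb{E}[e^{\lambda X_k}\mid \mathcal{H}_{k-1}] \leq \exp\!\left(\frac{\lambda^2\sigma_k^2}{2(1-\lambda a/3)}\right),\qquad 0<\lambda<3/a.
$$

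The second step forms a nonnegative supermartingale from this bound. Let $\Sigma_t^2=\sum_{k\leq t}\sigma_k^2$ and define $M_t = \exp\!\bigl(\lambda Z_t - \tfrac{\lambda^2}{2(1-\lambda a/3)}\Sigma_t^2\bigr)$. The tower property combined with the MGF estimate yields $\mathbb{E}[M_t\mid\mathcal{H}_{t-1}]\leq M_{t-1}$, so $(M_t)_{t\leq n}$ is a nonnegative supermartingale with $M_0=1$. Because $\Sigma_t^2\leq \Sigma_n^2\leq v$ for every $t\leq n$, we have $M_t \geq \exp\!\bigl(\lambda Z_t - \tfrac{\lambda^2 v}{2(1-\lambda a/3)}\bigr)$. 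Doob's maximal inequality $\mathbb{P}[\max_{t\leq n}M_t\geq 1/\delta]\leq \delta$ therefore gives, for every admissible $\lambda$,
$$
\max_{1\leq t\leq n} Z_t \leq \frac{\ln(1/\delta)}{\lambda} + \frac{\lambda v}{2(1-\lambda a/3)}
$$
with probability at least $1-\delta$.

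The final step is to choose $\lambda$ so that the right-hand side collapses to the stated additive form. Solving the first-order condition $2\ln(1/\delta)(1-\lambda a/3)^2 = v\lambda^2$ yields $\lambda_{\ast}=\bigl(a/3+\sqrt{v/(2\ln(1/\delta))}\bigr)^{-1}$; substituting back and simplifying gives $\ln(1/\delta)/\lambda_\ast = \tfrac{a}{3}\ln(1/\delta)+\tfrac{1}{2}\sqrt{2v\ln(1/\delta)}$ and $\lambda_\ast v/(2(1-\lambda_\ast a/3))=\tfrac{1}{2}\sqrt{2v\ln(1/\delta)}$, hence the sum equals $\tfrac{a}{3}\ln(1/\delta)+\sqrt{2v\ln(1/\delta)}$. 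Since $\tfrac{1}{3}\leq\tfrac{2}{3}$, this implies the stated bound. Equivalently, one may solve the quadratic $t^2 \leq 2\ln(1/\delta)\bigl(v+at/3\bigr)$ implicit in the tail estimate and apply $\sqrt{A^2+B}\leq A+\sqrt{B}$ with $A=\tfrac{a}{3}\ln(1/\delta)$ and $B=2v\ln(1/\delta)$, arriving at the same conclusion.

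The main obstacle is this last optimization: producing the clean additive form $\tfrac{2}{3}a\ln(1/\delta)+\sqrt{2v\ln(1/\delta)}$ rather than the tighter but messier Bernstein quadratic root, which requires either a well-chosen $\lambda_\ast$ or the slightly wasteful inequality $\sqrt{A^2+B}\leq A+\sqrt{B}$. Once the exponential supermartingale is in place, the MGF series estimate and Doob's maximal inequality are essentially mechanical.
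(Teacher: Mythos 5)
Your proof is correct. Note, however, that the paper does not prove this lemma at all: it imports it verbatim from Lemma A.8 of Cesa-Bianchi and Lugosi (2006), so there is no in-paper argument to compare against. What you supply is the standard Freedman-type derivation of that cited result, and every step checks out: the series bound $\vert X_k\vert^j\leq a^{j-2}X_k^2$ with $j!\geq 2\cdot 3^{j-2}$ gives the stated conditional MGF estimate for $0<\lambda<3/a$; since $\Sigma_t^2$ is predictable, $M_t$ is indeed a nonnegative supermartingale and Doob's maximal inequality applies; and the substitution $\mu=\lambda/(1-\lambda a/3)$ turns the threshold into $\frac{a}{3}\ln\frac{1}{\delta}+\frac{\ln(1/\delta)}{\mu}+\frac{\mu v}{2}$, whose minimum at $\mu=\sqrt{2\ln(1/\delta)/v}$ is $\frac{a}{3}\ln\frac{1}{\delta}+\sqrt{2v\ln\frac{1}{\delta}}$ — in fact sharper than the claimed $\frac{2}{3}a\ln\frac{1}{\delta}+\sqrt{2v\ln\frac{1}{\delta}}$, which is what one gets by instead inverting the quadratic tail form, exactly as you observe. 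The only point worth flagging is that you read the hypothesis $\Sigma_n^2\leq v$ as holding almost surely, which is consistent with the lemma as stated here; the Cesa-Bianchi--Lugosi version bounds the probability of the joint event $\{\max_t Z_t>\cdots\}\cap\{\Sigma_n^2\leq v\}$, and covering that form would require a small stopping-time modification (stop at the first $t$ with $\Sigma_{t+1}^2>v$, which is a stopping time because $\Sigma_t^2$ is predictable). As written, no gap.
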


    Lemma \ref{lemma:ICML25:Hoeffding_inequality}
    is derived from Lemma A.8 in \citep{Cesa-Bianchi2006Prediction}.
    Note that $v$ must be a constant.
    Next we give a new Bernstein's inequality for martingale
    in which $v$ is a random variable depending on $X_1,\ldots,X_n$.

    \begin{Mylemma}
    \label{lemma:ICML25:New_Bernstein_inequality}
        Let $X_1,\ldots,X_n$ be a bounded martingale difference sequence w.r.t. the filtration
        $\mathcal{H}=(\mathcal{H}_k)_{1\leq k\leq n}$ and with $\vert X_k\vert\leq a$.
        Let $Z_t=\sum^t_{k=1}X_{k}$ be the associated martingale.
        Denote the sum of the conditional variances by
        $
            \Sigma^2_n=\sum^n_{k=1}\mathbb{E}\left[X^2_k\vert\mathcal{H}_{k-1}\right]\leq v,
        $
        where $v\in[a_1n,a_2n]$ is a random variable depending on $X_1,\ldots,X_n$ and $0< a_1<a_2$ are constants.
        Then for any constant $a>0$,
        with probability at least $1-\left(1+\log_{\beta}\frac{a_2}{a_1}\right)\delta$,
        \begin{align*}
            \max_{t=1,\ldots,n}Z_t < \frac{2}{3}a\ln\frac{1}{\delta}+\sqrt{2\beta v\ln\frac{1}{\delta}},
        \end{align*}
        in which $\beta>1$ is a constant.
    \end{Mylemma}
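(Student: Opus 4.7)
The plan is to reduce the case of a random variance bound to the standard (deterministic-variance) Bernstein inequality, namely Lemma \ref{lemma:ICML25:Hoeffding_inequality}, via a geometric peeling argument over the range $[a_1 n, a_2 n]$ in which $v$ lives. The obstacle compared with the standard lemma is precisely that we cannot feed $v$ directly into Lemma \ref{lemma:ICML25:Hoeffding_inequality}, because $v$ is not a constant but depends on $X_1,\ldots,X_n$; we have to replace it by a union-bounded family of deterministic candidates that tile its range.

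Concretely, I would first partition $[a_1 n, a_2 n]$ into at most $N := \lceil \log_\beta(a_2/a_1)\rceil \le 1+\log_\beta(a_2/a_1)$ half-open geometric slices
$$
I_j = \bigl[a_1 n\,\beta^{j-1},\, a_1 n\,\beta^{j}\bigr),\qquad j=1,\ldots,N,
$$
so that $[a_1 n, a_2 n] \subseteq \bigcup_j I_j$. For each fixed $j$, apply Lemma \ref{lemma:ICML25:Hoeffding_inequality} with the \emph{constant} variance upper bound $v_j := a_1 n\,\beta^{j}$; this yields an event $\mathcal{E}_j$ of probability at least $1-\delta$ on which
$$
\max_{t=1,\ldots,n} Z_t < \tfrac{2}{3}a\ln\tfrac{1}{\delta} + \sqrt{2 v_j \ln\tfrac{1}{\delta}}.
$$
A union bound over $j=1,\ldots,N$ gives that $\bigcap_j \mathcal{E}_j$ holds with probability at least $1-N\delta \ge 1-(1+\log_\beta(a_2/a_1))\delta$.

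On this intersection event, condition on the realized value of $v$ and pick the unique index $j^\ast$ with $v \in I_{j^\ast}$. By construction $v_{j^\ast} = a_1 n\,\beta^{j^\ast} \le \beta v$, since $v \ge a_1 n\,\beta^{j^\ast-1}$. Substituting into the bound on $\mathcal{E}_{j^\ast}$ gives
$$
\max_{t=1,\ldots,n} Z_t < \tfrac{2}{3}a\ln\tfrac{1}{\delta} + \sqrt{2 v_{j^\ast}\ln\tfrac{1}{\delta}} \le \tfrac{2}{3}a\ln\tfrac{1}{\delta} + \sqrt{2\beta v\ln\tfrac{1}{\delta}},
$$
which is precisely the claimed inequality. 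The only subtlety in this plan is verifying that the standard inequality applied with the deterministic $v_j$ is legitimate: we use the fact that Lemma \ref{lemma:ICML25:Hoeffding_inequality} requires only an \emph{almost sure} upper bound on $\Sigma_n^2$, and $\Sigma_n^2 \le v \le a_2 n \le v_N$ almost surely, so the hypothesis $\Sigma_n^2 \le v_j$ is used on the event $\{v \le v_j\}$ after truncation—either directly, or by noting that the standard proof uses only the supermartingale property $\mathbb{E}[\exp(\lambda X_k - \lambda^2 g(\lambda) \mathbb{E}[X_k^2\mid\mathcal{H}_{k-1}])\mid\mathcal{H}_{k-1}]\le 1$, which is insensitive to the value of $v$. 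This is the only point where care is required; everything else is a clean peeling/union bound.
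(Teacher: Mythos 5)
Your proof is correct and follows essentially the same route as the paper's: a geometric peeling of the variance range $[a_1n,a_2n]$ into at most $1+\log_\beta(a_2/a_1)$ slices, an application of the fixed-variance Bernstein inequality (Lemma \ref{lemma:ICML25:Hoeffding_inequality}) on each slice, and a union bound. The only organizational difference is that the paper decomposes the single bad event jointly with the slice membership $\{a_1n\beta^j\le v<a_1n\beta^{j+1}\}$, so that the hypothesis $\Sigma^2_n\le a_1n\beta^{j+1}$ is automatically available on each sub-event, whereas you union-bound over good events and select the slice a posteriori; the subtlety you flag about needing the joint-event (Freedman-type) form of the base inequality is real, and the paper's decomposition is precisely the ``direct'' resolution you mention.
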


    Lemma \ref{lemma:ICML25:New_Bernstein_inequality} is a slight variant
    of Lemma 1 in \citep{Li2024On}.

    \begin{proof}[of Lemma \ref{lemma:ICML25:New_Bernstein_inequality}]
        We divide the interval $[a_1n,a_2n]$ as follows
        \begin{align*}
            [a_1n,a_2n]\subseteq
            \bigcup^{\lfloor\log_{\beta}\frac{a_2}{a_1}\rfloor}_{j=0}
            \left[a_1n\cdot \beta^j,a_1n\cdot \beta^{j+1}\right).
        \end{align*}
        We decompose the random event as follows,
        \begin{align*}
            &\mathbb{P}\left[\max_{t=1,\ldots,n}Z_t>\frac{2}{3}a\ln\frac{1}{\delta}+\sqrt{2\beta v\ln\frac{1}{\delta}},\Sigma^2_n\leq v\right]\\
            =&\mathbb{P}\left[\max_{t\leq n}Z_t>\frac{2}{3}a\ln\frac{1}{\delta}+\sqrt{2\beta v\ln\frac{1}{\delta}},
            \Sigma^2_n\leq v,\cup^{\lfloor\log_{\beta}\frac{a_2}{a_1}\rfloor}_{j=0}
            a_1n\cdot \beta^j\leq v < a_1n\cdot \beta^{j+1}\right]\\
            \leq&\sum^{\lfloor\log_{\beta}\frac{a_2}{a_1}\rfloor}_{j=0}
            \mathbb{P}\left[\max_{t\leq n}Z_t>\frac{2}{3}a\ln\frac{1}{\delta}+\sqrt{2\beta v\ln\frac{1}{\delta}},\Sigma^2_n\leq v,
            a_1n\cdot \beta^j\leq v < a_1n\cdot \beta^{j+1}\right]\\
            \leq&\sum^{\lfloor\log_{\beta}\frac{a_2}{a_1}\rfloor}_{j=0}
            \mathbb{P}\left[\max_{t\leq n}Z_t>\frac{2}{3}a\ln\frac{1}{\delta}
            +\sqrt{2\beta\cdot a_1n\cdot \beta^j\ln\frac{1}{\delta}},\Sigma^2_n\leq v,
            a_1n\cdot \beta^j\leq v < a_1n\cdot \beta^{j+1}\right]\\
            =&\sum^{\lfloor\log_{\beta}\frac{a_2}{a_1}\rfloor}_{j=0}
            \mathbb{P}\left[\max_{t\leq n}Z_t>\frac{2}{3}a\ln\frac{1}{\delta}
            +\sqrt{2\cdot a_1n\cdot \beta^{j+1}\ln\frac{1}{\delta}},\Sigma^2_n\leq v,
            a_1n\cdot \beta^j\leq v < a_1n\cdot \beta^{j+1}\right]\\
            \leq&\left(1+\log_{\beta}\frac{a_2}{a_1}\right)\delta,
        \end{align*}
        in which we use
        Lemma \ref{lemma:ICML25:Hoeffding_inequality} for each sub-event.
    \end{proof}

    \begin{Mylemma}[\cite{Hazan2007Logarithmic}]
    \label{lemma:ICML2025:technical_lemma:ONS}
        Let $\mathbf{u}_t\in \mathbb{R}^n$, for $t=1,2,\ldots,T$, be a sequence of vectors
        such that for some $r>0$, $\Vert \mathbf{u}_t\Vert_2\leq r$.
        Define $\mathbf{V}_t=\sum^t_{s=1}\mathbf{u}_s\mathbf{u}^\top_s+\varepsilon \cdot\mathbf{I}_{n\times n}$.
        Then
        $$
            \sum^T_{t=1}\mathbf{u}^\top_t\mathbf{V}^{-1}_t\mathbf{u}_t \leq n\ln\left(\frac{r^2T}{\varepsilon}+1\right).
        $$
    \end{Mylemma}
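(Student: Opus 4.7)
The plan is to execute the classical matrix determinant / potential argument that underlies the analysis of the online Newton step, bounding each quadratic form $\mathbf{u}_t^\top \mathbf{V}_t^{-1} \mathbf{u}_t$ by an incremental log-determinant and then telescoping.

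First, I would apply the Sherman--Morrison formula to the rank-one update $\mathbf{V}_t = \mathbf{V}_{t-1} + \mathbf{u}_t \mathbf{u}_t^\top$, which gives
\[
\mathbf{u}_t^\top \mathbf{V}_t^{-1}\mathbf{u}_t \;=\; \frac{\mathbf{u}_t^\top \mathbf{V}_{t-1}^{-1}\mathbf{u}_t}{1+\mathbf{u}_t^\top \mathbf{V}_{t-1}^{-1}\mathbf{u}_t}.
\]
Using the elementary inequality $\tfrac{x}{1+x} \leq \ln(1+x)$ for $x\geq 0$ together with the matrix determinant lemma $\det(\mathbf{V}_t) = \det(\mathbf{V}_{t-1})\,(1+\mathbf{u}_t^\top \mathbf{V}_{t-1}^{-1}\mathbf{u}_t)$, this yields
\[
\mathbf{u}_t^\top \mathbf{V}_t^{-1}\mathbf{u}_t \;\leq\; \ln\!\Bigl(1+\mathbf{u}_t^\top \mathbf{V}_{t-1}^{-1}\mathbf{u}_t\Bigr) \;=\; \ln\frac{\det \mathbf{V}_t}{\det \mathbf{V}_{t-1}}.
\]
Summing over $t=1,\dots,T$ telescopes the right-hand side into $\ln\det\mathbf{V}_T - \ln\det\mathbf{V}_0 = \ln\det\mathbf{V}_T - n\ln\varepsilon$, so everything reduces to controlling $\det \mathbf{V}_T$.

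Next I would bound the determinant via the AM--GM inequality applied to the eigenvalues: $\det(\mathbf{V}_T)\leq \bigl(\tfrac{1}{n}\mathrm{tr}(\mathbf{V}_T)\bigr)^n$. Since $\mathrm{tr}(\mathbf{V}_T) = \sum_{s=1}^T \Vert\mathbf{u}_s\Vert_2^2 + n\varepsilon \leq r^2 T + n\varepsilon$, this produces $\ln\det(\mathbf{V}_T) \leq n\ln\!\bigl(\tfrac{r^2 T}{n}+\varepsilon\bigr)$. Subtracting $n\ln\varepsilon$ and using $n\geq 1$ to absorb the factor $1/n$ into the inner ratio gives the claimed bound $n\ln(r^2T/\varepsilon + 1)$.

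There is no real obstacle here: the Sherman--Morrison identity, the one-line inequality $\tfrac{x}{1+x}\leq \ln(1+x)$, and the AM--GM trace bound are all standard. The only minor care needed is the slight slack in the final step where one trades the factor $1/n$ inside the logarithm for a cleaner expression using $n\geq 1$, which is exactly what the statement allows.
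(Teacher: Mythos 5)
Your proof is correct and is essentially the standard log-determinant potential argument from the cited reference \citep{Hazan2007Logarithmic} (the paper itself states this lemma without proof): Sherman--Morrison plus the matrix determinant lemma and $\tfrac{x}{1+x}\leq\ln(1+x)$ give the telescoping bound $\sum_t \mathbf{u}_t^\top\mathbf{V}_t^{-1}\mathbf{u}_t\leq\ln\det\mathbf{V}_T-n\ln\varepsilon$, and your trace/AM--GM bound on $\det\mathbf{V}_T$ followed by the $n\geq 1$ relaxation correctly yields $n\ln(r^2T/\varepsilon+1)$. All steps check out.
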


    \begin{Mylemma}
    \label{lemma:ICML25:sampling_probability}
        For each $s=1,2,\ldots$,
        let $\mathbf{q}_s=(\frac{\vert\hat{w}_{s-1,1}\vert}{\Vert \hat{\mathbf{w}}_{s-1}\Vert_1},
        \ldots,\frac{\vert\hat{w}_{s-1,d}\vert}{\Vert \hat{\mathbf{w}}_{s-1}\Vert_1})$
        and $B_s$ be the indexes of the features selected by DS-OSLRC.
        \begin{equation}
        \label{eq:ICML25:sampling_probability}
        \begin{split}
            \forall i\in[d],\quad\mathbb{P}[i\in B_s]=&\frac{d-k}{d-1}q_{s,i}+\frac{k-1}{d-1},\\
            \forall i\neq j\in[d],\quad\mathbb{P}[i,j\in B_s]=&\frac{1}{g_{d,k}}+
            \frac{1}{g_{d,k}}\cdot(q_{s,i}+q_{s,j}),\\
            \forall i\neq j\neq r\in[d],\quad
            \mathbb{P}[i,j,r\in B_s]=&\frac{1}{g_{d,k}}\cdot\frac{k-3}{d-3}+
            \frac{1}{g_{d,k}}\cdot\frac{d-k}{d-3}\cdot (q_{t,i}+q_{t,j}+q_{t,r}).
            \end{split}
        \end{equation}
        Besides,
        $$
            \mathbb{E}_s\left[\hat{\mathbf{x}}_{s^2}\right]={\mathbf{x}}_{s^2},\quad
            \mathbb{E}_s\left[\mathbf{h}_{s^2}\right]={\mathbf{x}}_{s^2}{\mathbf{x}}^\top_{s^2},
        $$
        where $\mathbb{E}_s[\cdot]=\mathbb{E}[\cdot\vert B_1,\ldots,B_{s-1}]$ is the conditional expectation
        and is taken with respect to $B_s$.
    \end{Mylemma}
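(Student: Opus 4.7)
The plan is to compute each probability by case analysis on the first sampled index $I_1$, using the fact that, conditionally on $I_1$, the remaining indices $I_2,\ldots,I_k$ returned by Algorithm \ref{alg:ICML2025:SAMPLING} form a uniformly random size-$(k-1)$ subset of $[d]\setminus\{I_1\}$. Hence any fixed $m$-element subset $T\subseteq[d]\setminus\{I_1\}$ satisfies
\begin{equation*}
    \mathbb{P}\!\left[T\subseteq\{I_2,\ldots,I_k\}\,\big|\,I_1\right]=\binom{d-1-m}{k-1-m}\Big/\binom{d-1}{k-1},
\end{equation*}
which equals $\tfrac{k-1}{d-1}$, $\tfrac{(k-1)(k-2)}{(d-1)(d-2)}$, and $\tfrac{(k-1)(k-2)(k-3)}{(d-1)(d-2)(d-3)}$ for $m=1,2,3$. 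For $\mathbb{P}[i\in B_s]$ I would split into ``$I_1=i$'' (contributing $q_{s,i}$) and ``$I_1\neq i$ with $i\in\{I_2,\ldots,I_k\}$'' (contributing $(1-q_{s,i})\cdot\tfrac{k-1}{d-1}$); collecting terms gives $\tfrac{d-k}{d-1}q_{s,i}+\tfrac{k-1}{d-1}$.

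For $\mathbb{P}[i,j\in B_s]$ with $i\neq j$, I would partition according to whether $I_1\in\{i,j\}$: the case $I_1=i$ with $j$ in the uniform draw contributes $q_{s,i}\cdot\tfrac{k-1}{d-1}$, symmetrically for $I_1=j$, and the case $I_1\notin\{i,j\}$ with both $i,j$ in the uniform draw contributes $(1-q_{s,i}-q_{s,j})\cdot\tfrac{(k-1)(k-2)}{(d-1)(d-2)}$; summing and regrouping around $\tfrac{1}{g_{d,k}}$ yields the stated identity. The triple probability $\mathbb{P}[i,j,r\in B_s]$ follows by the analogous four-way split on whether $I_1\in\{i,j,r\}$: each of the three ``hit'' cases contributes a term of the form $q_{s,\cdot}\cdot\tfrac{(k-1)(k-2)}{(d-1)(d-2)}$ (the two remaining members of $\{i,j,r\}$ must land in the uniform draw), while the ``miss'' case contributes $(1-q_{s,i}-q_{s,j}-q_{s,r})\cdot\tfrac{(k-1)(k-2)(k-3)}{(d-1)(d-2)(d-3)}$. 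Factoring out $\tfrac{1}{g_{d,k}}=\tfrac{(k-1)(k-2)}{(d-1)(d-2)}$ and simplifying the residual ratios $\tfrac{k-3}{d-3}$ and $1-\tfrac{k-3}{d-3}=\tfrac{d-k}{d-3}$ produces the form in \eqref{eq:ICML25:sampling_probability}.

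For the unbiasedness statements, I would observe that $\mathbf{q}_s$ is a deterministic function of $\hat{\mathbf{w}}_{s-1}$ and is therefore measurable with respect to the past $B_1,\ldots,B_{s-1}$. Under $\mathbb{E}_s[\cdot]=\mathbb{E}[\cdot\mid B_1,\ldots,B_{s-1}]$ the probabilities $\mathbb{P}[i\in B_s]$ and $\mathbb{P}[i,j\in B_s]$ are deterministic constants, so the inverse-probability definitions in \eqref{eq:COLT2025:estimators} give $\mathbb{E}_s[\hat{x}_{s^2,i}]=x_{s^2,i}$ and $\mathbb{E}_s[h_{s^2}[i,j]]=x_{s^2,i}x_{s^2,j}$ for all $i,j\in[d]$. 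The main obstacle is purely bookkeeping: making sure the disjoint case decompositions are exhaustive and that the hypergeometric factor uses the correctly shifted parameters once $I_1$ has been removed from the pool. There is no genuine probabilistic subtlety, because conditionally on $I_1$ the second stage of SAMPLING is ordinary uniform sampling without replacement from a fixed pool of size $d-1$.
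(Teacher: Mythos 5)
Your proposal is correct, and it reaches all three probability formulas by a genuinely cleaner route than the paper. The paper's own proof fixes the ordered sampling positions: it tabulates every pair $(m,n)$ of rounds at which $i$ and $j$ could be drawn, computes each $p_{m,n}[i,j]$ via telescoping products such as $\prod_{r}(1-\tfrac{1}{d-r})\tfrac{1}{d-(n-1)}=\tfrac{1}{d-1}$, and sums over the table; the triple probability is then handled by a four-way case split that recursively invokes the pair formula with shifted parameters $(d-1,k-1)$. You instead condition only on $I_1$ and observe that the second stage is uniform sampling without replacement, so the probability that a fixed $m$-subset of $[d]\setminus\{I_1\}$ is covered is the hypergeometric ratio $\binom{d-1-m}{k-1-m}/\binom{d-1}{k-1}$; this collapses the entire bookkeeping into a two-, three-, or four-way split on whether $I_1$ hits the target set. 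The two arguments are computing the same quantities (the paper's telescoping products are exactly a hand derivation of the exchangeability you invoke), but yours is shorter, less error-prone, and makes the structure transparent; the only thing the paper's version buys is that it never needs to appeal to the standard fact that sequential uniform draws without replacement induce the uniform distribution on $(k-1)$-subsets, which you should state explicitly (it is elementary, so this is a presentational point, not a gap). One caution: as printed, the pair formula in the lemma writes the coefficient of $q_{s,i}+q_{s,j}$ as $\tfrac{1}{g_{d,k}}=\tfrac{(k-1)(k-2)}{(d-1)(d-2)}$, whereas your regrouping (and the paper's own proof, and Lemma \ref{lemma:ICML25:sampling_probability:two_varianbles}) yields $\tfrac{(k-1)(d-k)}{(d-1)(d-2)}$; that is a typo in the lemma statement rather than an error in your derivation, so your final expression is the right one. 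Your one-line argument for unbiasedness, via measurability of $\mathbf{q}_s$ with respect to $B_1,\ldots,B_{s-1}$, matches the paper's.
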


\begin{proof}[of Lemma \ref{lemma:ICML25:sampling_probability}]
    \begin{table}[!t]
      \centering
      \begin{tabular}{|r|r|r|r|r|r|}
      \hline
      $(m,n)$  & $1$         &     $2$       & $3$       & $\ldots$ & $k$ \\
      \hline
      $1$      & -           &     $p_{1,2}[i,j]$ & $p_{1,3}[i,j]$ & $\ldots$ & $p_{1,k}[i,j]$\\
      $2$      & $p_{2,1}[i,j]$   &     -         & $p_{2,3}[i,j]$ & $\ldots$ & $p_{2,k}[i,j]$\\
      $3$      & $p_{3,1}[i,j]$   &     $p_{3,2}[i,j]$ & -         & $\ldots$ & $p_{3,k}[i,j]$\\
      $\ldots$ & $\ldots$    &     $\ldots$  & $\ldots$  & $\ldots$ & $\ldots$\\
      $k$      & $p_{k,1}[i,j]$   &     $p_{k,2}[i,j]$ & $p_{k,3}[i,j]$ & $\ldots$ & -\\
      \hline
      \end{tabular}
      \caption{The probabilities of the event $(i,j)\in B_s$.
        The notation ``-'' means the event is invalid.}
      \label{tab:ICML2025:probability_matrix}
    \end{table}
    It is easy to prove the first equality in \eqref{eq:ICML25:sampling_probability}.
    For any $i\in B_s$,
    summing all the probabilities selected at each sampling step $n=1,2,3,\ldots,k$,
    yields the following result.
    \begin{align*}
        \mathbb{P}[i\in B_s]
        =&
        p_{s,i}+(1-p_{s,i})\cdot\frac{1}{d-1}
        +\sum^k_{n=3}\underbrace{(1-p_{s,i})\cdot \prod^{n-2}_{r=1}\left(1-\frac{1}{d-r}\right)\frac{1}{d-(n-1)}}_
        {\mathbb{P}[i~\mathrm{is~selected~at~the}~n\text{-}\mathrm{th~round}]}\\
        =&p_{s,i}+(1-p_{s,i})\cdot\frac{k-1}{d-1}.
    \end{align*}
    Rearranging terms recoveries the first equality.
    It is more complicated to prove the second equality
    and the third equality in \eqref{eq:ICML25:sampling_probability}.
    We consider any pair of $(i,j)$,
    and analyze the probability $\mathbb{P}[i\neq j\in B_s]$.
    For clarity,
    we enumerate all of the combinations $(m,n)$
    where the $i$-th feature is selected during the $m$-th sampling
    and the $j$-th feature is selected during the $n$-th sampling
    in Table \ref{tab:ICML2025:probability_matrix},
    in which $p_{m,n}[i,j]$ is the corresponding probability.
    It is obvious that
    $\mathbb{P}[i\neq j\in B_s]=\sum_{m\neq n}p_{m,n}[i,j]$.
    Next we analyze $p_{n,m}[i,j]$.

    It is worth mentioning that the probability that the $i$-th feature is selected at the first sampling
    does not equal to the probability that the $j$-th feature is selected at the first sampling.
    Thus we must separately analyze the cases $m=1$ and $n=1$.
    For $m\geq2$ and $n\geq2$,
    the probability that the $i$-th feature selected at the $r$-th sampling
    is same with that of the $j$-th feature, in which $r\geq 2$.
    Thus we just analyze $p_{m,n}$ for all $m=2,3,\ldots$,
    which equals to $p_{m,n}$ for all $n=2,3,\ldots$
    by the symmetry.

    We first consider the case that the $i$-th feature is selected at the first sampling, i.e., $m=1$.
    \begin{align*}
        p_{1,2}[i,j]=&q_{s,i}\frac{1}{d-1},\\
        \forall n\geq 3, \quad p_{1,n}[i,j]=&q_{s,i}\cdot \prod^{n-2}_{r=1} \left(1-\frac{1}{d-r}\right)\frac{1}{d-(n-1)}
        =q_{s,i}\frac{1}{d-1}.
    \end{align*}
    If the $j$-th feature is sampled at the first sampling, i.e., $n=1$,
    then we have
    \begin{align*}
            p_{2,1}[i,j]=&q_{s,j}\frac{1}{d-1},\\
        \forall m\geq 3, \quad p_{m,1}[i,j]=&q_{s,j}\cdot \prod^{m-2}_{r=1} \left(1-\frac{1}{d-r}\right)\frac{1}{d-(m-1)}
        =q_{s,j}\frac{1}{d-1}.
    \end{align*}
    Then we consider the case $m=2$.
    \begin{align*}
        p_{2,3}[i,j]
        =&(1-q_{s,i}-q_{s,j})\cdot \frac{1}{d-1}\cdot \frac{1}{d-2},\\
        \forall n\geq 4,\quad p_{2,n}[i,j]
        =&(1-q_{s,i}-q_{s,j})\cdot \frac{1}{d-1}\cdot
        \prod^{n-2}_{r=2}\left(1-\frac{1}{d-r}\right)\cdot \frac{1}{d-(n-1)}\\
        =&\frac{1-q_{s,i}-q_{s,j}}{(d-1)(d-2)}.
    \end{align*}
    By the symmetry, the probability of sampling $i$ and $j$ for $m\geq 2$ and $n= 2$ is
    $$
        \forall m\geq 3\quad p_{m,2}[i,j]
        =(1-q_{s,i}-q_{s,j})\cdot \frac{1}{d-1}\cdot \frac{1}{d-2}.
    $$
    Finally, we consider the case $m\geq 3$.
    \begin{align*}
        p_{m,m+1}[i,j]
        =&(1-q_{s,i}-q_{s,j})\cdot \prod^{m-2}_{r=1}
         \left(1-\frac{2}{d-r}\right)\frac{1}{d-(m-1)}\cdot \frac{1}{d-m}\\
        =&\frac{1-q_{s,i}-q_{s,j}}{(d-1)(d-2)},\\
         \forall n\geq m+2,\quad p_{m,n}[i,j]
        =&(1-q_{s,i}-q_{s,j})\prod^{m-2}_{r=1}\left(1-\frac{2}{d-r}\right)\frac{1}{d-(m-1)}
         \prod^{n-m-2}_{r=0}\frac{1-\frac{1}{d-m-r}}{d-(n-1)}\\
        =&\frac{1-q_{s,i}-q_{s,j}}{(d-1)(d-2)}.
    \end{align*}
    By the symmetry,
    for the case of $n\geq 3$,
    we have
    $$
        \forall m\geq n+1\quad p_{m,n}[i,j]
        =(1-q_{s,i}-q_{s,j})\cdot \frac{1}{d-1}\cdot \frac{1}{d-2}.
    $$
    Combining all of the above results yields
    \begin{align*}
        \mathbb{P}[i\neq j\in B_s]
        =&\sum_{n\neq 1}p_{1,n}[i,j]+\sum^{k}_{m=2}\sum_{n\neq m}p_{m,n}[i,j]\\
        =&q_{s,i}\cdot\frac{k-1}{d-1}+\sum^{k}_{m=2}\left(\frac{q_{s,j}}{d-1}+(1-q_{s,i}-q_{s,j})\frac{k-2}{(d-1)(d-2)}\right)\\
        =&q_{s,i}\cdot\frac{k-1}{d-1}+(k-1)\cdot\left(\frac{q_{s,j}}{d-1}+\left(1-q_{s,i}-q_{s,j}\right)
        \frac{k-2}{(d-1)(d-2)}\right)\\
        =&\frac{(k-1)(k-2)}{(d-1)(d-2)}+
        \frac{(k-1)(d-k)}{(d-1)(d-2)}\cdot(q_{s,i}+q_{s,j}),
    \end{align*}
    which recoveries the second equality in \eqref{eq:ICML25:sampling_probability}.

    Finally,
    we will prove the third equality.
    We consider any pair of $(i,j,r)$,
    and analyze $\mathbb{P}[i\neq j\neq r\in B_t]$.
    We can enumerate all combinations $(m,n,o)$
    where the $i$-th feature is selected during the $m$-th sampling,
    the $j$-th feature is selected during the $n$-th sampling,
    and the $r$-th feature is selected during the $o$-th sampling.
    We consider four cases.
    \begin{itemize}
      \item $m=1$\\
            Assuming that the $i$-th feature was selected at the first sampling step.
            We only need to compute $\mathbb{P}[j\neq r\in B_s]$ following the second equality
            in \eqref{eq:ICML25:sampling_probability}.
            To be specific,
            we will sample $k-1$ indexes from $[d]\setminus \{m\}$ without replacement.
            Thus we have
            $$
                \mathbb{P}[i\neq j\neq r\in B_s]
                =q_{s,i}\cdot \left(\frac{(k-2)(k-3)}{(d-2)(d-3)}
                +\frac{(k-2)(d-k)}{(d-2)(d-3)}\left(\frac{1}{d-1}+\frac{1}{d-1}\right)\right),
            $$
            in which we use $(d-1,k-1,\frac{1}{d-1},\frac{1}{d-1})$
            to replace the value of $(d,k,q_{t,i},q_{t,j})$.
      \item $n=1$\\
            The analysis is same with that of $m=1$.
            $$
                \mathbb{P}[i\neq j\neq r\in B_s]
                =q_{s,j}\cdot \left(\frac{(k-2)(k-3)}{(d-2)(d-3)}
                +\frac{(k-2)(d-k)}{(d-2)(d-3)}\cdot\frac{2}{d-1}\right).
            $$
      \item $o=1$\\
            The analysis is also same with that of $m=1$.
            $$
                \mathbb{P}[i\neq j\neq r\in B_s]
                =q_{s,r}\cdot \left(\frac{(k-2)(k-3)}{(d-2)(d-3)}
                +\frac{(k-2)(d-k)}{(d-2)(d-3)}\cdot\frac{2}{d-1}\right).
            $$
      \item $m\neq 1, n\neq 1, o\neq 1$:\\
            Assuming that the $u$-th feature satisfying $u\neq i, u\neq j, u\neq r$ has been selected.
            Then we will sample $k-1$ indexes from $[d]\setminus \{m\}$ without replacement.
            By the analyzing of the second equality in \eqref{eq:ICML25:sampling_probability},
            it is easy to be verified that
            the probabilities that any combination of $(m,n,o)$ for $m\geq 2, n\geq 2, o\geq 2$
            are the same.
            In this case,
            the number of combinations of $(m,n,o)$ is $(k-1)(k-2)(k-3)$.
            Without loss of generality,
            assuming that $2=m<n<o$.
            Thus
            \begin{align*}
                &\mathbb{P}[i\neq j\neq r\in B_s]\\
                =&\left(1-q_{s,i}-q_{s,j}-q_{s,r}\right) (k-1)(k-2)(k-3)
                \left(\frac{(k-2)(d-k)}{(d-2)(d-3)}
                +\frac{(k-2)(d-k)}{(d-2)(d-3)}\frac{2}{d-1}\right)\\
                =&\left(1-q_{s,i}-q_{s,j}-q_{s,r}\right)\cdot (k-1)(k-2)(k-3)\cdot
                \left(\frac{1}{d-1}\cdot \frac{1}{d-2}\cdot \frac{1}{d-3}\right).
            \end{align*}
    \end{itemize}
    Summing all results gives
    \begin{align*}
        &\mathbb{P}[i\neq j\neq r\in B_s]\\
        =&\left(\frac{(k-2)(k-3)}{(d-2)(d-3)}
            +\frac{(k-2)(d-k)}{(d-2)(d-3)}\cdot\frac{2}{d-1}\right)(q_{s,i}+q_{s,j}+q_{s,r})+\\
            &\left(1-q_{s,i}-q_{s,j}-q_{s,r}\right)
            \cdot \frac{(k-1)(k-2)(k-3)}{(d-1)(d-2)(d-3)}\\
        =&\frac{(k-1)(k-2)}{(d-1)(d-2)}\cdot (q_{s,i}+q_{s,j}+q_{s,r})
        +\left(1-q_{s,i}-q_{s,j}-q_{s,r}\right)
            \cdot \frac{(k-1)(k-2)(k-3)}{(d-1)(d-2)(d-3)}\\
        =&\frac{(k-1)(k-2)(k-3)}{(d-1)(d-2)(d-3)}+
        \frac{(k-1)(k-2)(d-k)}{(d-1)(d-2)(d-3)}\cdot (q_{s,i}+q_{s,j}+q_{s,r}),
    \end{align*}
    which concludes the proof.

    As $B_s$ depends on $B_1,B_2,\ldots,B_{s-1}$,
    thus
    $\hat{\mathbf{w}}_0$, $\hat{\mathbf{w}}_1,\ldots,\hat{\mathbf{w}}_s$ are not independent.
    Let $\mathbb{E}_s[\cdot]=\mathbb{E}[\cdot\vert B_1,\ldots,B_{s-1}]$ be the conditional expectation.
    It is easy to show that
    $\mathbb{E}_s[\hat{x}_{s^2,i}]=x_{s^2,i}$ for all $i\in[d]$,
    implying $\mathbb{E}_s[\hat{\mathbf{x}}_{s^2}]={\mathbf{x}}_{s^2}$.
    Similarly,
    it must be $\mathbb{E}_s[h_{s^2}[i,i]]=x^2_{s^2,i}$
    and $\mathbb{E}_s[h_{s^2}[i,j]]=x_{s^2,i}x_{s^2,j}$ for all $i\neq j$.
    Thus $\mathbb{E}_s[\mathbf{h}_{s^2}]=\mathbf{x}_{s^2}{\mathbf{x}}^\top_{s^2}$.
\end{proof}

\begin{Mylemma}
\label{lemma:ICML25:aux_inequality}
    For any $a>0,b>0,c>0,d>0$ and $0\leq x\leq y\leq 1$,
    if $a+b=c+d=1$ and $a\leq c$, then it must be
    $$
        \frac{a+bx}{c+dy}\leq \frac{a+b}{c+d}.
    $$
\end{Mylemma}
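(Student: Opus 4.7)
The plan is to reduce the claim to a simple monotonicity argument. First, observe that since $a+b = c+d = 1$, the right-hand side equals $1$, so the lemma is equivalent to showing the single inequality
\[
    a + bx \le c + dy.
\]
Moreover, since $c > 0$ and $d, y \ge 0$ give $c + dy > 0$, dividing is harmless and the rearrangement is valid.

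Next, I would substitute $b = 1-a$ and $d = 1-c$ (from $a+b = c+d = 1$) to rewrite the target inequality as
\[
    a(1-x) + x \;\le\; c(1-y) + y.
\]
I would then interpolate through the intermediate quantity $c(1-x) + x$ and prove the two-step chain
\[
    a(1-x) + x \;\le\; c(1-x) + x \;\le\; c(1-y) + y.
\]
The first inequality follows because $1 - x \ge 0$ and $a \le c$, so increasing the coefficient of $(1-x)$ can only increase the expression. The second inequality follows from rewriting $c(1-y) + y - \bigl(c(1-x) + x\bigr) = (1-c)(y-x)$, which is nonnegative because $1 - c = d > 0$ and $y \ge x$.

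There is really no main obstacle here; the whole argument is a one-line convex-combination-style calculation. The only small care needed is to confirm that the denominators are strictly positive (so the original ratio inequality and the cleared-denominator inequality are equivalent), and that both interpolation steps use the correct signs of $1-x$ and $1-c$, which is guaranteed by the hypotheses $x \le 1$ and $c < 1$ (the latter because $d = 1 - c > 0$).
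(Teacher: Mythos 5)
Your proposal is correct and follows essentially the same route as the paper: both reduce the ratio inequality to the linear inequality $a+bx\le c+dy$ (the paper via cross-multiplication, you via noting the right-hand side equals $1$), substitute $b=1-a$, $d=1-c$, and finish using exactly the sign conditions $a\le c$, $x\le y$, $1-x\ge 0$, $1-c>0$. Your interpolation through the intermediate quantity $c(1-x)+x$ is a slightly cleaner packaging of the same termwise comparison the paper performs after expanding, but it is not a different argument in substance.
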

\begin{proof}[of Lemma \ref{lemma:ICML25:aux_inequality}]
    The above inequality is equivalent to
    \begin{align*}
        &(a+bx)(c+d) \leq (a+b)(c+dy)\\
        \Leftrightarrow& ad+bcx+bdx \leq ady+bc+bdy\\
        \Leftrightarrow& a(1-c)+(1-a)cx+(1-a)(1-c)x \leq a(1-c)y+(1-a)c+(1-a)(1-c)y\\
        \Leftrightarrow& a(1-c)(1-y)\leq c(1-a)(1-x)+(1-a)(1-c)(y-x).
    \end{align*}
    By applying the constraints on these variables,
    the following two inequalities can be derived.
    $$
        1-y\leq 1-x, \quad 1-c \leq 1-a.
    $$
    It is obvious that the inequality in Lemma \ref{lemma:ICML25:aux_inequality} holds.
\end{proof}

\begin{Mylemma}
\label{lemma:ICML25:ratio_probabilities}
    Let $k\geq 3$.
    At any $\tau\geq 1$,
    for any $i\neq j\neq r\in[d]$,
    it must be
    $$
        \frac{\mathbb{P}[i,j,r\in B_{\tau}]}
        {\mathbb{P}[i,j\in B_{\tau}]\cdot\mathbb{P}[i,r\in B_{\tau}]}\leq \frac{d-1}{k-1}.
    $$
\end{Mylemma}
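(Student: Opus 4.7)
The plan is to compute the ratio in closed form using the explicit probability formulas already given in Lemma \ref{lemma:ICML25:sampling_probability}, and then reduce the resulting inequality to an elementary algebraic statement in the marginals $q_{\tau,i},q_{\tau,j},q_{\tau,r}$. To shorten notation, I would set $\alpha=\frac{k-1}{d-1}$, $\beta=\frac{k-2}{d-2}$, $\gamma=\frac{k-3}{d-3}$ and rewrite the three probabilities as
$$
\mathbb{P}[i,j\in B_\tau]=\alpha\bigl[\beta+(1-\beta)(q_{\tau,i}+q_{\tau,j})\bigr],\quad
\mathbb{P}[i,r\in B_\tau]=\alpha\bigl[\beta+(1-\beta)(q_{\tau,i}+q_{\tau,r})\bigr],
$$
$$
\mathbb{P}[i,j,r\in B_\tau]=\alpha\beta\bigl[\gamma+(1-\gamma)(q_{\tau,i}+q_{\tau,j}+q_{\tau,r})\bigr],
$$
which follows because $1-\beta=\frac{d-k}{d-2}$ and $1-\gamma=\frac{d-k}{d-3}$. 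In this form, the claim $\frac{\mathbb{P}[i,j,r]}{\mathbb{P}[i,j]\mathbb{P}[i,r]}\le\frac{1}{\alpha}$ is equivalent to the purely algebraic inequality
$$
\beta\bigl[\gamma+(1-\gamma)w\bigr]\le\bigl[\beta+(1-\beta)u\bigr]\bigl[\beta+(1-\beta)v\bigr],
$$
where I write $u=q_{\tau,i}+q_{\tau,j}$, $v=q_{\tau,i}+q_{\tau,r}$, and $w=q_{\tau,i}+q_{\tau,j}+q_{\tau,r}=u+v-q_{\tau,i}$.

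Next I would expand the right-hand side and substitute $w=u+v-q_{\tau,i}$; after canceling common terms and using the identities $(1-\beta)-(1-\gamma)=\gamma-\beta$ and $\beta-\gamma=\frac{d-k}{(d-2)(d-3)}\ge 0$, the required inequality collapses to
$$
\beta(\beta-\gamma)(u+v-1)\le (1-\beta)^2 uv+\beta(1-\gamma)\,q_{\tau,i}.
$$
I would then finish by a short case split. If $u+v\le 1$, the left-hand side is non-positive while the right-hand side is non-negative, so the bound is immediate. If $u+v>1$, I would invoke the key observation that $\mathbf{q}_\tau$ is a probability distribution, hence $w=q_{\tau,i}+q_{\tau,j}+q_{\tau,r}\le 1$, which forces $q_{\tau,i}\ge u+v-1>0$. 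Plugging this lower bound on $q_{\tau,i}$ into the inequality reduces it to
$$
-\beta(1-\beta)(u+v-1)\le(1-\beta)^2 uv,
$$
whose left side is non-positive and right side non-negative, completing the proof.

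The main obstacle I anticipate is the $u+v>1$ branch: a naive argument that discards $q_{\tau,i}$ from the right-hand side is not tight enough, and without exploiting the normalization $\sum_i q_{\tau,i}=1$ (which gives $q_{\tau,i}\ge u+v-1$ exactly when the difficult case arises) the desired factor $\frac{d-1}{k-1}$ is unattainable. Once this single structural input is identified, the rest is bookkeeping with the elementary identities $1-\beta=\frac{d-k}{d-2}$, $1-\gamma=\frac{d-k}{d-3}$, and $\beta-\gamma=\frac{d-k}{(d-2)(d-3)}$.
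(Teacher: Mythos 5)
Your proof is correct. Starting from the closed-form probabilities of Lemma \ref{lemma:ICML25:sampling_probability}, your factorizations $\mathbb{P}[i,j\in B_\tau]=\alpha[\beta+(1-\beta)u]$ and $\mathbb{P}[i,j,r\in B_\tau]=\alpha\beta[\gamma+(1-\gamma)w]$ are exact, the reduction to $\beta(\beta-\gamma)(u+v-1)\le(1-\beta)^2uv+\beta(1-\gamma)q_{\tau,i}$ checks out, and both branches of the case split close correctly (in the branch $u+v>1$ the normalization $w\le1$ indeed forces $q_{\tau,i}\ge u+v-1$, after which the residual inequality $-\beta(1-\beta)(u+v-1)\le(1-\beta)^2uv$ has nonpositive left side and nonnegative right side). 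The paper proves the lemma from the same starting point but finishes differently: it factors $\frac{d-1}{k-1}$ out of the ratio and bounds the remaining fraction by $1$ via the monotone-fraction comparison Lemma \ref{lemma:ICML25:aux_inequality}, rather than clearing denominators. Your route is more elementary and self-contained, avoiding the auxiliary lemma entirely, and it is arguably cleaner on hypotheses: the paper's application of Lemma \ref{lemma:ICML25:aux_inequality} implicitly takes $y=2q_{\tau,i}+q_{\tau,j}+q_{\tau,r}+\frac{d-k}{k-2}(q_{\tau,i}+q_{\tau,j})(q_{\tau,i}+q_{\tau,r})$, which can exceed $1$ and so falls outside that lemma's stated range $y\le1$ (the conclusion survives because a denominator exceeding $c+d=1$ only strengthens the bound, but this is left implicit). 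Both arguments ultimately hinge on the same structural input you single out, namely $q_{\tau,i}+q_{\tau,j}+q_{\tau,r}\le\sum_i q_{\tau,i}=1$: the paper uses it to keep the numerator factor $\gamma+(1-\gamma)w$ at most $1$, while you use it to lower-bound $q_{\tau,i}$ in the hard case.
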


\begin{proof}[of Lemma \ref{lemma:ICML25:ratio_probabilities}]
    By Lemma \ref{lemma:ICML25:sampling_probability} and Lemma \ref{lemma:ICML25:aux_inequality},
    we can obtain
    \begin{align*}
        &\frac{\mathbb{P}[i,j,r\in B_{\tau}]}{\mathbb{P}[i,j\in B_{\tau}]\cdot \mathbb{P}[i,r\in B_{\tau}]}\\
        =&
        \frac{\frac{(k-1)(k-2)(k-3)}{(d-1)(d-2)(d-3)}+
        \frac{(k-1)(k-2)(d-k)}{(d-1)(d-2)(d-3)}\cdot (q_{\tau-1,i}+q_{\tau-1,j}+q_{\tau-1,r})}
        {\left(\frac{(k-1)(k-2)}{(d-1)(d-2)}+
        \frac{(k-1)(d-k)}{(d-1)(d-2)}\cdot(q_{\tau-1,i}+q_{\tau-1,j})\right)
        \cdot \left(\frac{(k-1)(k-2)}{(d-1)(d-2)}+
        \frac{(k-1)(d-k)}{(d-1)(d-2)}\cdot(q_{\tau-1,i}+q_{\tau-1,r})\right)}\\
        =&\frac{d-1}{k-1}\cdot\frac{\frac{k-3}{d-3}+
        \frac{d-k}{d-3}\cdot (q_{\tau-1,i}+q_{\tau-1,j}+q_{\tau-1,r})}
        {\frac{k-2}{d-2}+ \frac{d-k}{d-2}(2q_{\tau-1,i}+q_{\tau-1,j}+q_{\tau-1,r})
        +\frac{(d-k)^2}{(d-2)(k-2)}(q_{\tau-1,i}+q_{\tau-1,j})(q_{\tau-1,i}+q_{\tau-1,r})}\\
        \leq&\frac{d-1}{k-1}\cdot\frac{\frac{k-3}{d-3}+
        \frac{d-k}{d-3}}{\frac{k-2}{d-2}+ \frac{d-k}{d-2}}=\frac{d-1}{k-1},
    \end{align*}
    which concludes the proof.
\end{proof}

\begin{Mylemma}
\label{lemma:ICML25:general_second_order_moment}
    For any $\tau\geq 1$ and $\mathbf{v}\in\mathbb{R}^d$,
    let $\mathbf{z}_{\tau}=\mathbf{h}_{\tau^2}\mathbf{v}
        -{\mathbf{x}}_{\tau^2}{\mathbf{x}}^\top_{\tau^2}\mathbf{v}$.
    Then
    \begin{align*}
        \forall i\in[d],\quad\mathbb{E}_{\tau}[z_{\tau,i}]=&0,\\
        \mathbb{E}_{\tau}\left[ z^2_{\tau,i}\right]=&
        \left(\frac{1}{\mathbb{P}[i\in B_{\tau}]}-1\right)x^4_{\tau^2,i}\cdot v^2_i
        +2\sum_{j\ne i}\left(
        \frac{1}{\mathbb{P}[i\in B_{\tau}]}-1\right)x^3_{\tau^2,i}x_{\tau^2,j}\cdot v_iv_j+\\
        &\sum_{j\neq i}\sum_{r\neq i}
        \left(\frac{\mathbb{P}[i,j,r\in B_{\tau}]}{\mathbb{P}[i,j\in B_{\tau}]\cdot \mathbb{P}[i,r\in B_{\tau}]}-1\right)
        x^2_{\tau^2,i}x_{\tau^2,j}x_{\tau^2,r}\cdot v_rv_j.
    \end{align*}
\end{Mylemma}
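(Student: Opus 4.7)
The plan is to expand $z_{\tau,i}$ as a linear combination of the zero-mean scalars $u_{ij} := h_{\tau^2}[i,j] - x_{\tau^2,i}\,x_{\tau^2,j}$ and then compute the resulting bilinear expectation case-by-case using the single- and pairwise-selection probabilities supplied by Lemma \ref{lemma:ICML25:sampling_probability}. The only new inputs needed beyond that lemma are the three-index probability (also in \ref{lemma:ICML25:sampling_probability}) and the simple indicator identities $\mathbb{I}_{i\in B_\tau}^{2}=\mathbb{I}_{i\in B_\tau}$, $\mathbb{I}_{i\in B_\tau}\cdot \mathbb{I}_{i,r\in B_\tau}=\mathbb{I}_{i,r\in B_\tau}$, and $\mathbb{I}_{i,j\in B_\tau}\cdot\mathbb{I}_{i,r\in B_\tau}=\mathbb{I}_{i,j,r\in B_\tau}$.

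First I would write $z_{\tau,i}=\sum_{j=1}^{d}u_{ij}v_{j}$. Combining the definition of $\mathbf{h}_{\tau^{2}}$ in \eqref{eq:COLT2025:estimators} with the second moment identity in Lemma \ref{lemma:ICML25:sampling_probability} yields $\mathbb{E}_{\tau}[h_{\tau^{2}}[i,j]]=x_{\tau^{2},i}\,x_{\tau^{2},j}$ for all $i,j$ (including $i=j$), whence $\mathbb{E}_{\tau}[u_{ij}]=0$ and therefore $\mathbb{E}_{\tau}[z_{\tau,i}]=0$. This disposes of the first claim.

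Second, I would expand $z_{\tau,i}^{2}=\sum_{j,r}u_{ij}u_{ir}v_{j}v_{r}$ and, taking $\mathbb{E}_{\tau}$, reduce the problem to evaluating $\mathbb{E}_{\tau}[h_{\tau^{2}}[i,j]\,h_{\tau^{2}}[i,r]]-x_{\tau^{2},i}^{2}x_{\tau^{2},j}x_{\tau^{2},r}$. I would split the double sum into three cases according to how $j,r$ relate to $i$. Case (a), $j=r=i$: the squared indicator collapses, giving $\mathbb{E}_{\tau}[h_{\tau^{2}}[i,i]^{2}]=x_{\tau^{2},i}^{4}/\mathbb{P}[i\in B_{\tau}]$ and producing the diagonal $v_{i}^{2}$ term with coefficient $1/\mathbb{P}[i\in B_{\tau}]-1$. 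Case (b), exactly one of $j,r$ equals $i$: here $\mathbb{I}_{i\in B_\tau}\cdot\mathbb{I}_{i,r\in B_\tau}=\mathbb{I}_{i,r\in B_\tau}$ so the $\mathbb{P}[i,r\in B_{\tau}]$ factor in the denominator cancels with the indicator expectation, leaving $x_{\tau^{2},i}^{3}x_{\tau^{2},r}/\mathbb{P}[i\in B_{\tau}]$; summing the two symmetric sub-cases $j=i,r\neq i$ and $j\neq i, r=i$ produces the factor of $2$ in the middle term of the stated formula. Case (c), neither $j$ nor $r$ equals $i$: the indicator product equals $\mathbb{I}_{i,j,r\in B_\tau}$, producing the ratio $\mathbb{P}[i,j,r\in B_{\tau}]/(\mathbb{P}[i,j\in B_{\tau}]\mathbb{P}[i,r\in B_{\tau}])$, and the $j=r$ sub-case collapses consistently because $\mathbb{P}[i,j,j\in B_{\tau}]=\mathbb{P}[i,j\in B_{\tau}]$, which is exactly what allows folding it into the double sum displayed in the lemma.

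The main obstacle is purely bookkeeping: verifying that the mixed case (b) contributes precisely twice the claimed cross term (rather than being accidentally merged into case (c)) and that the $j=r\neq i$ diagonal inside case (c) merges into the displayed double sum without clashing with case (a). Once those index-set accountings are pinned down, the proof is a direct substitution of the probabilities from Lemma \ref{lemma:ICML25:sampling_probability} into each of the three case expressions.
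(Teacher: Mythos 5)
Your proposal is correct and follows essentially the same route as the paper: expand $z_{\tau,i}$ into the zero-mean terms $h_{\tau^2}[i,j]-x_{\tau^2,i}x_{\tau^2,j}$, invoke the unbiasedness from Lemma \ref{lemma:ICML25:sampling_probability} for the first claim, and compute the second moment case-by-case via the indicator identities $\mathbb{I}_{i\in B_\tau}\mathbb{I}_{i,r\in B_\tau}=\mathbb{I}_{i,r\in B_\tau}$ and $\mathbb{I}_{i,j\in B_\tau}\mathbb{I}_{i,r\in B_\tau}=\mathbb{I}_{i,j,r\in B_\tau}$, with the $j=r$ sub-case folded into the double sum under the convention $\mathbb{P}[i,j,j\in B_\tau]=\mathbb{P}[i,j\in B_\tau]$. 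The paper organizes the expansion as (diagonal part $+$ off-diagonal sum)$^2$ rather than your flat double sum over $(j,r)$, but the three resulting cases and their evaluations are identical.
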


\begin{proof}[of Lemma \ref{lemma:ICML25:general_second_order_moment}]
    Substituting into the definition of $\mathbf{h}_{\tau^2}$,
    and using Lemma \ref{lemma:ICML25:sampling_probability},
    it is easy to verify that $\mathbb{E}_{\tau}[z_{\tau,i}]=0$ for all $i\in[d]$.
    It is more challenge to analyze the variance,
    as the even $i\in B_{\tau}$, $i\neq j \in B_{\tau}$ and
    $i \neq j\neq r\in B_{\tau}$ are not independent.
    Note that we have
    $\mathbb{I}_{i\in B_{\tau}}\cdot\mathbb{I}_{i,j\in B_{\tau}}
    =\mathbb{I}_{i,j\in B_{\tau}}$
    and $\mathbb{I}_{i,j\in B_{\tau}}\cdot \mathbb{I}_{i,r\in B_{\tau}}=
        \mathbb{I}_{i,j,r\in B_{\tau}}$.
    \begin{align*}
        &\mathbb{E}_{\tau}[z^2_{\tau,i}]\\
        =&\mathbb{E}_{\tau}\left[\left(\frac{x^2_{\tau^2,i}}{\mathbb{P}[i\in B_{\tau}]}
        \mathbb{I}_{i\in B_{\tau}}-x^2_{\tau^2,i}\right)^2 v^2_i+
        \left(\sum_{j\ne i}\left(
        \frac{x_{\tau^2,i}x_{\tau^2,j}}{\mathbb{P}[i,j\in B_{\tau}]}
        \mathbb{I}_{i,j\in B_{\tau}}-x_{\tau^2,i}x_{\tau^2,j}\right)v_j\right)^2\right]+\\
        &2\mathbb{E}_{\tau}\left[\left(\frac{x^2_{\tau^2,i}}{\mathbb{P}[i\in B_{\tau}]}
        \mathbb{I}_{i\in B_{\tau}}-x^2_{\tau^2,i}\right)\cdot v_i
        \left(\sum_{j\ne i}\left(
        \frac{x_{\tau^2,i}x_{\tau^2,j}}{\mathbb{P}[i,j\in B_{\tau}]}
        \mathbb{I}_{i,j\in B_{\tau}}-x_{\tau^2,i}x_{\tau^2,j}\right)v_j\right)\right]\\
        =&\left(\frac{1}{\mathbb{P}[i\in B_{\tau}]}-1\right)x^4_{\tau^2,i}\cdot v^2_i+
        2\sum_{j\ne i}\left(
        \frac{1}{\mathbb{P}[i\in B_{\tau}]}-1\right)x^3_{\tau^2,i}x_{\tau^2,j}\cdot v_iv_j+\\
        &\mathbb{E}_{\tau}\left[\sum_{j\neq i}\left(
        \frac{x_{\tau^2,i}x_{\tau^2,j}}{\mathbb{P}[i,j\in B_{\tau}]}
        \mathbb{I}_{i,j\in B_{\tau}}-x_{\tau^2,i}x_{\tau^2,j}\right)^2v^2_j\right]+\\
        &\mathbb{E}_{\tau}\left[\sum_{j\neq i}\sum_{r\neq i, r\neq j}\left(
        \frac{x_{\tau^2,i}x_{\tau^2,j}}{\mathbb{P}[i,j\in B_{\tau}]}
        \mathbb{I}_{i,j\in B_{\tau}}-x_{\tau^2,i}x_{\tau^2,j}\right)
        \left(
        \frac{x_{\tau^2,i}x_{\tau^2,r}}{\mathbb{P}[i,r\in B_{\tau}]}
        \mathbb{I}_{i,r\in B_{\tau}}-x_{\tau^2,i}x_{\tau^2,r}\right)v_rv_j\right]\\
        =&\left(\frac{1}{\mathbb{P}[i\in B_{\tau}]}-1\right)x^4_{\tau^2,i}\cdot v^2_i+
        2\sum_{j\ne i}\left(
        \frac{1}{\mathbb{P}[i\in B_{\tau}]}-1\right)x^3_{\tau^2,i}x_{\tau^2,j}\cdot v_iv_j+\\
        &\sum_{j\neq i}\sum_{r\neq i}
        \left(\frac{\mathbb{P}[i,j,r\in B_{\tau}]}{\mathbb{P}[i,j\in B_{\tau}]\cdot \mathbb{P}[i,r\in B_{\tau}]}-1\right)
        x^2_{\tau^2,i}x_{\tau^2,j}x_{\tau^2,r}\cdot v_rv_j.
    \end{align*}
    We conclude the proof.
\end{proof}

    By Lemma \ref{lemma:ICML25:general_second_order_moment},
    we give two types of upper bounds on the variance.

\begin{MyCoro}
\label{coro:data-dependent_second_moment}
    Assuming that $\Vert\mathbf{x}_t\Vert_{\infty}\leq 1,t\in[T]$.
    For any $ \tau\geq 1$,
    let $\mathbf{v}=\hat{\mathbf{w}}_{\tau-1}$ in Lemma \ref{lemma:ICML25:general_second_order_moment}.
    Then
    \begin{align*}
        \forall i\in[d],\quad
        \vert z_{\tau,i}\vert\leq&\frac{(d-1)(d-2)}{(k-1)(k-2)}\Vert\hat{\mathbf{w}}_{\tau-1}\Vert_1,\\
        \mathbb{E}_{\tau}\left[ z^2_{\tau,i}\right]\leq&
        \frac{2(d-1)}{k-1}\Vert\hat{\mathbf{w}}_{\tau-1}\Vert^2_1.
    \end{align*}
\end{MyCoro}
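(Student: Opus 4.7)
The plan is to derive both bounds as direct consequences of Lemma \ref{lemma:ICML25:sampling_probability}, Lemma \ref{lemma:ICML25:ratio_probabilities}, and Lemma \ref{lemma:ICML25:general_second_order_moment}, using the elementary bounds $|x_{\tau^2,j}|\leq 1$ and $|v_i|\leq \|\mathbf{v}\|_1$ throughout. Both statements are uniform over the randomness in $B_\tau$ (the first) or in expectation over it (the second), so the strategy is to plug in the explicit sampling probabilities and apply the triangle inequality termwise.

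For the pointwise bound on $|z_{\tau,i}|$, I would write
\[
z_{\tau,i} \;=\; (\mathbf{h}_{\tau^2}\mathbf{v})_i \;-\; x_{\tau^2,i}\,(\mathbf{x}_{\tau^2}^\top \mathbf{v})
\]
and split into the two events $i\in B_\tau$ and $i\notin B_\tau$. In the second case $(\mathbf{h}_{\tau^2}\mathbf{v})_i=0$ and the claim reduces to $|x_{\tau^2,i}(\mathbf{x}_{\tau^2}^\top\mathbf{v})|\leq \|\mathbf{v}\|_1\leq g_{d,k}\|\mathbf{v}\|_1$. In the first case, bound the diagonal contribution by $\frac{d-1}{k-1}|v_i|$ using $\mathbb{P}[i\in B_\tau]\geq \frac{k-1}{d-1}$ from Lemma \ref{lemma:ICML25:sampling_probability}, and bound each off-diagonal term by $g_{d,k}|v_j|$ using $\mathbb{P}[i,j\in B_\tau]\geq \frac{(k-1)(k-2)}{(d-1)(d-2)}$. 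Since $g_{d,k}\geq \frac{d-1}{k-1}$ for $k\geq 3$, summing and adding back the $|x_{\tau^2,i}\mathbf{x}_{\tau^2}^\top\mathbf{v}|\leq \|\mathbf{v}\|_1$ term yields the advertised bound (the factor $g_{d,k}$ absorbs the lower-order contribution).

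For the variance, I would apply Lemma \ref{lemma:ICML25:general_second_order_moment} with $\mathbf{v}=\hat{\mathbf{w}}_{\tau-1}$ and bound each of the three terms by $\frac{d-k}{k-1}\|\mathbf{v}\|_1^2$ (up to small constants). Specifically, from Lemma \ref{lemma:ICML25:sampling_probability} we have $\frac{1}{\mathbb{P}[i\in B_\tau]}-1\leq \frac{d-k}{k-1}$, which handles the first two summands after using $|x_{\tau^2,\cdot}|\leq 1$ and $|v_i|\sum_j |v_j|\leq \|\mathbf{v}\|_1^2$. The third summand is where Lemma \ref{lemma:ICML25:ratio_probabilities} is essential: it gives $\frac{\mathbb{P}[i,j,r\in B_\tau]}{\mathbb{P}[i,j\in B_\tau]\mathbb{P}[i,r\in B_\tau]}\leq \frac{d-1}{k-1}$, so the coefficient $\bigl(\cdot-1\bigr)$ has magnitude at most $\frac{d-k}{k-1}$, and the double sum $\sum_{j,r}|v_j||v_r|$ contributes exactly $\|\mathbf{v}\|_1^2$. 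Collecting the three terms gives a constant multiple of $\frac{d-1}{k-1}\|\mathbf{v}\|_1^2$, which matches the stated bound $\frac{2(d-1)}{k-1}\|\mathbf{v}\|_1^2$ after combining the small numerical constants.

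The main obstacle is bookkeeping the signs in the third term of the variance formula, since $v_j v_r$ can be negative and the coefficient $\frac{\mathbb{P}[i,j,r\in B_\tau]}{\mathbb{P}[i,j\in B_\tau]\mathbb{P}[i,r\in B_\tau]}-1$ can be positive or negative. The clean way around this is to pass to absolute values inside the sum, using $|\mathbb{P}[i,j,r\in B_\tau]/(\mathbb{P}[i,j\in B_\tau]\mathbb{P}[i,r\in B_\tau])-1|\leq \max\{1,\frac{d-k}{k-1}\}\leq \frac{d-k}{k-1}$ (valid since $d\geq 2k-1$ in the regime $k\leq d-3$ with $k\geq 3$ that the paper treats), so the crude $\|\mathbf{v}\|_1^2$ bound goes through without cancellation issues. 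Everything else is routine manipulation of the probabilities already computed in Lemma \ref{lemma:ICML25:sampling_probability}.
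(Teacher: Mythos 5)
Your bound on $\vert z_{\tau,i}\vert$ is fine and matches the paper's. The variance bound, however, has a genuine gap in the third term. The double sum in Lemma \ref{lemma:ICML25:general_second_order_moment} runs over all $j\neq i$, $r\neq i$, \emph{including the diagonal} $r=j$, and Lemma \ref{lemma:ICML25:ratio_probabilities} only applies when $i,j,r$ are pairwise distinct. On the diagonal the coefficient degenerates to $\frac{1}{\mathbb{P}[i,j\in B_{\tau}]}-1$, which can be as large as $g_{d,k}-1=\frac{(d-1)(d-2)}{(k-1)(k-2)}-1$ when $q_{\tau,i}=q_{\tau,j}=0$; that is far larger than $\frac{d-1}{k-1}$ when $d\gg k$, so your uniform coefficient bound of $\frac{d-k}{k-1}$ on the whole double sum is false. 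Indeed, for a \emph{general} vector $\mathbf{v}$ the claimed inequality $\mathbb{E}_{\tau}[z^2_{\tau,i}]\leq \frac{2(d-1)}{k-1}\Vert\mathbf{v}\Vert^2_1$ simply does not hold (take $\mathbf{v}=\mathbf{e}_j$ with $\hat{w}_{\tau-1,i}=\hat{w}_{\tau-1,j}=0$ and $x_{\tau^2,i}=x_{\tau^2,j}=1$: the variance is $g_{d,k}-1$ while $\Vert\mathbf{v}\Vert^2_1=1$); the correct general-$\mathbf{v}$ statement is Corollary \ref{coro:data-independent_second_moment}, where a $g_{d,k}\Vert\mathbf{v}\Vert^2_2$ term survives.

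The missing idea is precisely the point of the corollary: because $\mathbf{v}=\hat{\mathbf{w}}_{\tau-1}$ and the sampling distribution satisfies $q_{\tau,j}=\vert\hat{w}_{\tau-1,j}\vert/\Vert\hat{\mathbf{w}}_{\tau-1}\Vert_1$, one writes $v^2_j=q_{\tau,j}\,\vert\hat{w}_{\tau-1,j}\vert\,\Vert\hat{\mathbf{w}}_{\tau-1}\Vert_1$ and lets the $q_{\tau,j}$ in the numerator cancel against the $q_{\tau,j}$-dependent part of $\mathbb{P}[i,j\in B_{\tau}]=\frac{(k-1)(k-2)}{(d-1)(d-2)}+\frac{(k-1)(d-k)}{(d-1)(d-2)}(q_{\tau,i}+q_{\tau,j})$, which yields $\frac{v^2_j}{\mathbb{P}[i,j\in B_{\tau}]}\leq \frac{d-1}{k-1}\vert\hat{w}_{\tau-1,j}\vert\,\Vert\hat{\mathbf{w}}_{\tau-1}\Vert_1$ (the paper uses the same cancellation on the $\frac{1}{\mathbb{P}[i\in B_{\tau}]}$ terms as well, getting coefficient $1$ rather than $\frac{d-k}{k-1}$ there). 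Your proof never uses the alignment between $\mathbf{v}$ and $\mathbf{q}_{\tau}$, so it cannot reach the stated constant. A secondary slip: $3\leq k\leq d-3$ does not imply $d\geq 2k-1$ (e.g.\ $d=10$, $k=7$), so your justification for $\max\{1,\frac{d-k}{k-1}\}\leq\frac{d-k}{k-1}$ is also incorrect, though that alone would cost only a constant.
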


\begin{proof}[of Corollary \ref{coro:data-dependent_second_moment}]
    By Lemma \ref{lemma:ICML25:sampling_probability},
    we can obtain
    \begin{align*}
        \vert z_{\tau,i}\vert
        =&\left\vert\sum^d_{j=1}\left(
        \frac{x_{\tau^2,i}x_{\tau^2,j}}{\mathbb{P}[i,j\in B_{\tau}]}
        \mathbb{I}_{i,j\in B_{\tau}}-x_{\tau^2,i}x_{\tau^2,j}\right)\hat{ w}_{\tau-1,j}\right\vert\\
        \leq& \left\vert \left(\frac{d-1}{k-1}-1\right)\vert \hat{w}_{\tau-1,i}\vert
        +\sum_{j\neq i}\left(\frac{(d-1)(d-2)}{(k-1)(k-2)}-1\right)\cdot \vert \hat{w}_{\tau-1,j}\vert \right\vert\\
        \leq&\frac{(d-1)(d-2)}{(k-1)(k-2)}\Vert\hat{\mathbf{w}}_{\tau-1}\Vert_1.
    \end{align*}
    Next we analyze the second-order moment.
    It is easy to show that the second-order moment is $O(\frac{d-1}{k-1})$.
    The technical challenge lies in maintaining a small constant.
    To this end,
    we will use Lemma \ref{lemma:ICML25:ratio_probabilities}.
    By Lemma \ref{lemma:ICML25:general_second_order_moment},
    we just need to analyze the following three terms.
    \begin{align*}
        \left(\frac{1}{\mathbb{P}[i\in B_{\tau}]}-1\right)x^4_{\tau^2,i}\cdot v^2_i
        \leq& \frac{1}{\frac{k-1}{(d-1)q_{\tau,i}}+\frac{d-k}{d-1}}\vert\hat{w}_{\tau-1,i}\vert\cdot
        \Vert \hat{\mathbf{w}}_{\tau-1}\Vert_1-\vert\hat{w}_{\tau-1,i}\vert^2,\\
        \leq&\vert\hat{w}_{\tau-1,i}\vert\cdot
        \Vert \hat{\mathbf{w}}_{\tau-1}\Vert_1-\vert\hat{w}_{\tau-1,i}\vert^2.\\
        \left(\frac{1}{\mathbb{P}[i\in B_{\tau}]}-1\right)x^3_{\tau^2,i}x_{\tau^2,j}\cdot v_iv_j
        \leq& \vert\hat{w}_{\tau-1,j}\vert\cdot\Vert \hat{\mathbf{w}}_{\tau-1}\Vert_1
        -\vert\hat{w}_{\tau-1,i}\vert\cdot\vert\hat{w}_{\tau-1,j}\vert.
    \end{align*}
    If $r=j$, then we have
    \begin{align*}
        \left(\frac{\mathbb{P}[i,j,r\in B_{\tau}]}{\mathbb{P}[i,j\in B_{\tau}]\cdot \mathbb{P}[i,r\in B_{\tau}]}-1\right)
        x^2_{\tau^2,i}x_{\tau^2,j}&x_{\tau^2,r}v_rv_j
        =\left(\frac{1}{\mathbb{P}[i,j\in B_{\tau}]}-1\right)
        x^2_{\tau^2,i}x_{\tau^2,j}x_{\tau^2,r} v^2_j\\
        \leq&\frac{1}{\frac{(k-1)(k-2)}{(d-1)(d-2)q_{\tau,j}}+\frac{(k-1)(d-k)}{(d-1)(d-2)}}
        \cdot \vert\hat{w}_{\tau-1,j}\vert\cdot\Vert\hat{\mathbf{w}}_{\tau-1}\Vert_1\\
        \leq&\frac{d-1}{k-1}\cdot
        \vert\hat{w}_{\tau-1,j}\vert\cdot\Vert\hat{\mathbf{w}}_{\tau-1}\Vert_1.
    \end{align*}
    If $r\neq j$,
    then by Lemma \ref{lemma:ICML25:ratio_probabilities},
    we have
    \begin{align*}
        &\left(\frac{\mathbb{P}[i,j,r\in B_{\tau}]}{\mathbb{P}[i,j\in B_{\tau}]\cdot \mathbb{P}[i,r\in B_{\tau}]}-1\right)
        x^2_{\tau^2,i}x_{\tau^2,j}x_{\tau^2,r}\cdot v_rv_j
        \leq\left(\frac{d-1}{k-1}-1\right) \cdot\vert \hat{w}_{\tau-1,r}\vert\cdot \vert\hat{w}_{\tau-1,j}\vert.
    \end{align*}
    Summing the above results yields
    \begin{align*}
        \mathbb{E}_{\tau}\left[ z^2_{\tau,i}\right]\leq&
        \vert\hat{w}_{\tau-1,i}\vert\cdot
        \Vert \hat{\mathbf{w}}_{\tau-1}\Vert_1
        +2\sum_{j\ne i}\vert\hat{w}_{\tau-1,j}\vert\cdot\Vert\hat{\mathbf{w}}_{\tau-1}\Vert_1
        -\Vert\hat{\mathbf{w}}_{\tau-1}\Vert^2_1+\\
        &\frac{d-1}{k-1}
        \sum_{j\neq i}\vert \hat{w}_{\tau-1,j}\vert\cdot\Vert\hat{\mathbf{w}}_{\tau-1}\Vert_1
        +\left(\frac{d-1}{k-1}-1\right)
        \sum_{j\neq i}\sum_{r\neq i,r\neq j}\vert\hat{w}_{\tau-1,r}\vert\cdot\vert\hat{w}_{\tau-1,j}\vert\\
        =&\frac{2(d-1)}{k-1}\Vert\hat{\mathbf{w}}_{\tau-1}\Vert^2_1,
    \end{align*}
    which concludes the proof.
\end{proof}

\begin{MyCoro}
\label{coro:data-independent_second_moment}
    Assuming that $\Vert\mathbf{x}_t\Vert_{\infty}\leq 1,t\in[T]$.
    Let $\mathbf{v}$ be any vector in $\mathbb{R}^d$ in Lemma \ref{lemma:ICML25:general_second_order_moment}.
    Then
    \begin{align*}
        \forall i\in[d],\quad
        \vert z_{\tau,i}\vert\leq&\frac{(d-1)(d-2)}{(k-1)(k-2)}\Vert \mathbf{v}\Vert_1,\\
        \mathbb{E}_{\tau}\left[ z^2_{\tau,i}\right]\leq&
        \left(\frac{(d-1)(d-2)}{(k-1)(k-2)}-1\right)\Vert \mathbf{v}\Vert^2_2
        +\left(\frac{d-1}{k-1}-1\right)\left(\Vert \mathbf{v}\Vert^2_1-\Vert \mathbf{v}\Vert^2_2\right).
    \end{align*}
\end{MyCoro}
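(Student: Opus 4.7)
The plan is to closely parallel the proof of Corollary~\ref{coro:data-dependent_second_moment}, applied to an arbitrary $\mathbf{v}\in\mathbb{R}^d$ rather than to the specific choice $\mathbf{v}=\hat{\mathbf{w}}_{\tau-1}$ that is aligned with the sampling distribution $\mathbf{q}_\tau$. For the pointwise bound I would write
\[
    z_{\tau,i}=\bigl(h_{\tau^2}[i,i]-x_{\tau^2,i}^{2}\bigr)v_i+\sum_{j\ne i}\bigl(h_{\tau^2}[i,j]-x_{\tau^2,i}x_{\tau^2,j}\bigr)v_j,
\]
and invoke the probability lower bounds $\mathbb{P}[i\in B_\tau]\ge \tfrac{k-1}{d-1}$ and $\mathbb{P}[i,j\in B_\tau]\ge \tfrac{(k-1)(k-2)}{(d-1)(d-2)}$ from Lemma~\ref{lemma:ICML25:sampling_probability:two_varianbles}, together with $\Vert\mathbf{x}_{\tau^2}\Vert_{\infty}\le 1$, to bound $|h_{\tau^2}[i,j]-x_{\tau^2,i}x_{\tau^2,j}|\le g_{d,k}$ uniformly in $(i,j)$, regardless of whether the relevant indices lie in $B_\tau$. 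Summing over $j$ then gives $|z_{\tau,i}|\le g_{d,k}\Vert\mathbf{v}\Vert_1$.

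For the second-moment bound I would substitute $\mathbf{v}$ into the closed form of Lemma~\ref{lemma:ICML25:general_second_order_moment} and split the three sums according to whether they involve a squared coordinate $v_j^{2}$ (diagonal) or a genuine cross-product $v_jv_r$ with $j\ne r$ (off-diagonal). The diagonal contributions come only from the $v_i^{2}$ term in the first summand, with coefficient $\tfrac{1}{\mathbb{P}[i\in B_\tau]}-1\le \tfrac{d-1}{k-1}-1$, and from the $j=r$ part of the third summand, with coefficient $\tfrac{1}{\mathbb{P}[i,j\in B_\tau]}-1\le g_{d,k}-1$; since $g_{d,k}-1\ge \tfrac{d-1}{k-1}-1$, both can be dominated by $g_{d,k}-1$, producing exactly $(g_{d,k}-1)\Vert\mathbf{v}\Vert_2^{2}$. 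For the off-diagonal contributions I would pass to absolute values via $x_{\tau^2,j}x_{\tau^2,r}v_jv_r\le |v_jv_r|$, bound the second-summand coefficient by $\tfrac{d-1}{k-1}-1$ directly, and bound the $j\ne r$ coefficient of the third summand by the same quantity via Lemma~\ref{lemma:ICML25:ratio_probabilities}. Using the identity
\[
    \Vert\mathbf{v}\Vert_1^{2}-\Vert\mathbf{v}\Vert_2^{2}=2\sum_{j\ne i}|v_iv_j|+\sum_{\substack{j,r\ne i\\ j\ne r}}|v_jv_r|,
\]
the off-diagonal sum collapses neatly to $\bigl(\tfrac{d-1}{k-1}-1\bigr)(\Vert\mathbf{v}\Vert_1^{2}-\Vert\mathbf{v}\Vert_2^{2})$.

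The main obstacle is the absence of the sign-alignment between $\mathbf{v}$ and $\mathbf{q}_\tau$ that Corollary~\ref{coro:data-dependent_second_moment} exploits through the identity $q_{\tau,j}\Vert\hat{\mathbf{w}}_{\tau-1}\Vert_1=|\hat{w}_{\tau-1,j}|$: here one can no longer rewrite a $1/\mathbb{P}[\cdot\in B_\tau]$ factor as something that cancels against $|v_j|$, and one cannot argue that the ratio appearing in Lemma~\ref{lemma:ICML25:ratio_probabilities} is close to $1$. Instead we must reduce entirely to absolute values and rely on the \emph{centered} ratio $\tfrac{\mathbb{P}[i,j,r\in B_\tau]}{\mathbb{P}[i,j\in B_\tau]\mathbb{P}[i,r\in B_\tau]}-1$ satisfying $|\cdot|\le \tfrac{d-1}{k-1}-1$, which under the regime $3\le k\le d-3$ follows by combining the upper bound of Lemma~\ref{lemma:ICML25:ratio_probabilities} with the trivial nonnegativity of the ratio. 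Once this two-sided control is in place, the accounting of diagonal and off-diagonal terms carries through by essentially the same bookkeeping as in Corollary~\ref{coro:data-dependent_second_moment}, but yielding the data-independent form claimed in the statement.
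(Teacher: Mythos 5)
Your proposal is correct and follows essentially the same route as the paper's proof: substitute $\mathbf{v}$ into the closed form of Lemma \ref{lemma:ICML25:general_second_order_moment}, bound the first- and second-summand coefficients by $\frac{d-1}{k-1}-1$, bound the $j=r$ diagonal coefficient by $g_{d,k}-1$ via the probability lower bounds, control the $j\neq r$ coefficient through Lemma \ref{lemma:ICML25:ratio_probabilities}, and regroup using $\Vert\mathbf{v}\Vert_1^2-\Vert\mathbf{v}\Vert_2^2=2\sum_{j\ne i}|v_iv_j|+\sum_{j\ne r}|v_jv_r|$ (the paper puts $v_i^2$ in the off-diagonal group while you put it in the diagonal group, but both yield the stated bound). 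One caveat: your justification of the two-sided bound $\bigl|\tfrac{\mathbb{P}[i,j,r\in B_\tau]}{\mathbb{P}[i,j\in B_\tau]\mathbb{P}[i,r\in B_\tau]}-1\bigr|\le\tfrac{d-1}{k-1}-1$ via nonnegativity of the ratio only delivers the lower side when $\tfrac{d-1}{k-1}\ge 2$, i.e.\ $d\ge 2k-1$; for $k+3\le d\le 2k-2$ one needs an actual lower bound on the ratio rather than just $\ge 0$ — though the paper's own proof silently makes the same step without any justification, so you are no less rigorous than the original.
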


\begin{proof}[of Corollary \ref{coro:data-independent_second_moment}]
    The analysis on $\vert z_{\tau,i}\vert$
    is same with that of Corollary \ref{coro:data-dependent_second_moment}.
    By Lemma \ref{lemma:ICML25:ratio_probabilities},
        \begin{align*}
        \mathbb{E}_{\tau}\left[ z^2_{\tau,i}\right]
        =&
        \left(\frac{1}{\mathbb{P}[i\in B_{\tau}]}-1\right)x^4_{\tau^2,i}\cdot v^2_i
        +2\sum_{j\ne i}\left(
        \frac{1}{\mathbb{P}[i\in B_{\tau}]}-1\right)x^3_{\tau^2,i}x_{\tau^2,j}\cdot v_iv_j+\\
        &\sum_{j\neq i,r=j}
        \left(\frac{1}{\mathbb{P}[i,j\in B_{\tau}]}-1\right)
        x^2_{\tau^2,i}x^2_{\tau^2,j}\cdot v^2_j+\\
        &\sum_{j\neq i}\sum_{r\neq i,r\neq j}
        \left(\frac{\mathbb{P}[i,j,r\in B_{\tau}]}{\mathbb{P}[i,j\in B_{\tau}]\cdot \mathbb{P}[i,r\in B_{\tau}]}-1\right)
        x^2_{\tau^2,i}x_{\tau^2,j}x_{\tau^2,r}\cdot v_rv_j\\
        \leq&\left(\frac{d-1}{k-1}-1\right)\left(\vert v_i\vert^2+2\sum_{j\neq i}\vert v_iv_j\vert
        +\sum_{j\neq i}\sum_{r\neq i,r\neq j}\vert v_rv_j\vert\right)
        +\sum_{j\neq i,r=j}\left(g_{d,k}-1\right)\vert v_j\vert^2\\
        \leq&\left(g_{d,k}-1\right)\Vert \mathbf{v}\Vert^2_2
        +\left(\frac{d-1}{k-1}-1\right)\left(\Vert \mathbf{v}\Vert^2_1-\Vert \mathbf{v}\Vert^2_2\right),
    \end{align*}
    which concludes the proof.
\end{proof}

Next we prove that if $\gamma_s$ and $\hat{\gamma}_s$ are well selected,
then with a high probability,
$\mathrm{DS}(\hat{\gamma}_s)$ has a solution at least, denoted by $\hat{\mathbf{w}}_s$
satisfying $\Vert\hat{\mathbf{w}}_s\Vert_1\leq \Vert\mathbf{w}^\ast\Vert_1$.

\begin{Mylemma}
\label{lemma:ICML25:optimal_solution_constraint}
    Let $\delta\in(0,1)$.
    For any $s\geq 1$,
    if $\hat{\gamma}_\tau\geq \gamma_\tau$ for all $\tau\leq s$
    where $\gamma_{\tau}$ follows \eqref{eq:ICML25:r_s}
    and $\hat{\gamma}_{\tau}$ follows \eqref{eq:ICML25:hat_gamma_s},
    then
    \begin{enumerate}[(i)]
      \item with probability at least $1-s\left(5+\log_{1.5}\frac{2(d-2)}{3(k-2)}\right)\delta$,
      $\frac{1}{\tau}\left\Vert \hat{\mathbf{X}}_{\mathcal{I}_{\tau}}\mathbf{Y}_{\mathcal{I}_{\tau}}
        -\mathbf{H}_{\mathcal{I}_{\tau}}\mathbf{w}^\ast\right\Vert_{\infty}\leq \gamma_{\tau}$ for all $\tau\leq s$,
      \item $\mathrm{DS}(\hat{\gamma}_{\tau})$
      is feasible and its optimal solution $\hat{\mathbf{w}}_{\tau}$ satisfying
    $\Vert\hat{\mathbf{w}}_{\tau}\Vert_1\leq \Vert\mathbf{w}^\ast\Vert_1$ for all $\tau\leq s$.
    \end{enumerate}
\end{Mylemma}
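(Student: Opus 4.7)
The plan is to prove (i) and (ii) by a simultaneous induction on $\tau$ from $1$ to $s$, where part (ii) at index $\tau$ follows almost immediately from part (i) at the same index together with the hypothesis $\hat{\gamma}_\tau\geq\gamma_\tau$: whenever $\frac{1}{\tau}\bigl\Vert\hat{\mathbf{X}}_{\mathcal{I}_\tau}\mathbf{Y}_{\mathcal{I}_\tau}-\mathbf{H}_{\mathcal{I}_\tau}\mathbf{w}^\ast\bigr\Vert_\infty\leq\gamma_\tau\leq\hat{\gamma}_\tau$, the true parameter $\mathbf{w}^\ast$ is feasible for $\mathrm{DS}(\hat{\gamma}_\tau)$, so its $\ell_1$-norm minimizer $\hat{\mathbf{w}}_\tau$ automatically satisfies $\Vert\hat{\mathbf{w}}_\tau\Vert_1\leq\Vert\mathbf{w}^\ast\Vert_1\leq 1$. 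Consequently the only real work is the martingale deviation bound in (i), which I will derive at round $\tau$ using only the fact that $\Vert\hat{\mathbf{w}}_{r-1}\Vert_1\leq\Vert\mathbf{w}^\ast\Vert_1\leq 1$ for $r\leq\tau$, guaranteed by (ii) at the strictly smaller indices.

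For part (i), fix a coordinate $i\in[d]$. Substituting $y_{r^2}=\langle\mathbf{w}^\ast,\mathbf{x}_{r^2}\rangle+\eta_{r^2}$ and using $\mathbf{x}_{r^2}\langle\mathbf{w}^\ast,\mathbf{x}_{r^2}\rangle=\mathbf{x}_{r^2}\mathbf{x}_{r^2}^\top\mathbf{w}^\ast$, decompose
\[
\hat{\mathbf{x}}_{r^2}y_{r^2}-\mathbf{h}_{r^2}\mathbf{w}^\ast
=\underbrace{(\hat{\mathbf{x}}_{r^2}-\mathbf{x}_{r^2})\langle\mathbf{w}^\ast,\mathbf{x}_{r^2}\rangle}_{A_r}
+\underbrace{(\mathbf{x}_{r^2}\mathbf{x}_{r^2}^\top-\mathbf{h}_{r^2})\hat{\mathbf{w}}_{r-1}}_{B_r}
-\underbrace{(\mathbf{x}_{r^2}\mathbf{x}_{r^2}^\top-\mathbf{h}_{r^2})(\hat{\mathbf{w}}_{r-1}-\mathbf{w}^\ast)}_{D_r}
+\underbrace{\hat{\mathbf{x}}_{r^2}\eta_{r^2}}_{C_r}.
\]
By Lemma~\ref{lemma:ICML25:sampling_probability} and the independence of $\eta_{r^2}$ from the sampling, each piece is a martingale difference with respect to $\mathcal{H}_{r-1}=\sigma(B_1,\eta_{1^2},\ldots,B_{r-1},\eta_{(r-1)^2})$. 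The piece $A_{r,i}$ has $|A_{r,i}|\leq\frac{d-1}{k-1}$ and conditional variance at most $\frac{d-1}{k-1}$. For $B_{r,i}$, Corollary~\ref{coro:data-dependent_second_moment} applied with $\mathbf{v}=\hat{\mathbf{w}}_{r-1}$ (and $\Vert\hat{\mathbf{w}}_{r-1}\Vert_1\leq 1$ from the inductive hypothesis) gives $|B_{r,i}|\leq g_{d,k}$ and variance at most $\frac{2(d-1)}{k-1}$. For $D_{r,i}$, Corollary~\ref{coro:data-independent_second_moment} with $\mathbf{v}=\hat{\mathbf{w}}_{r-1}-\mathbf{w}^\ast$ combined with the Dantzig Selector cone inequality $\Vert\Delta_{r-1}\Vert_1\leq 2\Vert\Delta_{r-1}(S)\Vert_1$ (Lemma~\ref{lemma:ICML25:Dantzig2005}) yields conditional variance bounded by $4g_{d,k}\Vert\Delta_{r-1}(S)\Vert_1^2$. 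The noise piece $C_{r,i}$ is first truncated via the Gaussian tail $|\eta_{r^2}|\leq\sigma\sqrt{2\ln(1/\delta)}$ (contributing one extra deviation event per round) and then has conditional variance at most $\frac{d-1}{k-1}\sigma^2$. Applying Bernstein's inequality (Lemma~\ref{lemma:ICML25:Hoeffding_inequality}) to each of $\frac{1}{\tau}\sum_{r\leq\tau}A_{r,i}$, $\frac{1}{\tau}\sum_{r\leq\tau}B_{r,i}$, $\frac{1}{\tau}\sum_{r\leq\tau}C_{r,i}$ reproduces the first two explicit terms of $\gamma_\tau$ in \eqref{eq:ICML25:r_s} as well as the $\sigma$-dependent parts.

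The main obstacle is the sum $\frac{1}{\tau}\sum_{r\leq\tau}D_{r,i}$, whose aggregated conditional variance $\sum_{r\leq\tau}\Vert\Delta_{r-1}(S)\Vert_1^2$ is itself a random quantity shaped by the past iterates, so ordinary Bernstein does not directly produce the data-dependent third term of $\gamma_\tau$. Here I would invoke Lemma~\ref{lemma:ICML25:New_Bernstein_inequality} with $\beta=1.5$: the inductive hypothesis on (ii) forces $\Vert\Delta_{r-1}(S)\Vert_1\in[0,2]$, so after absorbing the $g_{d,k}$ factor one obtains a universal range $v\in[a_1\tau,a_2\tau]$ with $a_2/a_1=\frac{2(d-2)}{3(k-2)}$, which injects the $\log_{1.5}\frac{2(d-2)}{3(k-2)}$ factor into the failure probability for this single martingale. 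A final union bound over the four martingales $A,B,C,D$ plus the Gaussian tail event ($5$ events total), the $d$ coordinates (absorbed into $\ln(d/\delta)$), and all indices $\tau\leq s$ gives the stated probability $1-s\bigl(5+\log_{1.5}\frac{2(d-2)}{3(k-2)}\bigr)\delta$, and part (ii) then follows coordinate-wise at each $\tau$ by the feasibility argument described above.
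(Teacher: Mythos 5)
Your overall strategy is the same as the paper's: a joint induction in which feasibility of $\mathbf{w}^\ast$ for $\mathrm{DS}(\hat{\gamma}_\tau)$ (hence part (ii)) follows from part (i) at the same index, and the concentration step at index $\tau$ uses $\Vert\hat{\mathbf{w}}_{r-1}\Vert_1\leq\Vert\mathbf{w}^\ast\Vert_1$ only for $r<\tau$. Your decomposition $A_r+B_r-D_r+C_r$ is exactly the paper's $\Xi_2,\Xi_{1,1},\Xi_{1,2},\Xi_3$, and you invoke the same lemmas (Lemma \ref{lemma:ICML25:Hoeffding_inequality} for the deterministic-variance pieces, Lemma \ref{lemma:ICML25:New_Bernstein_inequality} with $\beta=1.5$ for the data-dependent piece, Corollaries \ref{coro:data-dependent_second_moment} and \ref{coro:data-independent_second_moment} for the variances).

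There is, however, a concrete gap: the constants you obtain do not fit inside the specific $\gamma_\tau$ of \eqref{eq:ICML25:r_s}, and the lemma is a statement about that $\gamma_\tau$, whose coefficients are essentially tight (the paper's own pieces sum to about $6.88+1.2\sigma$ against the $6.9+1.2\sigma$ coefficient). First, for $D_{r,i}$ you bound the conditional variance by $4g_{d,k}\Vert\Delta_{r-1}(S)\Vert_1^2$ via Corollary \ref{coro:data-independent_second_moment} plus the cone inequality; this makes the data-dependent term come out as $\frac{2}{\tau}\sqrt{3g_{d,k}\sum_r\Vert\Delta_{r-1}(S)\Vert_1^2\ln\frac{d}{\delta}}$, twice the third term of $\gamma_\tau$. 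The paper avoids this by splitting $D_{r,i}$ into its $S$- and $S^c$-components and bounding the $S^c$-part (which equals $\hat{\mathbf{w}}_{r-1}(S^c)$, not an error term) with the data-dependent Corollary \ref{coro:data-dependent_second_moment}, yielding $g_{d,k}\Vert\Delta_{r-1}(S)\Vert_1^2+\frac{4(d-1)}{k-1}$ and hence the correct coefficient $\sqrt{3g_{d,k}}$ after $\beta=1.5$. Second, truncating $\eta_{r^2}$ at $\sigma\sqrt{2\ln(1/\delta)}$ and applying Bernstein to $C_{r,i}$ gives a leading noise coefficient of $\sqrt{2}\sigma\approx 1.414\sigma$ (and a worse $\ln^{3/2}$ lower-order term), exceeding the $1.2\sigma$ budgeted in $\gamma_\tau$; the paper instead exploits that $\sum_r\eta_{r^2}\hat{x}_{r^2,i}$ is conditionally Gaussian, applies a Gaussian tail with variance $\sigma^2\sum_r\hat{x}^2_{r^2,i}$, and separately concentrates $\sum_r\hat{x}^2_{r^2,i}$ to get the $1.2\sigma$ constant. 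With these two refinements your argument would close; as written, it proves the inequality only against a constant multiple of $\gamma_\tau$, not $\gamma_\tau$ itself.
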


\begin{proof}[of Lemma \ref{lemma:ICML25:optimal_solution_constraint}]
    The main challenge to prove this lemma is the coupling between (i) and (ii).
    To address this issue,
    we will use an induction method to prove this lemma.

    First, we consider $s=1$.
    In this case,
    $$
        \gamma_1=\left(\frac{8}{3}+2\sigma\right)g_{d,k}\ln\frac{d}{\delta}
        +\left(6.9+1.2\sigma\right)\sqrt{\frac{d-1}{k-1}\ln\frac{d}{\delta}},\quad
        \hat{\mathbf{w}}_0=\frac{1}{d}\mathbf{1}_d.
    $$
    Recalling that $y_1=\langle \mathbf{w}^\ast,\mathbf{x}_1\rangle+\eta_1$.
    With probability at least $1-\delta$,
    \begin{align*}
        \left\Vert\mathbf{h}_1\mathbf{w}^\ast-\hat{\mathbf{x}}_1y_1\right\Vert_{\infty}
        =&\left\Vert\mathbf{h}_1\mathbf{w}^\ast
        -\hat{\mathbf{x}}_1\left(\mathbf{x}^\top_1\mathbf{w}^\ast+\eta_1\right)\right\Vert_{\infty}\\
        =&\left\Vert\left(\mathbf{h}_1-\mathbf{x}_1\mathbf{x}^\top_1\right)\mathbf{w}^\ast
        +\left(\mathbf{x}_1\mathbf{x}^\top_1-\hat{\mathbf{x}}_1
        \mathbf{x}^\top_1\right)\mathbf{w}^\ast
        -\hat{\mathbf{x}}_1\eta_1\right\Vert_{\infty}\\
        \leq&\left\Vert\left(\mathbf{h}_1-\mathbf{x}_1\mathbf{x}^\top_1\right)\mathbf{w}^\ast\right\Vert_{\infty}
        +\left\Vert\left(\mathbf{x}_1-\hat{\mathbf{x}}_1\right)
        \mathbf{x}^\top_1\mathbf{w}^\ast\right\Vert_{\infty}
        +\left\Vert\hat{\mathbf{x}}_1\eta_1\right\Vert_{\infty}\\
        \leq&\max_{i\in[d]}\left\Vert\sum^d_{j=1}(h_1[i,j]-x_{1,i}x_{1,j})w^\ast_j\right\Vert
        +\frac{d-1}{k-1}
        +\left\Vert\hat{\mathbf{x}}_1\right\Vert_{\infty}\cdot \eta_1\\
        \leq&\frac{(d-1)(d-2)}{(k-1)(k-2)}+\frac{d-1}{k-1}
        +\frac{d-1}{k-1}\cdot\sigma\sqrt{2\ln\frac{1}{\delta}}
        <\gamma_1.
    \end{align*}
    For the Gaussian random variable $\eta_1\sim\mathcal{N}(0,\sigma^2)$,
    with probability (w.p.) at least $1-\delta$,
    it must be $\vert\eta_1\vert\leq \sigma\sqrt{2\ln\frac{1}{\delta}}$.
    We will give a detail proof later.
    Since $\gamma_1\leq \hat{\gamma}_1$,
    $\mathrm{DS}(\hat{\gamma}_1)$ has a solution at least,
    i.e., $\hat{\mathbf{w}}_1=\mathbf{w}^\ast$.
    This lemma holds for $s=1$.

    Assuming (i) holds for any $s= r-1\geq 1$,
    that is, w.p. at least $1-(r-1)\left(5+\log_{1.5}\frac{2(d-2)}{3(k-2)}\right)\delta$,
    $\frac{1}{\tau}\left\Vert \hat{\mathbf{X}}_{\mathcal{I}_{\tau}}\mathbf{Y}_{\mathcal{I}_{\tau}}
        -\mathbf{H}_{\mathcal{I}_{\tau}}\mathbf{w}^\ast\right\Vert_{\infty}\leq \gamma_{\tau}$
    for all $\tau\leq r-1$.
    Since $\hat{\gamma}_{\tau}\geq \gamma_{\tau}$ for all $\tau\leq r-1$,
    it is obvious that $\mathrm{DS}(\hat{\gamma}_\tau)$ has a solution $\hat{\mathbf{w}}_{\tau}$ satisfying
    $\Vert\hat{\mathbf{w}}_{\tau}\Vert_1\leq \Vert\mathbf{w}^\ast\Vert_1$ for all $\tau\leq r-1$.
    Thus (ii) also holds for $s=r-1$.
    Next we verify the case of $s=r$.
    Replacing $\mathbf{Y}_{\mathcal{I}_r}$ with
    $\mathbf{X}^\top_{\mathcal{I}_r}\mathbf{w}^\ast+\eta_{\mathcal{I}_r}$
    yields the following inequality
    \begin{align*}
        &\frac{1}{r}\left\Vert\mathbf{H}_{\mathcal{I}_r}\mathbf{w}^\ast
        -\hat{\mathbf{X}}_{\mathcal{I}_r}\mathbf{Y}_{\mathcal{I}_r}\right\Vert_{\infty}\\
        \leq&\underbrace{\frac{1}{r}\left\Vert\left(\mathbf{H}_{\mathcal{I}_r}-\mathbf{X}_{\mathcal{I}_r}\mathbf{X}^\top_{\mathcal{I}_r}\right)\mathbf{w}^\ast\right\Vert_{\infty}}_{\Xi_1}
        +\underbrace{\frac{1}{r}\left\Vert\left(\mathbf{X}_{\mathcal{I}_r}-\hat{\mathbf{X}}_{\mathcal{I}_r}\right)
        \mathbf{X}^\top_{\mathcal{I}_r}\mathbf{w}^\ast\right\Vert_{\infty}}_{\Xi_2}
        +\underbrace{\frac{1}{r}\left\Vert\hat{\mathbf{X}}_{\mathcal{I}_r}\eta_{\mathcal{I}_r}\right\Vert_{\infty}}_{\Xi_3}\\
        =&\frac{1}{r}\left\Vert\sum^r_{\tau=1}\left[\left(\mathbf{H}_{\tau^2}-\mathbf{X}_{\tau^2}
        \mathbf{X}^\top_{\tau^2}\right)\hat{\mathbf{w}}_{\tau-1}+\left(\mathbf{H}_{\tau^2}-\mathbf{X}_{\tau^2}
        \mathbf{X}^\top_{\tau^2}\right)
        \left(\mathbf{w}^\ast-\hat{\mathbf{w}}_{\tau-1}\right)\right]\right\Vert_{\infty}+\Xi_2+\Xi_3\\
        \leq&\underbrace{\frac{1}{r}\left\Vert\sum^r_{\tau=1}
        \left(\mathbf{h}_{\tau^2}-\mathbf{x}_{\tau^2}{\mathbf{x}}^\top_{\tau^2}\right)
        \hat{\mathbf{w}}_{\tau-1}\right\Vert_{\infty}}_{\Xi_{1,1}}
        +\underbrace{\frac{1}{r}
        \left\Vert \sum^r_{\tau=1}\left(\mathbf{h}_{\tau^2}-{\mathbf{x}}_{\tau^2}{\mathbf{x}}^\top_{\tau^2}\right)
        \left(\mathbf{w}^\ast-\hat{\mathbf{w}}_{\tau-1}\right)\right\Vert_{\infty}}_{\Xi_{1,2}}+\Xi_2+\Xi_3.
    \end{align*}
    We separately give an upper bound on $\Xi_1$, $\Xi_2$ and $\Xi_3$.
    The key our analysis is to prove a tighter upper bound on $\Xi_1$ and $\Xi_3$,
    compared to the analysis in \cite{Kale2017Adaptive}.
    $\Xi_1$ can be decomposed into $\Xi_{1,1}$ and $\Xi_{1,2}$.
    By our algorithm-dependent sampling scheme,
    it is possible to give a tight upper bound on $\Xi_{1,1}$.
    The tighter upper bound of $\Xi_3$ comes from a more subtle analysis.

    \noindent\textbf{Analyzing} $\Xi_{1,1}$.
    We define a random vector $\mathbf{z}_{\tau^2}$ as follows
    $$
        \mathbf{z}_{\tau^2}:=\mathbf{h}_{\tau^2}\hat{\mathbf{w}}_{\tau-1}
        -\mathbf{x}_{\tau^2}\mathbf{x}^\top_{\tau^2}\hat{\mathbf{w}}_{\tau-1},\quad
        \tau=1,2,\ldots,r.
    $$
    Given the selections $\{B_1,\ldots,B_{\tau-1}\}$,
    $\hat{\mathbf{w}}_{\tau-1}$ is deterministic.
    By Lemma \ref{lemma:ICML25:sampling_probability},
    it is easy to verify that $\mathbb{E}_{\tau}[z_{\tau^2,i}]=0$ for all $i\in[d]$.
    Therefore, $z_{1,i},z_{2^2,i},\ldots,z_{r^2,i}$ is a sequence of martingale differences.
    By Corollary \ref{coro:data-dependent_second_moment},
    the sum of the conditional variances satisfies
    \begin{align*}
        \sum^r_{\tau=1}\mathbb{E}_{\tau}[z^2_{\tau,i}]
        =\sum^r_{\tau=2}\mathbb{E}_{\tau}[z^2_{\tau,i}]+\mathbb{E}[z^2_{1,i}]
        \leq\frac{2(d-1)}{k-1}
        \left(\sum^r_{\tau=2}\Vert\hat{\mathbf{w}}_{\tau-1}\Vert^2_1
        +\Vert\hat{\mathbf{w}}_0\Vert^2_1\right)
        \leq2r\frac{d-1}{k-1},
    \end{align*}
    in which
    $\Vert \hat{\mathbf{w}}_{\tau}\Vert_1\leq \Vert\mathbf{w}^\ast\Vert_1\leq 1$ for all $\tau=1,\ldots,r-1$.
    By Corollary \ref{coro:data-dependent_second_moment}, we have $\vert z_{\tau^2,i}\vert\leq g_{d,k}$.
    By Lemma \ref{lemma:ICML25:Hoeffding_inequality} and
    the union-of-events bound over $i\in[d]$,
    w.p. at least $1-\delta$,
    $$
        \Xi_{1,1} \leq \frac{2 g_{d,k}}{3r}\ln\frac{d}{\delta}
        +\frac{2}{\sqrt{r}}\sqrt{\frac{d-1}{k-1}\ln\frac{d}{\delta}}.
    $$
    \noindent\textbf{Analyzing} $\Xi_{1,2}$.
    We redefine $\mathbf{z}_{\tau^2}$ as follows
    \begin{align*}
        \mathbf{z}_{\tau^2}:=&\mathbf{h}_{\tau^2}\Delta_{\tau-1}
        -\mathbf{x}_{\tau^2}\mathbf{x}^\top_{\tau^2}\Delta_{\tau-1},\quad \tau=1,2,\ldots,r.
    \end{align*}
    Given the selections $\{B_1,\ldots,B_{\tau-1}\}$,
    $\Delta_{\tau-1}$ is deterministic.
    It is easy to be verified that $\mathbb{E}_{\tau}[z_{\tau^2,i}]=0$ for all $i\in[d]$.
    Next we analyze the sum of conditional variances.
    Recalling that $\Delta_{\tau-1}(S^c)=\hat{\mathbf{w}}_{\tau}(S^c)$.
    We further decompose $z_{\tau^2,i}$ into two components
    $$
        \underbrace{\sum_{j\in S}\left(
        \frac{x_{\tau^2,i}x_{\tau^2,j}}{\mathbb{P}[i,j\in B_{\tau}]}
        \mathbb{I}_{i,j\in B_{\tau}}-x_{\tau^2,i}x_{\tau^2,j}\right)\Delta_{\tau-1,j}}_{:=z_{\tau^2,S}}
        +\underbrace{\sum_{j\in S^c}\left(
        \frac{x_{\tau^2,i}x_{\tau^2,j}}{\mathbb{P}[i,j\in B_{\tau}]}
        \mathbb{I}_{i,j\in B_{\tau}}-x_{\tau^2,i}x_{\tau^2,j}\right)\hat{w}_{\tau-1,j}}_{:=z_{\tau^2,S^c}}.
    $$
    Without loss of generality, assuming that $i\in S$.
    By
    Lemma \ref{lemma:ICML25:ratio_probabilities},
    Lemma \ref{lemma:ICML25:general_second_order_moment},
    Corollary \ref{coro:data-dependent_second_moment}
    and Corollary \ref{coro:data-independent_second_moment},
    we have
    \begin{align*}
        &\mathbb{E}_{\tau}[z^2_{\tau,i}]\\
        =&\mathbb{E}_{\tau}\left[
        \left(z_{\tau^2,S}\right)^2\right]+\mathbb{E}_{\tau}\left[\left(z_{\tau^2,S^c}\right)^2\right]+
        2\mathbb{E}_{\tau}\left[z_{\tau^2,S}\cdot z_{\tau^2,S^c}\right]\\
        \leq&g_{d,k}\Vert \Delta_{\tau-1}(S)\Vert^2_1
        +\frac{2(d-1)}{k-1}\Vert\hat{\mathbf{w}}_{\tau-1}\Vert_1\cdot \Vert\hat{\mathbf{w}}_{\tau-1}(S^c)\Vert_1+
        \frac{2(d-1)}{k-1}\Vert\Delta_{\tau-1}(S)\Vert_1\cdot
        \Vert\hat{\mathbf{w}}_{\tau-1}(S^c)\Vert_1\\
        \leq&g_{d,k}\Vert \Delta_{\tau-1}(S)\Vert^2_1
        +\frac{2(d-1)}{k-1}\Vert\hat{\mathbf{w}}_{\tau-1}\Vert_1\cdot \Vert\hat{\mathbf{w}}_{\tau-1}(S^c)\Vert_1+\\
        &\frac{2(d-1)}{k-1}\left(\Vert\mathbf{w}^\ast\Vert_1+\Vert\hat{\mathbf{w}}_{\tau-1}(S)\Vert_1\right)\cdot
        \left(\Vert\hat{\mathbf{w}}_{\tau-1}\Vert_1-\Vert\hat{\mathbf{w}}_{\tau-1}(S)\Vert_1\right)\\
        =&g_{d,k}\Vert \Delta_{\tau-1}(S)\Vert^2_1
        +\frac{4(d-1)}{k-1}\Vert\mathbf{w}^\ast\Vert^2_1
        -\frac{2(d-1)}{k-1}\Vert\hat{\mathbf{w}}_{\tau-1}(S)\Vert_1\cdot \left(\Vert\mathbf{w}^\ast\Vert_1+\Vert\hat{\mathbf{w}}_{\tau-1}(S)\Vert_1\right)\\
        \leq&g_{d,k}\Vert \Delta_{\tau-1}(S)\Vert^2_1+\frac{4(d-1)}{k-1}.
    \end{align*}
    Note that $\Vert \Delta_{\tau-1}(S)\Vert^2_1\in(0,4]$ is a random variable.
    By Lemma \ref{lemma:ICML25:sampling_probability}, we have
    $$
        \vert z_{\tau^2,i}\vert
        =\left\vert\sum^d_{j=1}\left(
        \frac{x_{\tau^2,i}x_{\tau^2,j}}{\mathbb{P}[i,j\in B_{\tau}]}
        \mathbb{I}_{i,j\in B_{\tau}}-x_{\tau^2,i}x_{\tau^2,j}\right)\Delta_{\tau-1,j}\right\vert
        \leq\frac{(d-1)(d-2)}{(k-1)(k-2)}\Vert \Delta_{\tau-1}\Vert_1
        \leq2g_{d,k}.
    $$
    By Lemma \ref{lemma:ICML25:New_Bernstein_inequality} and the union-of-events bound over $i\in[d]$,
    w.p. at least $1-\left(1+\log_{\beta}\frac{d+k}{k-2}\right)\delta$,
    $$
        \Xi_{1,2} \leq\frac{4 g_{d,k}}{3 r}\ln\frac{d}{\delta}
        +\frac{1}{r}\sqrt{2\beta g_{d,k}\sum^r_{\tau=1}\Vert\Delta_{\tau-1}(S)\Vert^2_1\ln\frac{d}{\delta}}
        +\sqrt{\frac{8\beta(d-1)}{r(k-1)}\ln\frac{d}{\delta}}.
    $$
    Let $\beta=1.5$.
    Combining the upper bounds on $\Xi_{1,1}$ and $\Xi_{1,2}$,
    w.p. at least $1-\left(2+\log_{1.5}\frac{d+4}{k-2}\right)\delta$,
    $$
        \Xi_1 \leq
        \frac{2 g_{d,k}}{r}\ln\frac{d}{\delta}
        +\left(2+\sqrt{12}\right)\sqrt{\frac{d-1}{r(k-1)} \ln\frac{d}{\delta}}
        +\frac{1}{r}\sqrt{3 g_{d,k}\sum^r_{\tau=1}\Vert\Delta_{\tau-1}(S)\Vert^2_1\ln\frac{d}{\delta}}.
    $$
    \textbf{Analyzing} $\Xi_2$.
    Similarly, we redefine $\mathbf{z}_{\tau^2}$ as follows
    $$
        \mathbf{z}_{\tau^2}:={\mathbf{x}}_{\tau^2}\hat{\mathbf{x}}^\top_{\tau^2}\mathbf{w}^\ast
        -{\mathbf{x}}_{\tau^2}{\mathbf{x}}^\top_{\tau^2}\mathbf{w}^\ast,\quad \tau=1,2,\ldots,r.
    $$
    It is obvious
    $\vert z_{\tau^2,i}\vert\leq \frac{d-1}{k-1}$ for all $i\in[d]$.
    By Lemma \ref{lemma:ICML25:sampling_probability},
    the sum of conditional variances is upper bounded by $\frac{d-1}{k-1}r$.
    By Lemma \ref{lemma:ICML25:Hoeffding_inequality}
    and the union-of-events bound over $i\in[d]$,
    w.p. at least $1-\delta$,
    $$
        \Xi_2 \leq \frac{2(d-1)}{3(k-1)r}\ln\frac{d}{\delta}
        +\frac{1}{\sqrt{r}}\sqrt{\frac{2(d-1)}{k-1}\ln\frac{d}{\delta}}.
    $$
    \textbf{Analyzing} $\Xi_3$.
    For each $i\in[d]$,
    we define a random variable $z_{r,i}$ as follows
    $$
        z_{r,i}=\sum^r_{\tau=1}\eta_{\tau^2}\hat{x}_{\tau^2,i}, \quad i=1,2,\ldots,d.
    $$
    Given that $\eta_1, \eta_4,\ldots,\eta_{r^2}$ are independent Gaussian variables,
    we have $\mathbb{E}_{\eta_{\tau^2},\eta_{t^2}}[\eta_{\tau^2}\hat{x}_{\tau^2,i}\cdot \eta_{t^2}\hat{x}_{t^2,i}]=
    \mathbb{E}_{\eta_{t^2}}[\eta_{t^2}]\mathbb{E}_{\eta_{\tau^2}}[\eta_{\tau^2}\hat{x}_{\tau^2,i}\hat{x}_{t^2,i}]=0$
    for any $\tau\neq t$ in which we assume $\tau < t$.
    Thus $\eta_{\tau^2}\hat{x}_{\tau^2,i}$ is also independent of $\eta_{t^2}\hat{x}_{t^2,i}$,
    and
    $z_{r,i}$ is a Gaussian random variable with $\mathbb{E}[z_{r,i}]=0$
    and
    $$
        \mathbb{E}[(z_{r,i})^2]
        =\sum^r_{\tau=1}\mathbb{E}[\eta^2_{\tau^2}]\hat{x}^2_{\tau^2,i}
        =\sigma^2\cdot\sum^r_{\tau=1}\hat{x}^2_{\tau^2,i}
        \leq\frac{d-1}{k-1}\sigma^2 r+
        \sigma^2\cdot\sum^r_{\tau=1}\underbrace{\left(\hat{x}^2_{\tau^2,i}-\frac{x^2_{\tau^2,i}}{\mathbb{P}[i\in B_{\tau}]}\right)}_{:=a_{\tau,i}}.
    $$
    It can be proved that $\vert a_{\tau,i}\vert\leq \frac{(d-1)^2}{(k-1)^2}$
    and $\sum^s_{\tau=1}\mathbb{E}[a^2_{\tau,i}]
    \leq \frac{(d-1)^3}{(k-1)^3}s$.
    By Lemma \ref{lemma:ICML25:Hoeffding_inequality},
    w.p. at least $1-\delta$,
    \begin{align*}
        \mathbb{E}[(z_{r,i})^2]
        \leq \frac{d-1}{k-1}\sigma^2 r+\frac{2(d-1)^2}{3(k-1)^2}\sigma^2\ln\frac{1}{\delta}
        +\sigma^2\sqrt{\frac{2r(d-1)^3}{(k-1)^3}\ln\frac{1}{\delta}}.
    \end{align*}
    Denote by $\Sigma_{r,i}$ the standard variance of $z_{r,i}$.
    For Gaussian random variables,
    we have
    \begin{align*}
        \forall z_0>0,~\mathbb{P}[\vert z_{r,i}\vert >z_0]
        =&2\int^{\infty}_{z_0}\frac{1}{\sqrt{2\pi}\Sigma_{r,i}}\exp\left(-\frac{z^2_{r,i}}{2\Sigma^2_{r,i}}\right)
        \mathrm{d}\,z_{r,i}\\
        =&-\frac{2\Sigma_{r,i}}{\sqrt{2\pi}z_{r,i}}\exp\left(-\frac{z^2_{r,i}}{2\Sigma^2_{r,i}}\right)\left\vert^{\infty}_{z_0} \right.
        -2\int^{\infty}_{z_0}\frac{\Sigma_{r,i}}{\sqrt{2\pi}z^2_{r,i}}\exp\left(-\frac{z^2_{r,i}}{2\Sigma^2_{r,i}}\right)
        \mathrm{d}\,z_{r,i}\\
        \leq&2\frac{\Sigma_{r,i}}{\sqrt{2\pi}z_0}\exp\left(-\frac{z^2_0}{2\Sigma^2_{r,i}}\right).
    \end{align*}
    For any $\delta\in(0,1)$,
    let $z_0=\Sigma_{r,i}\sqrt{2\ln\frac{1}{\delta}}$.
    Then we have
    $$
        \mathbb{P}[\vert z_{r,i}\vert >z_0]
        \leq \frac{1}{\sqrt{\pi}\sqrt{\ln\frac{1}{\delta}}}\delta<\delta.
    $$
    By the union-of-events bound over $i\in[d]$,
    with probability at least $1-2\delta$,
    \begin{align*}
        \Xi_3
        \leq&\frac{1}{r}
        \sqrt{\frac{d-1}{k-1} r+\frac{2(d-1)^2}{3(k-1)^2}\ln\frac{1}{\delta}
        +\sqrt{\frac{2r(d-1)^3}{(k-1)^3}\ln\frac{1}{\delta}}}
        \cdot \sigma\sqrt{2\ln\frac{d}{\delta}}\\
        \leq&\frac{1}{r}
        \sqrt{\frac{d-1}{k-1} r+\frac{2(d-1)^2}{3(k-1)^2}\ln\frac{1}{\delta}
        +\frac{1}{2}\left(\frac{d-1}{k-1}\cdot\frac{3r}{4}+\frac{8(d-1)^2}{3(k-1)^2}\ln\frac{d}{\delta}\right)}
        \cdot \sigma\sqrt{2\ln\frac{d}{\delta}}\\
        \leq&\frac{1.2\sigma}{\sqrt{r}}\sqrt{\frac{d-1}{k-1}\ln\frac{d}{\delta}}
        +\frac{2\sigma}{r}\frac{d-1}{k-1}\ln\frac{d}{\delta},
    \end{align*}
    where we use the inequality $2\sqrt{ab}\leq a+b$ for $a>0, b>0$ to simplify the term in the square root.

\noindent \textbf{Combining the upper bounds on $\Xi_{1,1}, \Xi_{1,2}, \Xi_2$ and $\Xi_3$},
    w.p. at least $1-\left(5+\log_{1.5}\frac{2(d-2)}{3(k-2)}\right)\delta$,
    \begin{align*}
        &\frac{1}{r}\left\Vert\mathbf{H}_{\mathcal{I}_r}\mathbf{w}^\ast
        -\hat{\mathbf{X}}_{\mathcal{I}_r}\mathbf{Y}_{\mathcal{I}_r}\right\Vert_{\infty}\\
        \leq&\frac{2 g_{d,k}}{r}\ln\frac{d}{\delta}
        +\left(2+\sqrt{12}\right)\sqrt{\frac{d-1}{r(k-1)} \ln\frac{d}{\delta}}
        +\frac{1}{r}\sqrt{3 g_{d,k}\sum^r_{\tau=1}\Vert\Delta_{\tau-1}(S)\Vert^2_1\ln\frac{d}{\delta}}+\\
        &\frac{2(d-1)}{3(k-1)r}\ln\frac{d}{\delta}
        +\frac{1}{\sqrt{r}}\sqrt{\frac{2(d-1)}{k-1}\ln\frac{d}{\delta}}+
        \frac{1.2\sigma}{\sqrt{r}}\sqrt{\frac{d-1}{k-1}\ln\frac{d}{\delta}}
        +\frac{2\sigma}{r}\frac{d-1}{k-1}\ln\frac{d}{\delta}\\
        \leq&\left(\frac{8}{3}+2\sigma\right)\frac{g_{d,k}}{r}\ln\frac{d}{\delta}
        +(6.9+1.2\sigma)\sqrt{\frac{d-1}{r(k-1)} \ln\frac{d}{\delta}}+
        \frac{1}{r}\sqrt{3 g_{d,k}\sum^r_{\tau=1}\Vert\Delta_{\tau-1}(S)\Vert^2_1\ln\frac{d}{\delta}}
        =\gamma_r.
    \end{align*}
    Taking the union-of-events bound over $\tau\leq r-1$ and $\tau =r$,
    w.p. at least $1-r\left(5+\log_{1.5}\frac{2(d-2)}{3(k-2)}\right)\delta$,
    (i) holds for $s=r$.
    Since $\gamma_r\leq \hat{\gamma}_r$,
    $\mathrm{DS}(\hat{\gamma}_r)$ has a solution
    $\hat{\mathbf{w}}_r$ satisfying $\Vert \hat{\mathbf{w}}_r\Vert_1\leq \Vert\mathbf{w}^\ast\Vert_1$.
    Thus (ii) also holds for $s=r$.

    Therefore, the lemma holds for all $s\geq 1$,
    concluding the proof.
\end{proof}

\begin{Mylemma}
\label{lemma:ICML2025:upper_bounded_(XX-H)DELTA}
    For any $s\geq 1$,
    with probability at least $1-\delta$,
    $$
        \frac{1}{s}\left\Vert\left({\mathbf{X}}_{\mathcal{I}_s}{\mathbf{X}}^\top_{\mathcal{I}_s}
        -\mathbf{H}_{\mathcal{I}_s}\right)\Delta_s\right\Vert_{\infty}
        \leq\frac{2g_{d,k}}{3s}\Vert \Delta_s\Vert_1\ln\frac{d^2}{\delta}
        +\Vert \Delta_s\Vert_1\sqrt{\frac{2g_{d,k}}{s}\cdot \ln\frac{d^2}{\delta}}.
    $$
\end{Mylemma}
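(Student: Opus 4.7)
The plan is to decouple $\Delta_s$ (which depends on the samples $B_1,\dots,B_s$) from the random matrix $\mathbf{X}_{\mathcal{I}_s}\mathbf{X}^\top_{\mathcal{I}_s}-\mathbf{H}_{\mathcal{I}_s}$ by passing through an entry-wise bound. Specifically, for any vector $\mathbf{v}\in\mathbb{R}^d$ and any matrix $\mathbf{M}\in\mathbb{R}^{d\times d}$, one has $\|\mathbf{M}\mathbf{v}\|_{\infty}\leq \|\mathbf{v}\|_1\cdot\max_{i,j}|M_{i,j}|$. Applying this with $\mathbf{M}=\mathbf{X}_{\mathcal{I}_s}\mathbf{X}^\top_{\mathcal{I}_s}-\mathbf{H}_{\mathcal{I}_s}$ and $\mathbf{v}=\Delta_s$ reduces the lemma to a uniform high-probability upper bound on $\frac{1}{s}\max_{i,j\in[d]}\bigl|\sum_{\tau=1}^{s}\xi_{\tau,i,j}\bigr|$, where $\xi_{\tau,i,j}:=x_{\tau^2,i}x_{\tau^2,j}-h_{\tau^2}[i,j]$. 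Because this entry-wise bound holds for every vector simultaneously, the dependence of $\Delta_s$ on $B_1,\dots,B_s$ no longer poses any difficulty.

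Next I would verify the three ingredients needed to invoke Bernstein's inequality (Lemma~\ref{lemma:ICML25:Hoeffding_inequality}) for the sequence $\{\xi_{\tau,i,j}\}_{\tau=1}^{s}$. (i) \emph{Mean zero}: by Lemma~\ref{lemma:ICML25:sampling_probability}, $\mathbb{E}_{\tau}[h_{\tau^2}[i,j]]=x_{\tau^2,i}x_{\tau^2,j}$ for both $i=j$ and $i\neq j$, hence $\mathbb{E}_{\tau}[\xi_{\tau,i,j}]=0$, so these form a martingale difference sequence with respect to the filtration generated by $B_1,\dots,B_{\tau-1}$. (ii) \emph{Boundedness}: since $\mathbb{P}[i,j\in B_\tau]\geq \tfrac{(k-1)(k-2)}{(d-1)(d-2)}=1/g_{d,k}$ for $i\neq j$ (and $\mathbb{P}[i\in B_\tau]\geq (k-1)/(d-1)$ for $i=j$), and $|x_{\tau^2,\cdot}|\leq 1$, we get $|\xi_{\tau,i,j}|\leq g_{d,k}$ in both cases. (iii) \emph{Conditional variance}: $\mathbb{E}_{\tau}[\xi_{\tau,i,j}^2]\leq \mathbb{E}_{\tau}[h_{\tau^2}[i,j]^2]=\tfrac{x_{\tau^2,i}^2 x_{\tau^2,j}^2}{\mathbb{P}[i,j\in B_\tau]}\leq g_{d,k}$ for $i\neq j$, and analogously $\mathbb{E}_{\tau}[\xi_{\tau,i,i}^2]\leq \tfrac{d-1}{k-1}\leq g_{d,k}$, so $\sum_{\tau=1}^{s}\mathbb{E}_{\tau}[\xi_{\tau,i,j}^2]\leq s\,g_{d,k}$.

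Finally, I would apply Lemma~\ref{lemma:ICML25:Hoeffding_inequality} with $a=g_{d,k}$ and $v=s\,g_{d,k}$ to each of the (at most) $d^2$ martingales $\{\pm\xi_{\tau,i,j}\}_\tau$ (the sign accounts for the absolute value), with confidence parameter $\delta/d^2$, and take the union bound over all coordinate pairs. This yields, with probability at least $1-\delta$,
$$
\max_{i,j\in[d]}\frac{1}{s}\Bigl|\sum_{\tau=1}^{s}\xi_{\tau,i,j}\Bigr|
\leq \frac{2g_{d,k}}{3s}\ln\frac{d^2}{\delta}
+\sqrt{\frac{2g_{d,k}}{s}\ln\frac{d^2}{\delta}}.
$$
Multiplying through by $\|\Delta_s\|_1$ via the entry-wise reduction completes the proof.

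The only subtle point is the measurability issue: $\Delta_s$ is \emph{not} independent of $\mathbf{X}_{\mathcal{I}_s}\mathbf{X}^\top_{\mathcal{I}_s}-\mathbf{H}_{\mathcal{I}_s}$, which is precisely why a direct martingale bound on $\bigl\|(\mathbf{X}_{\mathcal{I}_s}\mathbf{X}^\top_{\mathcal{I}_s}-\mathbf{H}_{\mathcal{I}_s})\Delta_s\bigr\|_{\infty}$ would fail; the entry-wise detour sidesteps this by producing a statement that holds simultaneously for every admissible $\Delta_s$. No heavy machinery beyond Bernstein, Lemma~\ref{lemma:ICML25:sampling_probability}, and a union bound is required.
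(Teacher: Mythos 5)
Your proposal is correct and follows essentially the same route as the paper: reduce to an entry-wise high-probability bound on $\mathbf{X}_{\mathcal{I}_s}\mathbf{X}^\top_{\mathcal{I}_s}-\mathbf{H}_{\mathcal{I}_s}$ (precisely to sidestep the dependence of $\Delta_s$ on $B_1,\ldots,B_s$), apply the martingale Bernstein inequality to each diagonal and off-diagonal entry with the same boundedness and variance constants, take a union bound over the $O(d^2)$ entries, and multiply by $\Vert\Delta_s\Vert_1$. The only cosmetic difference is that you use the uniform constants $a=g_{d,k}$, $v=sg_{d,k}$ for all entries while the paper first records the tighter $\frac{d-1}{k-1}$ for the diagonal before relaxing to $g_{d,k}$; your union-bound accounting (two-sided tails over roughly $d(d+1)/2$ distinct entries versus the stated $\ln\frac{d^2}{\delta}$) carries the same harmless constant-level slack as the paper's own final step.
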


\begin{proof}[of Lemma \ref{lemma:ICML2025:upper_bounded_(XX-H)DELTA}]
    As $\Delta_s$ is not independent with $\mathbf{h}_{\tau^2}$, $\tau=1,2,\ldots,s$,
    it is necessary to give the element-wise bound for the matrix
    $\mathbf{X}_{\mathcal{I}_s}\mathbf{X}^\top_{\mathcal{I}_s}-\mathbf{H}_{\mathcal{I}_s}$.
    We first analyze the elements on the diagonal.
    Let $z_{\tau,i}=\mathbf{h}_{\tau^2}[i,i]-{\mathbf{x}}^2_{\tau^2,i}$.
    It is obvious that
    \begin{align*}
        \forall i\in[d],\quad\mathbb{E}_{\tau}[z_{\tau,i}]=0,\quad
        \vert z_{\tau,i}\vert\leq \frac{d-1}{k-1},\quad
        \sum^s_{\tau=1}\mathbb{E}_{\tau}[\vert z_{\tau,i}\vert^2]
        \leq \sum^s_{\tau=1}\frac{x^4_{\tau^2,i}}{\mathbb{P}[i\in  B_{\tau}]}
        \leq \frac{d-1}{k-1}s.
    \end{align*}
    By Lemma \ref{lemma:ICML25:Hoeffding_inequality},
    with probability at least $1-\delta$,
    $$
        \left\vert\sum^s_{\tau=1}z_{\tau,i} \right\vert\leq \frac{2(d-1)}{3(k-1)}\ln\frac{2}{\delta}
        +\sqrt{2s\frac{d-1}{k-1}\ln\frac{2}{\delta}}.
    $$
    Next we analyze the non-diagonal elements.
    Let
    $z_{\tau,i,j}=\mathbf{h}_{\tau^2}[i,j]-{\mathbf{x}}_{\tau^2,i}{\mathbf{x}}_{\tau^2,j}$.
    We have
    \begin{align*}
        \forall i\neq j\in [d],\quad\mathbb{E}_{\tau}[z_{\tau,i,j}]=0,\quad\vert z_{\tau,i,j}\vert\leq g_{d,k},\quad
        \sum^s_{\tau=1}\mathbb{E}_{\tau}[\vert z_{\tau,i,j}\vert^2]
        = \sum^s_{\tau=1}\frac{x^2_{\tau^2,i}x^2_{\tau^2,j}}{\mathbb{P}[i,j\in  B_{\tau}]}
        \leq g_{d,k}\cdot s.
    \end{align*}
    By Lemma \ref{lemma:ICML25:Hoeffding_inequality},
    with probability at least $1-\delta$,
    $$
        \left\vert\sum^s_{\tau=1}z_{\tau,i,j}\right\vert \leq \frac{2(d-1)(d-2)}{3(k-1)(k-2)}\ln\frac{2}{\delta}
        +\sqrt{2s\frac{(d-1)(d-2)}{(k-1)(k-2)}\ln\frac{2}{\delta}}.
    $$
    Since the matrix is symmetry,
    it is enough to take the union-of-events bound over $d+\frac{d(d-1)}{2}$ events.
    With probability at least $1-\delta$,
    \begin{align*}
        &\frac{1}{s}\left\Vert\left(\mathbf{X}_{\mathcal{I}_s}\mathbf{X}^\top_{\mathcal{I}_s}
        -\mathbf{H}_{\mathcal{I}_s}\right)\Delta_s\right\Vert_{\infty}\\
        \leq& \frac{1}{s}\max_{i\in [d]}\left\vert\sum^s_{\tau=1}z_{\tau,i}\Delta_i+
        \sum^s_{\tau=1}\sum_{j\neq i}z_{\tau,i,j}\Delta_j\right\vert\\
        \leq& \frac{1}{s}\max_{i\in [d]}\left(\left\vert \sum^s_{\tau=1}z_{\tau,i}\right\vert
        \cdot\vert\Delta_i\vert+
        \sum_{j\neq i}\left\vert\sum^s_{\tau=1}z_{\tau,i,j}\right\vert\cdot\vert\Delta_j\vert\right)\\
        \leq& \frac{1}{s}\max_{i\in [d]}
        \left(\frac{2(d-1)(d-2)}{3(k-1)(k-2)}\ln\frac{d(d+1)}{\delta}
        +\sqrt{2s\frac{(d-1)(d-2)}{(k-1)(k-2)}\ln\frac{d(d+1)}{\delta}}\right)
        \sum^j_{i=1}\vert\Delta_i\vert\\
        \leq&\frac{2g_{d,k}}{3s}\Vert \Delta_s\Vert_1\ln\frac{d(d+1)}{\delta}
        +\Vert \Delta_s\Vert_1\sqrt{\frac{2 g_{d,k}}{s}\cdot \ln\frac{d(d+1)}{\delta}},
    \end{align*}
    which concludes the proof.
\end{proof}

\begin{Mylemma}
\label{lemma:ICML2025:upper_bound_XXDelta_s}
    For any $s\geq 1$,
    if $\hat{\gamma}_{\tau}\geq \gamma_{\tau}$ for all $\tau\leq s$,
    then w.p. at least $1-s\left(6+\log_{1.5}\frac{2(d-2)}{3(k-2)}\right)\delta$,
    \begin{align*}
        \forall \tau\leq s,\quad
        &\left\Vert \frac{1}{\tau}\mathbf{X}_{\tau}\mathbf{X}^\top_{\tau}\Delta_{\tau}\right\Vert_{\infty}
        \leq
        2\left(\frac{8}{3}+2\sigma\right)\frac{g_{d,k}}{\tau}\ln\frac{d}{\delta}
        +2\left(6.9+1.2\sigma\right)\sqrt{\frac{d-1}{\tau(k-1)}\ln\frac{d}{\delta}}+\nu_{\tau}+\\
        &\qquad\qquad\frac{1}{\tau}\sqrt{3 g_{d,k}\sum^{\tau}_{r=1}\Vert\Delta_{r-1}(S)\Vert^2_1\ln\frac{d}{\delta}}
        +\frac{2g_{d,k}}{3\tau}\Vert \Delta_{\tau}\Vert_1\ln\frac{d^2}{\delta}+
        \sqrt{\frac{2g_{d,k}}{\tau}\ln\frac{d^2}{\delta}}\Vert\Delta_{\tau}\Vert_1.
    \end{align*}
\end{Mylemma}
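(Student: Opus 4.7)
The plan is to prove the stated bound by an algebraic decomposition that inserts $\mathbf{H}_{\mathcal{I}_\tau}$ and $\hat{\mathbf{X}}_{\mathcal{I}_\tau}\mathbf{Y}_{\mathcal{I}_\tau}$ as intermediaries, and then to apply three lemmas that have already been established. Specifically, for each fixed $\tau$, I would write
$$
\mathbf{X}_{\mathcal{I}_\tau}\mathbf{X}^\top_{\mathcal{I}_\tau}\Delta_\tau
=\bigl(\mathbf{H}_{\mathcal{I}_\tau}\hat{\mathbf{w}}_\tau-\hat{\mathbf{X}}_{\mathcal{I}_\tau}\mathbf{Y}_{\mathcal{I}_\tau}\bigr)
+\bigl(\hat{\mathbf{X}}_{\mathcal{I}_\tau}\mathbf{Y}_{\mathcal{I}_\tau}-\mathbf{H}_{\mathcal{I}_\tau}\mathbf{w}^\ast\bigr)
+\bigl(\mathbf{X}_{\mathcal{I}_\tau}\mathbf{X}^\top_{\mathcal{I}_\tau}-\mathbf{H}_{\mathcal{I}_\tau}\bigr)\Delta_\tau,
$$
which holds by adding and subtracting $\mathbf{H}_{\mathcal{I}_\tau}\hat{\mathbf{w}}_\tau$ and using $\Delta_\tau=\hat{\mathbf{w}}_\tau-\mathbf{w}^\ast$. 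Dividing by $\tau$ and applying the triangle inequality for the infinity norm reduces the problem to bounding the three terms separately.

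The first term is controlled by the constraint of $\mathrm{DS}(\hat{\gamma}_\tau)$: since $\hat{\mathbf{w}}_\tau$ is, by definition, the optimal solution, its feasibility yields $\tfrac{1}{\tau}\Vert\mathbf{H}_{\mathcal{I}_\tau}\hat{\mathbf{w}}_\tau-\hat{\mathbf{X}}_{\mathcal{I}_\tau}\mathbf{Y}_{\mathcal{I}_\tau}\Vert_\infty\le \hat{\gamma}_\tau$; feasibility itself is guaranteed (under $\hat{\gamma}_\tau\ge\gamma_\tau$) by Lemma \ref{lemma:ICML25:optimal_solution_constraint}(ii). The second term is bounded by $\gamma_\tau$ through Lemma \ref{lemma:ICML25:optimal_solution_constraint}(i), which holds simultaneously for all $\tau\le s$ with probability at least $1-s(5+\log_{1.5}\tfrac{2(d-2)}{3(k-2)})\delta$. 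The third term is bounded by a direct appeal to Lemma \ref{lemma:ICML2025:upper_bounded_(XX-H)DELTA}, applied once for each $\tau\le s$ and combined with a union bound costing an additional $s\delta$ in the failure probability.

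Summing the three contributions and using $\hat{\gamma}_\tau\ge\gamma_\tau$ gives a bound of $\hat{\gamma}_\tau+\gamma_\tau$ plus the residual term. Expanding the definitions in \eqref{eq:ICML25:r_s} and \eqref{eq:ICML25:hat_gamma_s}, the deterministic parts (those involving $\tfrac{g_{d,k}}{\tau}\ln\tfrac{d}{\delta}$ and $\tfrac{1}{\sqrt{\tau}}\sqrt{\tfrac{d-1}{k-1}\ln\tfrac{d}{\delta}}$) appear once in each of $\gamma_\tau$ and $\hat{\gamma}_\tau$, and hence with a factor of $2$ in the sum; the data-dependent term $\tfrac{1}{\tau}\sqrt{3g_{d,k}\sum_{r=1}^{\tau}\Vert\Delta_{r-1}(S)\Vert_1^2\ln\tfrac{d}{\delta}}$ appears in $\gamma_\tau$ only, while $\nu_\tau$ appears in $\hat{\gamma}_\tau$ only, which matches the asymmetric appearance in the claimed bound. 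Taking the union bound over the two failure events yields the probability $1-s(6+\log_{1.5}\tfrac{2(d-2)}{3(k-2)})\delta$ stated in the lemma.

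The proof is conceptually straightforward given the earlier lemmas; the only subtle point is to recognize that the right decomposition is the one that brings in $\mathbf{H}_{\mathcal{I}_\tau}\hat{\mathbf{w}}_\tau-\hat{\mathbf{X}}_{\mathcal{I}_\tau}\mathbf{Y}_{\mathcal{I}_\tau}$ as a middle term so that both the DS-feasibility of $\hat{\mathbf{w}}_\tau$ and Lemma \ref{lemma:ICML25:optimal_solution_constraint}(i) can be invoked. A minor bookkeeping obstacle is to keep the two high-probability events (the one guaranteeing feasibility and the per-$\tau$ events of Lemma \ref{lemma:ICML2025:upper_bounded_(XX-H)DELTA}) under a single union bound that reproduces exactly the failure probability in the statement, which I would handle by invoking Lemma \ref{lemma:ICML2025:upper_bounded_(XX-H)DELTA} once for each $\tau\in\{1,\ldots,s\}$ with the same parameter $\delta$ and then collecting all failure probabilities additively.
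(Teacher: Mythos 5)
Your proposal is correct and follows essentially the same route as the paper's proof: the identical three-way decomposition through $\mathbf{H}_{\mathcal{I}_\tau}\hat{\mathbf{w}}_\tau-\hat{\mathbf{X}}_{\mathcal{I}_\tau}\mathbf{Y}_{\mathcal{I}_\tau}$ and $\mathbf{H}_{\mathcal{I}_\tau}\mathbf{w}^\ast-\hat{\mathbf{X}}_{\mathcal{I}_\tau}\mathbf{Y}_{\mathcal{I}_\tau}$, bounded respectively by $\hat{\gamma}_\tau$ (DS feasibility via Lemma \ref{lemma:ICML25:optimal_solution_constraint}(ii)) and $\gamma_\tau$ (Lemma \ref{lemma:ICML25:optimal_solution_constraint}(i)), with the residual handled by Lemma \ref{lemma:ICML2025:upper_bounded_(XX-H)DELTA} and a union bound over $\tau\leq s$ that reproduces the stated failure probability. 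The accounting of which terms appear once versus twice in $\gamma_\tau+\hat{\gamma}_\tau$ also matches the paper exactly.
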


\begin{proof}[of Lemma \ref{lemma:ICML2025:upper_bound_XXDelta_s}]
    We decompose $\left\Vert \frac{1}{\tau}\mathbf{X}_{\mathcal{I}_{\tau}}\mathbf{X}^\top_{\mathcal{I}_{\tau}}\Delta_{\tau}\right\Vert_{\infty}$
    into three components.
    \begin{align*}
        &\left\Vert \frac{1}{\tau}\mathbf{X}_{\mathcal{I}_{\tau}}\mathbf{X}^\top_{\mathcal{I}_{\tau}}\Delta_{\tau}\right\Vert_{\infty}\\
        =&\left\Vert \frac{1}{\tau}\mathbf{H}_{\mathcal{I}_{\tau}}\Delta_{\tau}\right\Vert_{\infty}
        +\left\Vert\frac{1}{\tau}\left(\mathbf{X}_{\mathcal{I}_{\tau}}\mathbf{X}^\top_{\mathcal{I}_{\tau}}
        -\mathbf{H}_{\mathcal{I}_{\tau}}\right)\Delta_{\tau}\right\Vert_{\infty}\\
        =&\frac{1}{\tau}\left\Vert \mathbf{H}_{\mathcal{I}_{\tau}}\hat{\mathbf{w}}_{\tau}
        -\hat{\mathbf{X}}^\top_{\mathcal{I}_{\tau}}\mathbf{Y}_{\mathcal{I}_{\tau}}
        -\left(\mathbf{H}_{\mathcal{I}_{\tau}}\mathbf{w}^\ast-
        \hat{\mathbf{X}}^\top_{\mathcal{I}_{\tau}}\mathbf{Y}_{\mathcal{I}_{\tau}}\right)\right\Vert_{\infty}
        +\left\Vert\frac{1}{\tau}\left(\mathbf{X}_{\mathcal{I}_{\tau}}\mathbf{X}^\top_{\mathcal{I}_{\tau}}
        -\mathbf{H}_{\mathcal{I}_{\tau}}\right)\Delta_{\tau}\right\Vert_{\infty}\\
        \leq&\frac{1}{\tau}\left\Vert \mathbf{H}_{\mathcal{I}_{\tau}}\hat{\mathbf{w}}_{\tau}
        -\hat{\mathbf{X}}^\top_{\mathcal{I}_{\tau}}\mathbf{Y}_{\mathcal{I}_{\tau}}\right\Vert_{\infty}
        +\frac{1}{\tau}\left\Vert\mathbf{H}_{\mathcal{I}_{\tau}}\mathbf{w}^\ast
        -\hat{\mathbf{X}}^\top_{\mathcal{I}_{\tau}}\mathbf{Y}_{\mathcal{I}_{\tau}}\right\Vert_{\infty}
        +\left\Vert\frac{1}{\tau}\left(\mathbf{X}_{\mathcal{I}_{\tau}}\mathbf{X}^\top_{\mathcal{I}_{\tau}}
        -\mathbf{H}_{\mathcal{I}_{\tau}}\right)\Delta_{\tau}\right\Vert_{\infty}.
    \end{align*}
    If $\hat{\gamma}_{\tau}\geq \gamma_{\tau}$ for all $\tau\leq s$,
    then Lemma \ref{lemma:ICML25:optimal_solution_constraint}
    ensures that the second component is upper bounded by $\gamma_{\tau}$,
    and the first component is upper bounded by $\hat{\gamma}_{\tau}$.
    By Lemma \ref{lemma:ICML2025:upper_bounded_(XX-H)DELTA},
    we can also obtain an upper bound on the third component.
    Combining all results concludes the proof.
\end{proof}

    \begin{Mylemma}
    \label{lemma:ICML25:Dantzig2005}
        Let $S=\mathrm{Supp}(\mathbf{w}^\ast)$.
        For any $s\geq 1$,
        if $\hat{\gamma}_{\tau}\geq \gamma_{\tau}$ for all $\tau\leq s$,
        then w.p. at least $1-s\left(5+\log_{1.5}\frac{2(d-2)}{3(k-2)}\right)\delta$,
        $$
            \forall \tau\leq s,\quad\Vert \Delta_{\tau}(S^c)\Vert_1 \leq \Vert \Delta_{\tau}(S)\Vert_1,
        $$
        in which $\Delta_{\tau}=\hat{\mathbf{w}}_{\tau}-\mathbf{w}^\ast$
        and $\hat{\mathbf{w}}_{\tau}$ be the solution of $\mathrm{Ds}(\hat{\gamma}_{\tau})$.
    \end{Mylemma}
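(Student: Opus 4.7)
The plan is to follow the classical cone-condition argument for the Dantzig Selector, but route it through Lemma~\ref{lemma:ICML25:optimal_solution_constraint}, which already establishes feasibility of $\mathbf{w}^{\ast}$ and the $\ell_1$-optimality bound $\Vert \hat{\mathbf{w}}_{\tau}\Vert_1 \le \Vert \mathbf{w}^{\ast}\Vert_1$ on the required high-probability event. Once that ingredient is in hand, the inequality $\Vert \Delta_{\tau}(S^c)\Vert_1 \le \Vert \Delta_{\tau}(S)\Vert_1$ is a purely deterministic consequence of the support structure of $\mathbf{w}^{\ast}$ and the triangle inequality.

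Concretely, I would first invoke Lemma~\ref{lemma:ICML25:optimal_solution_constraint} (part (ii)) to obtain, with probability at least $1-s(5+\log_{1.5}\tfrac{2(d-2)}{3(k-2)})\delta$, the simultaneous conclusions that $\mathrm{DS}(\hat{\gamma}_{\tau})$ is feasible and $\Vert\hat{\mathbf{w}}_{\tau}\Vert_1 \le \Vert\mathbf{w}^{\ast}\Vert_1$ for every $\tau\le s$. Working on that event, I would then exploit that $S = \mathrm{Supp}(\mathbf{w}^{\ast})$, so $\mathbf{w}^{\ast}(S^c)=\mathbf{0}$ and hence $\Vert\mathbf{w}^{\ast}\Vert_1 = \Vert\mathbf{w}^{\ast}(S)\Vert_1$ and $\Delta_{\tau}(S^c) = \hat{\mathbf{w}}_{\tau}(S^c)$. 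Decomposing $\hat{\mathbf{w}}_{\tau}$ along the disjoint index sets $S$ and $S^c$ yields
$$
\Vert\hat{\mathbf{w}}_{\tau}\Vert_1
= \Vert\hat{\mathbf{w}}_{\tau}(S)\Vert_1 + \Vert\hat{\mathbf{w}}_{\tau}(S^c)\Vert_1
\ge \bigl(\Vert\mathbf{w}^{\ast}(S)\Vert_1 - \Vert\Delta_{\tau}(S)\Vert_1\bigr) + \Vert\Delta_{\tau}(S^c)\Vert_1,
$$
where the inequality uses the reverse triangle inequality on the $S$-part.

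Combining this lower bound with the optimality inequality $\Vert\hat{\mathbf{w}}_{\tau}\Vert_1 \le \Vert\mathbf{w}^{\ast}\Vert_1 = \Vert\mathbf{w}^{\ast}(S)\Vert_1$ and cancelling $\Vert\mathbf{w}^{\ast}(S)\Vert_1$ from both sides gives
$$
\Vert\Delta_{\tau}(S^c)\Vert_1 \le \Vert\Delta_{\tau}(S)\Vert_1,
$$
as claimed. Taking a union bound is unnecessary at this step, since the ``for all $\tau\le s$'' quantifier is already absorbed into Lemma~\ref{lemma:ICML25:optimal_solution_constraint}.

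There is essentially no hard obstacle here: the only subtle point is that feasibility of $\mathbf{w}^{\ast}$ in $\mathrm{DS}(\hat{\gamma}_{\tau})$ is not automatic, because the constraint involves the random, algorithm-dependent quantities $\hat{\mathbf{X}}_{\mathcal{I}_{\tau}}\mathbf{Y}_{\mathcal{I}_{\tau}}$ and $\mathbf{H}_{\mathcal{I}_{\tau}}$; this is exactly why the statement is conditional on $\hat{\gamma}_{\tau} \ge \gamma_{\tau}$ and why we must invoke the high-probability event from Lemma~\ref{lemma:ICML25:optimal_solution_constraint} rather than a deterministic feasibility argument. Everything else reduces to the standard splitting over $S$ and $S^c$ together with the triangle inequality.
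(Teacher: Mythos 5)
Your proposal is correct and follows essentially the same route as the paper's proof: invoke Lemma~\ref{lemma:ICML25:optimal_solution_constraint} to get $\Vert\hat{\mathbf{w}}_{\tau}\Vert_1\leq\Vert\mathbf{w}^{\ast}\Vert_1$ on the stated event, split $\Vert\hat{\mathbf{w}}_{\tau}\Vert_1$ over $S$ and $S^c$, and apply the (reverse) triangle inequality on the $S$-part. The only cosmetic difference is that the paper writes the chain starting from $\Vert\Delta_{\tau}(S^c)\Vert_1=\Vert\hat{\mathbf{w}}_{\tau}\Vert_1-\Vert\hat{\mathbf{w}}_{\tau}(S)\Vert_1$ rather than rearranging a lower bound on $\Vert\hat{\mathbf{w}}_{\tau}\Vert_1$, which is the same algebra.
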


    Recalling the definition of the restricted set $\Omega_S$
    in Assumption \ref{ass:ICML25:restricted_eigenvalue},
    $\Delta_{\tau}$ belongs to $\Omega_S$ with $\alpha=1$.
    In this way, it is possible to use the $(\delta_S,S,\alpha)$-compatibility condition.
    \begin{proof}[of Lemma \ref{lemma:ICML25:Dantzig2005}]
        By Lemma \ref{lemma:ICML25:optimal_solution_constraint},
        we obtain, w.p. at least $1-s\left(5+\log_{1.5}\frac{2(d-2)}{3(k-2)}\right)\delta$,
        $\Vert\hat{\mathbf{w}}_{\tau}\Vert_1\leq \Vert\mathbf{w}^\ast\Vert_1$
        for all $\tau\leq s$.
        Since $w^\ast_i=0$ for $i\in S^c$.
        We have
        $$
            \Vert \Delta_{\tau}(S^c)\Vert_1
            =\Vert \hat{\mathbf{w}}_{\tau}(S^c)\Vert_1
            =\Vert \hat{\mathbf{w}}_{\tau}\Vert_1-\Vert \hat{\mathbf{w}}_{\tau}(S)\Vert_1
            \leq\Vert \mathbf{w}^\ast(S)\Vert_1-\Vert \hat{\mathbf{w}}_{\tau}(S)\Vert_1
            \leq\Vert \mathbf{w}^\ast(S)- \hat{\mathbf{w}}_{\tau}(S)\Vert_1,
        $$
        which concludes the proof.
    \end{proof}

\section{Proof of Lemma \ref{lemma:estimator_error:DS-OSLRC}}

\begin{proof}[of Lemma \ref{lemma:estimator_error:DS-OSLRC}]
    Let $s'\in[1,\lfloor\sqrt{T}\rfloor]$.
    For any $s\in[1,s']$,
    we will separately give a lower bound
    and an upper bound on $\frac{1}{s}\Vert \mathbf{X}^\top_{\mathcal{I}_s}\Delta_s\Vert^2_2$.
    If $\hat{\gamma}_s\geq \gamma_s$ for all $1\leq s\leq s'$,
    then by Lemma \ref{lemma:ICML25:Dantzig2005} and Assumption \ref{ass:ICML25:restricted_eigenvalue},
    we have,
    w.p. at least $1-s'\left(5+\log_{1.5}\frac{d+k}{k-2}\right)\delta$,
    a lower bound is given as follows,
    $$
        \forall s\in[1,s'],\quad\frac{1}{s}\Vert \mathbf{X}^\top_{\mathcal{I}_s}\Delta_s\Vert^2_2
        \geq \frac{\delta^2_S}{\vert S\vert}\Vert \Delta_s(S)\Vert^2_1
        \geq \frac{\delta^2_S}{k}\Vert \Delta_s(S)\Vert^2_1,
    $$
    in which $\vert S\vert=\Vert\mathbf{w}^\ast\Vert_0\leq k$.
    Besides,
    an upper bound is as follows,
    \begin{align*}
        \forall s\in[1,s'],\quad\frac{1}{s}\Vert \mathbf{X}^\top_{\mathcal{I}_s}\Delta_s\Vert^2_2
        \leq\left\Vert \frac{1}{s}\Delta^\top_s\mathbf{X}_{\mathcal{I}_s}
        \mathbf{X}^\top_{\mathcal{I}_s}\right\Vert_{\infty}\Vert\Delta_s\Vert_1
        \leq2\left\Vert \frac{1}{s}\mathbf{X}_{\mathcal{I}_s}
        \mathbf{X}^\top_{\mathcal{I}_s}\Delta_s\right\Vert_{\infty}\Vert\Delta_s(S)\Vert_1.
    \end{align*}
    Lemma \ref{lemma:ICML2025:upper_bound_XXDelta_s} further provides
    an upper bound on $\left\Vert \frac{1}{s}\mathbf{X}_{\mathcal{I}_s}
        \mathbf{X}^\top_{\mathcal{I}_s}\Delta_s\right\Vert_{\infty}$.
    Combining the lower bound and upper bound,
    we obtain,
    w.p. at least $1-s'\left(6+\log_{1.5}\frac{d+k}{k-2}\right)\delta$,
    \begin{equation}
    \label{eq:ICML2025:generaly_bound_on_estimator_error}
    \begin{split}
        \forall s\leq s',\quad&\frac{\delta^2_S\Vert \Delta_s(S)\Vert_1}{2k}
        \leq 2\left(\frac{8}{3}+2\sigma\right)\frac{g_{d,k}}{s}\ln\frac{d}{\delta}
        +2\left(6.9+1.2\sigma\right)\sqrt{\frac{d-1}{s(k-1)}\ln\frac{d}{\delta}} +\nu_s+\\
        &\quad\frac{1}{s}\sqrt{3 g_{d,k}\sum^s_{\tau=1}\Vert\Delta_{\tau-1}(S)\Vert^2_1\ln\frac{d}{\delta}}
        +\frac{2g_{d,k}}{3s}\Vert \Delta_s\Vert_1\ln\frac{d^2}{\delta}
        +\sqrt{\frac{2g_{d,k}}{s}\ln\frac{d^2}{\delta}}\Vert\Delta_s\Vert_1.
    \end{split}
    \end{equation}
    Solving the inequality \eqref{eq:ICML2025:generaly_bound_on_estimator_error}
    will provide an explicit upper bound on $\Vert\Delta_s(S)\Vert_1$.
    However, it is highly non-trivial,
    as the right-hand side of the inequality depends on
    $\sqrt{\sum^s_{\tau=1}\Vert\Delta_{\tau-1}(S)\Vert^2_1}$
    and $\nu_s$ that will be estimated dynamically.
    An obvious upper bound can be derived by using the inequality
    $\Vert\Delta_{\tau}(S)\Vert_1\leq 2$ for all $\tau \leq s-1$.
    However, such a simple analysis overlooked the fact
    that $\Vert\Delta_{\tau}(S)\Vert_1$ becomes smaller as $\tau$ increases.
    We will carefully control
    $\sqrt{\sum^s_{\tau=1}\Vert\Delta_{\tau-1}(S)\Vert^2_1}$ by an induction method,
    naturally yielding a tighter upper bound of $\Vert \Delta_s(S)\Vert_1$.
    Next we consider three cases.
    \textbf{It is crucial to verify $\hat{\gamma}_s \geq \gamma_s$ for all $s \in[1,s']$}.

\subsection{Case 1: $s\leq s_0$}

    Let $s'=s_0$.
    We use the trivial upper bound $\Vert\Delta_{\tau}(S)\Vert_1 \leq 2$.
    Recalling that $\nu_s$ is defined as follows,
    $$
        \forall s\leq s_0,\quad \nu_s
        =\frac{2}{\sqrt{s}}\sqrt{3 g_{d,k}\ln\frac{d}{\delta}}
        \geq
        \frac{1}{s}\sqrt{3 g_{d,k}\sum^s_{\tau=1}\Vert\Delta_{\tau-1}(S)\Vert^2_1\ln\frac{d}{\delta}}
        \Rightarrow \hat{\gamma}_s\geq \gamma_s.
    $$
    Therefore, for any $s\leq s_0$,
    \textbf{by Lemma \ref{lemma:ICML25:optimal_solution_constraint},
    w.p. at least $1-s_0\left(6+\log_{1.5}\frac{d+k}{k-2}\right)\delta$,
    $\mathrm{DS}(\hat{\gamma}_s)$
    has a feasible solution at least}, i.e., $\mathbf{w}^\ast$.
    It is proper to derive the inequality \eqref{eq:ICML2025:generaly_bound_on_estimator_error}.
    We further obtain
    $$
        \frac{\delta^2_S\Vert \Delta_s(S)\Vert_1}{2k}
        \leq\left(8+4\sigma\right)\frac{g_{d,k}}{s}\ln\frac{d}{\delta}
        +2\left(6.9+1.2\sigma\right)\sqrt{\frac{(d-1)\ln\frac{d}{\delta}}{s(k-1)}}+
        \frac{4+4\sqrt{3}}{\sqrt{s}}\sqrt{g_{d,k}\ln\frac{d}{\delta}},
    $$
    where we use the inequality $\ln\frac{d^2}{\delta}\leq 2\ln\frac{d}{\delta}$.
    Rearranging terms concludes the desired result.

\subsection{Case 2: $s_0<s\leq s_1$}

    Let $s'=s_1$.
    We start from \eqref{eq:ICML2025:generaly_bound_on_estimator_error}.
    Substituting into the value of $s_0$,
    the inequality \eqref{eq:ICML2025:inequality_s_0} holds.
    \begin{equation}
    \label{eq:ICML2025:inequality_s_0}
        \forall s> s_0,\quad
        4k\sqrt{\frac{2g_{d,k}}{s}\cdot \ln\frac{d^2}{\delta}}
        \leq \frac{1}{4}\delta^2_S.
    \end{equation}
    If $\hat{\gamma}_s\geq \gamma_s$ for all $s\in(s_0,s_1]$,
    then,
    the inequality \eqref{eq:ICML2025:generaly_bound_on_estimator_error} can be rewritten as follows.
    \begin{align*}
        &\frac{\delta^2_S\Vert \Delta_s(S)\Vert_1}{2k}
        \leq\left(8+4\sigma\right)\frac{g_{d,k}}{s}\ln\frac{d}{\delta}+\\
        &\qquad\qquad\quad\frac{2(6.9+1.2\sigma)}{\sqrt{s}}\sqrt{\frac{d-1}{k-1}\ln\frac{d}{\delta}}+\nu_s+
        \frac{1}{s}\sqrt{3g_{d,k}\sum^s_{\tau=1}\Vert\Delta_{\tau-1}(S)\Vert^2_1\ln\frac{d}{\delta}}+
        \frac{\delta^2_S\Vert \Delta_s(S)\Vert_1}{8k},
    \end{align*}
    in which we use Lemma \ref{lemma:ICML25:Dantzig2005} and \eqref{eq:ICML2025:inequality_s_0}.
    Rearranging terms and substituting into the value of $s_0$ yields,
    \begin{align}
        &\Vert \Delta_s(S)\Vert_1\nonumber\\
        \leq&\frac{8k\nu_s}{3\delta^2_S}+\frac{a_1kg_{d,k}}{\delta^2_Ss}
        +\frac{a_2k}{\delta^2_S\sqrt{s}}\sqrt{\frac{d-1}{k-1}}
        +\frac{a_3k\sqrt{g_{d,k}}}{\delta^2_Ss}\sqrt{\sum^{s_0}_{\tau=1}\Vert\Delta_{\tau-1}(S)\Vert^2_1
        +\sum^{s}_{\tau=s_0+1}\Vert\Delta_{\tau-1}(S)\Vert^2_1}\nonumber\\
        \leq&\frac{8k\nu_s}{3\delta^2_S}+
        \frac{\delta^2_S\frac{a_1}{k}+48 a_3\sqrt{\ln\frac{d^2}{\delta}}}{\delta^4_S}
        \frac{k^2g_{d,k}}{s}
        +\frac{a_2k}{\delta^2_S\sqrt{s}}\sqrt{\frac{d-1}{k-1}}
        +\frac{a_3k\sqrt{g_{d,k}}}{s\delta^2_S}\sqrt{\sum^{s-1}_{\tau=s_0}\Vert\Delta_{\tau}(S)\Vert^2_1},
    \label{eq:ICML2025:refined_upper_bound:estimator_error}
    \end{align}
    in which $a_1$, $a_2$ and $a_3$ are defined in Lemma \ref{lemma:estimator_error:DS-OSLRC}.
    For simplicity, let
    \begin{align*}
        a'_4=\delta^2_S\frac{a_1}{k}+48a_3\sqrt{\ln\frac{d^2}{\delta}}.
    \end{align*}
    Next we will use the induction method to prove the convergence rate of $\Vert \Delta_s(S)\Vert_1$.
    To be specific,
    we will prove that,
    w.p. at least $1-s_1\left(6+\log_{1.5}\frac{d+k}{k-2}\right)\delta$,
    \begin{equation}
    \label{eq:ICML2025:second_type_of_estimator_error}
    \begin{split}
        \forall s_0 < s\leq s_1,\quad \Vert \Delta_s(S)\Vert_1
        \leq& \frac{a_4}{1-\frac{2\sqrt{3}}{9}} \frac{k^2g_{d,k}}{s\delta^4_S},\\
        a_4=&\delta^2_S\frac{a_1}{k}
        +24a_2\sqrt{\frac{k-2}{d-2}\ln\frac{d^2}{\delta}}
        +4a_3\left(24\sqrt{\ln\frac{d^2}{\delta}}+\frac{\delta^2_S}{k\sqrt{g_{d,k}}}\right).
    \end{split}
    \end{equation}
    We first analyze the case $s=s_0+1$.
    Recalling that $\mu_1=\frac{9}{9-2\sqrt{3}}$ and
    \begin{align*}
        \nu_{s_0+1}
        =&\frac{24kg_{d,k}}{(s_0+1)\delta^2_S}\sqrt{12\ln\left(\frac{d}{\delta}\right)\ln\frac{d^2}{\delta}}
        +\frac{2\sqrt{3g_{d,k}\ln\frac{d}{\delta}}}{s_0+1}
        \geq \frac{1}{s_0+1}\sqrt{3g_{d,k}\sum^{s_0+1}_{\tau=1}\Vert\Delta_{\tau-1}(S)\Vert^2_1\ln\frac{d}{\delta}},\\
        \nu_s
        =&\frac{24kg_{d,k}}{s\delta^2_S}\sqrt{12\ln\left(\frac{d}{\delta}\right)\ln\frac{d^2}{\delta}}
        +\frac{2\sqrt{3g_{d,k}\ln\frac{d}{\delta}}}{s}
        +\frac{\mu_1 a_4\sqrt{3g_{d,k}}}{\delta^4_S\cdot s}
        \sqrt{\sum^{s-1}_{\tau=s_0+1}\frac{k^4g^2_{d,k}}{\tau^2}\ln\frac{d}{\delta}},\\
        &\forall s_0+1 < s\leq s_1.
    \end{align*}
    \textbf{We still have $\hat{\gamma}_{s_0+1}\geq \gamma_{s_0+1}$.
    By Lemma \ref{lemma:ICML25:optimal_solution_constraint},
    with a high probability,
    $\mathrm{DS}(\hat{\gamma}_{s_0+1})$ has a feasible solution at least}.
    In this way,
    it is proper to derive \eqref{eq:ICML2025:refined_upper_bound:estimator_error} for any $s\leq s_0+1$.
    \begin{align*}
        &\Vert \Delta_{s_0+1}(S)\Vert_1\\
        \leq& \frac{8k\nu_{s_0+1}}{3\delta^2_S}+a'_4\frac{k^2g_{d,k}}{(s_0+1)\delta^4_S}
        +\frac{a_2k}{\delta^2_S}\sqrt{\frac{d-1}{(s_0+1)(k-1)}}
        +\frac{a_3k}{\delta^2_S}\frac{\sqrt{g_{d,k}}}{(s_0+1)}\sqrt{\Vert\Delta_{s_0}(S)\Vert^2_1}\\
        \leq& \left(64\sqrt{12\ln\left(\frac{d}{\delta}\right)\ln\frac{d^2}{\delta}}
        +a'_4+a_2\sqrt{24^2\frac{k-2}{d-2}\ln\frac{d^2}{\delta}}
        +\frac{16\sqrt{3\ln\frac{d}{\delta}}\delta^2_S}{3k\sqrt{g_{d,k}}}
        +\frac{2 a_3\delta^2_S}{k\sqrt{g_{d,k}}}\right)\frac{k^2g_{d,k}}{(s_0+1)\delta^4_S}\\
        \leq&a_4\frac{k^2g_{d,k}}{(s_0+1)\delta^4_S},
    \end{align*}
    in which
    \begin{align*}
        a_4
        =&\delta^2_S\frac{a_1}{k}
        +24a_2\sqrt{\frac{k-2}{d-2}\ln\frac{d^2}{\delta}}
        +4a_3\left(24\sqrt{\ln\frac{d^2}{\delta}}+\frac{\delta^2_S}{k\sqrt{g_{d,k}}}\right).
    \end{align*}
    Thus \eqref{eq:ICML2025:second_type_of_estimator_error} holds for $s=s_0+1$.

    Now assuming that
    \eqref{eq:ICML2025:second_type_of_estimator_error} holds for any $s=r\in [s_0+1,s_1)$,
    \textbf{implying that $\hat{\gamma}_{r+1}\geq \gamma_{r+1}$ and $\mathrm{DS}(\hat{\gamma}_{r+1})$ has a feasible solution at least}.
    In this way,
    it is proper to derive \eqref{eq:ICML2025:refined_upper_bound:estimator_error} for any $s\leq r+1$,
    and more importantly,
    it is possible to analyze the upper bound on $\Vert \Delta_{r+1}(S)\Vert_1$.
    \begin{align*}
        &\Vert \Delta_{r+1}(S)\Vert_1\\
        \leq& \frac{8k\nu_{r+1}}{3\delta^2_S}+\frac{a'_4k^2g_{d,k}}{(r+1)\delta^4_S}
        +\frac{a_2k}{\delta^2_S}\sqrt{\frac{d-1}{(r+1)(k-1)}}
        +\frac{a_3k}{\delta^2_S}\frac{\sqrt{g_{d,k}}}{r+1}
        \sqrt{\Vert\Delta_{s_0}(S)\Vert^2_1+\sum^r_{\tau=s_0+1}\Vert\Delta_{\tau}(S)\Vert^2_1}\\
        \leq&a_4\frac{k^2g_{d,k}}{(r+1)\delta^4_S}
        +\frac{8k\mu_1 a_4\sqrt{3g_{d,k}\ln\frac{d}{\delta}}}{3\delta^6_S(r+1)}
        \sqrt{\sum^{r}_{\tau=s_0+1}\frac{k^4g^2_{d,k}}{\tau^2}}
        +\frac{a_3k}{\delta^6_S}\frac{\sqrt{g_{d,k}}}{r+1}\cdot
        \mu_1 a_4\sqrt{\sum^r_{\tau=s_0+1}\frac{k^4g^2_{d,k}}{\tau^2}}\\
        \leq&a_4\frac{k^2g_{d,k}}{(r+1)\delta^4_S}
        +2a_3\frac{k}{\delta^6_S}\frac{\sqrt{g_{d,k}}}{r+1}\cdot
        \mu_1 a_4\sqrt{k^4g^2_{d,k}\left(\frac{1}{s_0}-\frac{1}{r}\right)}
        \qquad\qquad\qquad\qquad\qquad (\mathrm{By}~\mathrm{Lemma}~\ref{lemma:ICML2025:summation:s^-2})\\
        \leq&a_4\frac{k^2g_{d,k}}{(r+1)\delta^4_S}
        +2a_3\frac{k}{\delta^6_S}\frac{\sqrt{g_{d,k}}}{r+1}\cdot
        \mu_1 a_4\sqrt{\frac{k^4g^2_{d,k}}{\frac{24^2 g_{d,k}k^2}{\delta^4_S}\ln\frac{d^2}{\delta}}}\\
        =&\mu_1 a_4\frac{k^2g_{d,k}}{\delta^4_S(r+1)}.
    \end{align*}
    Thus \eqref{eq:ICML2025:second_type_of_estimator_error} also holds for $s=r+1$.
    We conclude that \eqref{eq:ICML2025:second_type_of_estimator_error} holds for all $s\in(s_0,s_1]$.

\subsection{Case 3: $s_1<s<\lfloor\sqrt{T}\rfloor$}

    Let $s'=\lfloor\sqrt{T}\rfloor$.
    Recalling that $\mu_2=\left(1-\frac{\sqrt{6}}{9\sqrt{\frac{d-2}{k-2}\ln\frac{d^2}{\delta}}}\right)^{-1}$
    and
    \begin{align*}
        \nu_{s_1+1}
        =&\frac{\sqrt{3g_{d,k}}}{s_1+1}\left(\frac{48k}{\delta^2_S}
        \sqrt{g_{d,k}\ln\left(\frac{d}{\delta}\right)\ln\frac{d^2}{\delta}}+
        \sqrt{4\ln\frac{d}{\delta}}
        +\frac{\mu_1a_4k^2g_{d,k}}{\delta^4_S}\sqrt{\sum^{s_1}_{\tau=s_0+1}\frac{1}{\tau^2}\ln\frac{d}{\delta}}\right),\\
        \nu_s
        =&\frac{\sqrt{3g_{d,k}}}{s}\left(\frac{48k}{\delta^2_S}
        \sqrt{g_{d,k}\ln\left(\frac{d}{\delta}\right)\ln\frac{d^2}{\delta}}+
        \sqrt{4\ln\frac{d}{\delta}}
        +\frac{\mu_1a_4k^2g_{d,k}}{\delta^4_S}\sqrt{\sum^{s_1}_{\tau=s_0+1}\frac{1}{\tau^2}\ln\frac{d}{\delta}}
        +\right.\\
        &\left.\frac{\mu_2a_5}{\delta^2_S}\sqrt{\sum^{s-1}_{\tau=s_1+1}\frac{k^2(d-1)}{\tau(k-1)}\ln\frac{d}{\delta}}\right),
        \quad\forall s>s_1+1.
    \end{align*}
    As \eqref{eq:ICML2025:second_type_of_estimator_error} holds,
    it can be verified that $\hat{\gamma}_{s_1+1}\geq \gamma_{s_1+1}$.
    \textbf{By Lemma \ref{lemma:ICML25:optimal_solution_constraint},
    with a high probability,
    $\mathrm{DS}(\hat{\gamma}_{s_1+1})$ has a feasible solution at least}.
    In this way,
    it is proper to derive \eqref{eq:ICML2025:refined_upper_bound:estimator_error} for any $s\leq s_1+1$.
    Similarly,
    we also use the induction method to prove the convergence rate of $\Vert \Delta_s(S)\Vert_1$
    and $\mathrm{DS}(\hat{\gamma}_s)$ has a feasible solution for all $s>s_1$.
    Specifically,
    by \eqref{eq:ICML2025:refined_upper_bound:estimator_error},
    we will prove that,
    with probability at least $1-\sqrt{T}\left(6+\log_{1.5}\frac{d+4}{k-2}\right)\delta$,
    \begin{equation}
    \label{eq:ICML2025:third_type_of_estimator_error}
    \begin{split}
        &\forall s>s_1,\quad \Vert \Delta_s(S)\Vert_1
        \leq \frac{\mu_2a_5k}{\delta^2_S}\sqrt{\frac{d-1}{s(k-1)}},\\
        &a_5=
        \mu_1\left(\delta^2_S\frac{\frac{8}{9}+\frac{4\sigma}{9}}{k}
            +\frac{32\sqrt{3}}{3}
            +\frac{4\sqrt{3}\delta^2_S}{9k\sqrt{g_{d,k}\ln{\frac{d^2}{\delta}}}}\right)
            +a_2+(\mu_1-1)a_2\sqrt{\frac{k-2}{(d-2)\ln\frac{d^2}{\delta}}}.
    \end{split}
    \end{equation}
    We first verify the case $s=s_1+1$.
    \begin{align*}
        &\Vert \Delta_{s_1+1}(S)\Vert_1\\
        \leq&\frac{8k\nu_{s_1+1}}{3\delta^2_S}+\frac{a'_4k^2g_{d,k}}{(s_1+1)\delta^4_S}
        +\frac{a_2k}{\delta^2_S\sqrt{s_1+1}}\sqrt{\frac{d-1}{k-1}}
        +\frac{a_3k}{\delta^2_S}\frac{\sqrt{g_{d,k}}}{s_1+1}
        \sqrt{\Vert\Delta_{s_0}(S)\Vert^2_1+\sum^{s_1}_{s=s_0+1}\Vert\Delta_{s}(S)\Vert^2_1}\\
        \leq&\underbrace{\frac{8k\nu_{s_1+1}}{3\delta^2_S}}_{\Xi_1}
        +\underbrace{\frac{a'_4k^2g_{d,k}}{(s_1+1)\delta^4_S}
        +\frac{a_2k}{\delta^2_S\sqrt{s_1+1}}\sqrt{\frac{d-1}{k-1}}}_{\Xi_2}
        +\underbrace{\frac{2a_3k}{\delta^2_S}\frac{\sqrt{g_{d,k}}}{s_1+1}}_{\Xi_3}
        +\underbrace{\frac{a_3k}{\delta^2_S}\frac{\sqrt{g_{d,k}}}{s_1+1}
        \sqrt{\sum^{s_1}_{s=s_0+1}\Vert\Delta_{s}(S)\Vert^2_1}}_{\Xi_4}.
    \end{align*}
    Next we separately analyze the four terms on the right-hand side of the inequality.

\noindent First, we analyze $\Xi_1$.
    Substituting into the value of $\nu_{s_1+1}$ and $s_0$,
    we obtain
    \begin{align*}
        &\Xi_1\\
        =&\frac{8k}{3\delta^2_S}\frac{\sqrt{3g_{d,k}}}{s_1+1}
        \left(\frac{48k}{\delta^2_S}\sqrt{g_{d,k}\ln\left(\frac{d}{\delta}\right)\ln\frac{d^2}{\delta}}+
        \sqrt{4\ln\frac{d}{\delta}}
        +\frac{\mu_1a_4k^2g_{d,k}}{\delta^4_S}\sqrt{\sum^{s_1-1}_{\tau=s_0+1}\frac{1}{\tau^2}\ln\frac{d}{\delta}}\right)\\
        \leq&\frac{1}{\delta^2_S\sqrt{s_1+1}}
        \left(a_3k\frac{g_{d,k}}{\sqrt{s_1+1}}\frac{48k}{\delta^2_S}\sqrt{\ln\frac{d^2}{\delta}}
        +2a_3k\frac{\sqrt{g_{d,k}}}{\sqrt{s_1+1}}
        +a_3k\frac{\sqrt{g_{d,k}}}{\sqrt{s_1+1}}\frac{\mu_1a_4k^2g_{d,k}}{\delta^4_S}\sqrt{\frac{1}{s_0}}\right)\\
        =&\frac{1}{\delta^2_S\sqrt{s_1+1}}
        \left(\frac{2a_3\sqrt{d-1}k}{\sqrt{(k-1)\ln\frac{d}{\delta}}}
        +\frac{a_3\delta^2_S}{12k\sqrt{g_{d,k}\ln(\frac{d}{\delta})\ln\frac{d^2}{\delta}}}k\sqrt{\frac{d-1}{k-1}}
        +\frac{a_3\mu_1a_4}{24^2\sqrt{\ln\frac{d}{\delta}}\ln\frac{d^2}{\delta}}
        \frac{k\sqrt{d-1}}{\sqrt{k-1}}\right)\\
        =&\frac{k}{\delta^2_S}\frac{\sqrt{d-1}}{\sqrt{(k-1)(s_1+1)}}
        \left(\frac{2a_3}{\sqrt{\ln\frac{d}{\delta}}}
        +\frac{a_3\delta^2_S}{12k\sqrt{g_{d,k}\ln(\frac{d}{\delta})\ln\frac{d^2}{\delta}}}
        +\frac{a_3\mu_1a_4}{24^2\sqrt{\ln\frac{d}{\delta}}\ln\frac{d^2}{\delta}}\right),
    \end{align*}
    in which $a_3=\frac{8\sqrt{3}}{3}\sqrt{\ln\frac{d}{\delta}}$
    and the first inequality comes from Lemma \ref{lemma:ICML2025:summation:s^-2}.

\noindent Next we analyze $\Xi_2$.
    Substituting into the value of $s_1$, $a_1$, $a_3$ and $a'_4$ yields
    \begin{align*}
        \Xi_2\leq&\left(a'_4\frac{kg_{d,k}}{\sqrt{\frac{24^2g_{d,k}k^2}{\delta^4_S}
        \frac{d-2}{k-2}\ln\left(\frac{d}{\delta}\right)\ln\frac{d^2}{\delta}}\delta^2_S}\frac{\sqrt{k-1}}{\sqrt{d-1}}
        +a_2\right)\frac{k}{\delta^2_S}\sqrt{\frac{d-1}{(s_1+1)(k-1)}}\\
        \leq&\left(\frac{a'_4}{24\sqrt{\ln\left(\frac{d}{\delta}\right)\ln\frac{d^2}{\delta}}}+a_2\right)
        \frac{k}{\delta^2_S}\sqrt{\frac{d-1}{(s_1+1)(k-1)}}\\
        \leq&\left(\frac{\delta^2_S\frac{1}{k}(\frac{64}{3}+\frac{32}{3}\sigma)\ln\frac{d}{\delta}
        +48a_3\sqrt{\ln\frac{d^2}{\delta}}}
        {24\sqrt{\ln\left(\frac{d}{\delta}\right)\ln\frac{d^2}{\delta}}}+a_2\right)
        \frac{k}{\delta^2_S}\sqrt{\frac{d-1}{(s_1+1)(k-1)}}\\
        \leq&\left(\delta^2_S\frac{\frac{8}{9}+\frac{4\sigma}{9}}{k}+\frac{2a_3}{\sqrt{\ln\frac{d}{\delta}}}+a_2\right)
        \frac{k}{\delta^2_S}\sqrt{\frac{d-1}{(s_1+1)(k-1)}}.
    \end{align*}
    For $\Xi_3$,
    we can obtain
    $$
        \Xi_3
        =\frac{2a_3k\sqrt{g_{d,k}}}{\sqrt{\frac{24^2 g_{d,k}k^2}{\delta^4_S}\frac{d-2}{k-2}
        \ln\left(\frac{d}{\delta}\right)\ln\frac{d^2}{\delta}}}
        \frac{1}{\delta^2_S\sqrt{s_1+1}}
        =\frac{a_3\delta^2_S}{12k\sqrt{g_{d,k}\ln\left(\frac{d}{\delta}\right)\ln{\frac{d^2}{\delta}}}}
        \frac{k\sqrt{d-1}}{\delta^2_S\sqrt{(s_1+1)(k-1)}}.
    $$
    Finally,
    we analyze $\Xi_4$.
    By Lemma \ref{lemma:ICML2025:summation:s^-2}
    and \eqref{eq:ICML2025:second_type_of_estimator_error},
    we can obtain
    \begin{align*}
        \frac{a_3k}{\delta^2_S}\frac{\sqrt{g_{d,k}}}{s_1+1}
        \sqrt{\sum^{s_1}_{s=s_0+1}\Vert\Delta_s(S)\Vert^2_1}
        \leq&
        \frac{a_3}{24k\sqrt{g_{d,k}\ln\left(\frac{d}{\delta}\right)
        \ln{\frac{d^2}{\delta}}}}\frac{k\sqrt{d-1}}{\sqrt{(s_1+1)(k-1)}}
        \frac{\mu_1a_4}{\delta^4_S}\cdot \sqrt{\frac{k^4g^2_{d,k}}{s_0}}\\
        =&\frac{a_3\mu_1a_4}{24^2\sqrt{\ln\frac{d}{\delta}}\ln\frac{d^2}{\delta}}\cdot
        \frac{k\sqrt{d-1}}{\delta^2_S\sqrt{(s_1+1)(k-1)}}.
    \end{align*}
    Combining the upper bounds on $\Xi_1,\Xi_2,\Xi_3$ and $\Xi_4$,
    we obtain
    \begin{align*}
        &\Vert \Delta_{s_1+1}(S)\Vert_1\\
        \leq& \left(\frac{4a_3}{\sqrt{\ln\frac{d}{\delta}}}
            +\frac{2a_3\delta^2_S}{12k\sqrt{g_{d,k}\ln(\frac{d}{\delta})\ln\frac{d^2}{\delta}}}
            +\frac{2a_3\mu_1a_4}{24^2\sqrt{\ln\frac{d}{\delta}}\ln\frac{d^2}{\delta}}
            +\delta^2_S\frac{\frac{8}{9}+\frac{4\sigma}{9}}{k}+a_2\right)
            \frac{k\sqrt{d-1}}{\delta^2_S\sqrt{(s_1+1)(k-1)}}\\
        \leq&\left(\delta^2_S\frac{\frac{8}{9}+\frac{4\sigma}{9}}{k}
            +\frac{32\sqrt{3}}{3}
            +\frac{4\sqrt{3}\delta^2_S}{9k\sqrt{g_{d,k}\ln{\frac{d^2}{\delta}}}}
            +a_2\right)
            \cdot \frac{k\sqrt{d-1}}{\delta^2_S\sqrt{(s_1+1)(k-1)}}+\\
            &(\mu_1-1)\left(\delta^2_S\frac{\frac{8}{9}+\frac{4\sigma}{9}}{k}
        +a_2\sqrt{\frac{k-2}{(d-2)\ln\frac{d^2}{\delta}}}
        +\frac{4a_3}{\sqrt{\ln\frac{d^2}{\delta}}}+\frac{4a_3\delta^2_S}{24k\sqrt{g_{d,k}}\ln\frac{d^2}{\delta}}\right)
             \frac{k\sqrt{d-1}}{\delta^2_S\sqrt{(s_1+1)(k-1)}}\\
        \leq&a_5\frac{k\sqrt{d-1}}{\delta^2_S\sqrt{(s_1+1)(k-1)}}.
    \end{align*}
    Thus \eqref{eq:ICML2025:third_type_of_estimator_error} holds for $s=s_1+1$.

    Now assuming that \eqref{eq:ICML2025:third_type_of_estimator_error} holds for any $s=r\geq s_1+1$,
    \textbf{implying that $\hat{\gamma}_{r+1}\geq \gamma_{r+1}$ and $\mathrm{DS}(\hat{\gamma}_{r+1})$ has a feasible solution}.
    In this way,
    it is proper to derive \eqref{eq:ICML2025:refined_upper_bound:estimator_error} for any $s\leq r+1$,
    By the second inequality in Lemma \ref{lemma:ICML2025:summation:s^-2},
    we obtain
    \begin{align*}
        &\Vert \Delta_{r+1}(S)\Vert_1\\
        \leq& \frac{8k\nu_{r+1}}{3\delta^2_S}+\frac{a'_4k^2g_{d,k}}{(r+1)\delta^4_S}
        +\frac{a_2k}{\delta^2_S\sqrt{r+1}}\sqrt{\frac{d-1}{k-1}}
        +\frac{a_3}{\delta^2_S}\frac{k\sqrt{g_{d,k}}}{r+1}
        \sqrt{\sum^{s_1}_{s=s_0}\Vert\Delta_s(S)\Vert^2_1
        +\sum^{r}_{\tau=s_1+1}\Vert\Delta_{\tau}(S)\Vert^2_1}\\
        \leq&\frac{a_5k}{\delta^2_S}\sqrt{\frac{d-1}{(k-1)(r+1)}}
        +2a_3\frac{k}{\delta^2_S}\frac{\sqrt{g_{d,k}}}{\sqrt{r+1}}\cdot\frac{\mu_2 a_5}{\delta^2_S}
        \sqrt{\sum^{r}_{s=s_1+1}\frac{k^2(d-1)}{(r+1)s(k-1)}}\\
        \leq&\frac{a_5k}{\delta^2_S}\sqrt{\frac{d-1}{(k-1)(r+1)}}+
        2a_3\frac{k\sqrt{g_{d,k}}}{\delta^2_S\sqrt{2s_1}}\cdot
        \frac{\mu_2 a_5}{\delta^2_S}\sqrt{\frac{k^2(d-1)}{(r+1)(k-1)}}\\
        =&\frac{a_5k}{\delta^2_S}\sqrt{\frac{d-1}{(k-1)(r+1)}}+
        \frac{\sqrt{6}}{9\sqrt{\frac{d-2}{k-2}\ln\frac{d^2}{\delta}}}\cdot
        \frac{\mu_2 a_5}{\delta^2_S}\sqrt{\frac{k^2(d-1)}{2(r+1)(k-1)}}\\
        =&\frac{\mu_2 a_5}{\delta^2_S}\sqrt{\frac{k(d-1)}{(r+1)(k-1)}}.
    \end{align*}
    Thus \eqref{eq:ICML2025:third_type_of_estimator_error} also holds for $s=r+1$.
    Therefore, \eqref{eq:ICML2025:third_type_of_estimator_error} holds for all $s>s_1$,
    concluding the proof.
\end{proof}

\section{Proof of Theorem \ref{thm:ICML2025:regret_bound_DS-OSLRC}}

\begin{Mylemma}
\label{lemma:ICML2025:convergence_S_s_to_S}
    Let
    $$
        s_2=4\cdot\frac{(\mu_2a_5)^2}{\delta^4_S}\cdot\frac{k^2(d-1)}{\min_{i\in S}\vert w^\ast_i\vert^2\cdot(k-1)}.
    $$
    For any $s> s_2$,
    with probability at least $1-\sqrt{T}\left(6+\log_{1.5}\frac{d+k}{k-2}\right)\delta$, it must be $S_s=S$.
\end{Mylemma}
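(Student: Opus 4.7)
The plan is to upgrade the $\ell_1$-norm error bound on $\Delta_s(S)$ from Lemma \ref{lemma:estimator_error:DS-OSLRC} into an entry-wise separation between the coordinates of $\hat{\mathbf{w}}_s$ indexed by $S$ and those indexed by $S^c$, which then forces the top-$k$ magnitudes of $\hat{\mathbf{w}}_s$ to coincide with $S$, i.e.\ $S_s=S$.

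First, I would establish a purely deterministic sufficient condition for $S_s=S$. For any $j\in S^c$ one has $\hat{w}_{s,j}=\Delta_{s,j}$, hence
\[
\max_{j\in S^c}|\hat{w}_{s,j}|\leq \Vert\Delta_s(S^c)\Vert_{\infty}\leq \Vert\Delta_s(S^c)\Vert_1.
\]
For any $i\in S$ the triangle inequality gives $|\hat{w}_{s,i}|\geq |w^\ast_i|-|\Delta_{s,i}|$, so
\[
\min_{i\in S}|\hat{w}_{s,i}|\geq \min_{i\in S}|w^\ast_i|-\Vert\Delta_s(S)\Vert_{\infty}\geq \min_{i\in S}|w^\ast_i|-\Vert\Delta_s(S)\Vert_1.
\]
By the definition of $S_s$ (the indices of the $k$ largest magnitudes of $\hat{\mathbf{w}}_s$) and the fact that $|S|\leq k=|S_s|$, a strict gap $\min_{i\in S}|\hat{w}_{s,i}|>\max_{j\in S^c}|\hat{w}_{s,j}|$ forces $S_s=S$. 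Combining the two bounds, a sufficient condition is
\[
\Vert\Delta_s(S)\Vert_1+\Vert\Delta_s(S^c)\Vert_1<\min_{i\in S}|w^\ast_i|.
\]

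Next, I would invoke Lemma \ref{lemma:ICML25:Dantzig2005}, valid on the same high-probability event of Lemma \ref{lemma:estimator_error:DS-OSLRC}, to obtain $\Vert\Delta_s(S^c)\Vert_1\leq \Vert\Delta_s(S)\Vert_1$, reducing the sufficient condition to
\[
\Vert\Delta_s(S)\Vert_1<\tfrac{1}{2}\min_{i\in S}|w^\ast_i|.
\]
Since $s>s_2>s_1$ (the ordering $s_2>s_1$ is ensured under the standing condition $T>(s_2+1)^2$ together with the definitions of $s_1,s_2$, otherwise the lemma is vacuous), the third branch of Lemma \ref{lemma:estimator_error:DS-OSLRC} yields, with probability at least $1-\sqrt{T}(6+\log_{1.5}\tfrac{d+k}{k-2})\delta$,
\[
\Vert\Delta_s(S)\Vert_1\leq \mu_2\cdot\frac{a_5}{\delta^2_S}\sqrt{\frac{k^2(d-1)}{s(k-1)}}.
\]

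Finally, I would plug this bound into the sufficient condition and solve for $s$. Requiring the right-hand side above to be strictly less than $\tfrac{1}{2}\min_{i\in S}|w^\ast_i|$ is equivalent to
\[
s>\frac{4(\mu_2 a_5)^2}{\delta^4_S}\cdot\frac{k^2(d-1)}{\min_{i\in S}|w^\ast_i|^2(k-1)}=s_2,
\]
which is exactly the hypothesis of the lemma. Hence on the same event $S_s=S$ for every $s>s_2$, yielding the claim. There is no real obstacle here: the entire probabilistic content has already been absorbed into Lemmas \ref{lemma:estimator_error:DS-OSLRC} and \ref{lemma:ICML25:Dantzig2005}, and the only remaining step is the elementary entry-wise comparison together with the algebraic inversion of the convergence-rate inequality.
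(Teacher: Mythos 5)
Your proof is correct and follows essentially the same route as the paper's: both arguments reduce $S_s=S$ to the condition $\Vert\Delta_s\Vert_1<\min_{i\in S}\vert w^\ast_i\vert$ (you state it directly as an entry-wise gap condition, the paper phrases it as a contradiction), then apply $\Vert\Delta_s\Vert_1\leq 2\Vert\Delta_s(S)\Vert_1$ via Lemma \ref{lemma:ICML25:Dantzig2005} and the third branch of Lemma \ref{lemma:estimator_error:DS-OSLRC} to obtain exactly the threshold $s_2$. Your explicit handling of the ordering $s_2>s_1$ and of the case $\vert S\vert\leq k$ is, if anything, slightly more careful than the paper's.
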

\begin{proof}[of Lemma \ref{lemma:ICML2025:convergence_S_s_to_S}]
    Assuming that $S_s\neq S$, then there is a $j\in S_s\setminus S$ and $i\in S\setminus S_s$ such that
    $$
        \Vert \hat{\mathbf{w}}_s-\mathbf{w}^\ast\Vert_1
        \geq
        \vert w^\ast_i - \hat{w}_{s,i}\vert + \vert\hat{w}_{s,j}\vert
        \geq \vert w^\ast_i\vert
        \geq \min_{r\in S}\vert w^\ast_r\vert,
    $$
    in which $\vert \hat{w}_{s,i}\vert \leq \vert \hat{w}_{s,j}\vert$.
    For any $s> s_2$, we have
    $
        2\frac{\mu_2a_5k}{\delta^2_S}\sqrt{\frac{d-1}{s(k-1)}}< \min_{i\in S}\vert w^\ast_i\vert,
    $
    which implies that, with probability at least $1-\sqrt{T}\left(6+\log_{1.5}\frac{d+k}{k-2}\right)\delta$,
    $$
        \Vert \hat{\mathbf{w}}_s -\mathbf{w}^\ast\Vert_1
        \leq 2\Vert \hat{\mathbf{w}}_s(S) -\mathbf{w}^\ast\Vert_1
        < \min_{i\in S}\vert w^\ast_i\vert
        \leq\Vert \hat{\mathbf{w}}_s-\mathbf{w}^\ast\Vert_1.
    $$
    There is a contradiction.
    We conclude the proof.
\end{proof}

\begin{proof}[of Theorem \ref{thm:ICML2025:regret_bound_DS-OSLRC}]
    Let $s_T=\lfloor\sqrt{T}\rfloor$ and $T>(s_2+1)^2 >(s_1+1)^2$.
    For simplicity,
    we will alternately use the notation $\hat{y}_t=\langle\bar{\mathbf{w}}_t(S_s),{\mathbf{x}}_t\rangle$.
    The regret of DS-OSLRC can be decomposed as follows,
    \begin{align*}
        \mathrm{Reg}(\mathbf{w}^\ast)
        &=\underbrace{\sum^{s_T}_{s=1}\left[\ell(\hat{y}_{s^2},y_{s^2})
        -\ell(\langle\mathbf{w}^\ast,{\mathbf{x}}_{s^2}\rangle,y_{s^2})\right]}_{:=\mathrm{Reg}_0}
        +\sum^{s_T}_{s=1}\sum^{(s+1)^2-1}_{t=s^2+1}
        \left[\ell(\hat{y}_t,y_t)-\ell(\langle\mathbf{w}^\ast,{\mathbf{x}}_t\rangle,y_t)\right]\\
        =&\mathrm{Reg}_0
        +\left(\sum^{s_0}_{s=1}+\sum^{s_1}_{s=s_0+1}+\sum^{s_2}_{s=s_1+1}+\sum^{s_T}_{s=s_2+1}\right)
        \sum_{t\in\mathcal{T}_s}
        \left[\ell(\langle\bar{\mathbf{w}}_t(S_s),{\mathbf{x}}_t\rangle,y_t)
        -\ell(\langle\mathbf{w}^\ast,{\mathbf{x}}_t\rangle,y_t)\right],
    \end{align*}
    in which $(s_T+1)^2-1:=T$.
    For $\mathrm{Reg}_0$, unfolding the square loss function
    and substituting into the definition of $y_{s^2}$ gives
    \begin{align*}
        \sum^{s_T}_{s=1}\ell(\hat{y}_{s^2},y_{s^2})
        -&\ell(\langle\mathbf{w}^\ast,{\mathbf{x}}_{s^2}\rangle,y_{s^2})
        =\sum^{s_T}_{s=1}(\langle \hat{\mathbf{w}}_{s-1}(B_s)-\mathbf{w}^\ast,{\mathbf{x}}_{s^2}\rangle)^2
       -2\eta_{s^2}\langle \hat{\mathbf{w}}_{s-1}(B_s)-\mathbf{w}^\ast,{\mathbf{x}}_{s^2}\rangle.
    \end{align*}
    Next we separately analyze the regret in the intervals
    $[2,(s_0+1)^2-1]$, $[(s_0+1)^2+1,(s_1+1)^2-1]$, $[(s_1+1)^2+1,(s_2+1)^2-1]$,
    and $[(s_2+1)^2+1,T]$.

    We briefly explain the novelty of our analysis.
    As our algorithm updates $\bar{\mathbf{w}}_t(S_s)$ by ONS,
    the regret in each $\mathcal{T}_s$ can be decomposed into two components.
    The first one is the regret of our algorithm w.r.t. any $\mathbf{w}_s\in\mathcal{W}_s$.
    Note that we can arbitrary choose $\mathbf{w}_s\in\mathcal{W}_s$.
    Specifically,
    we will choose $\mathbf{w}_s$
    by Lemma \ref{lemma:COLT2025:approximation_error_w_s},
    i.e., $\mathbf{w}_s=\mathbf{w}^\ast(S_s\cap S)$.
    ONS ensures that our algorithm converges rapidly to $\mathbf{w}_s$.
    The second part of the regret is
    the difference of the cumulative losses of $\mathbf{w}_s$ and $\mathbf{w}^\ast$.
    As $\Vert \mathbf{w}_s -\mathbf{w}^\ast\Vert_1
    \leq \Vert \hat{\mathbf{w}}_s-\mathbf{w}^\ast\Vert_1$,
    it is easy to analyze the second part
    by the convergence rate in Lemma \ref{lemma:estimator_error:DS-OSLRC}.
    Compared with the vanilla selection of $\mathbf{w}_s$, i.e.,
    $\mathbf{w}_s=\hat{\mathbf{w}}_s(S_s)$,
    our choice can significantly reduce the constant factor
    (please see the discussion below Lemma \ref{lemma:COLT2025:approximation_error_w_s}).
    With probability at least $1-\delta$,
    $\langle\mathbf{w}^\ast,\mathbf{x}_t\rangle\leq Y_{\delta}$ for a fixed $t$.
    Then with probability at least $1-((s_0+1)^2-1)\delta$,
    \begin{align*}
        &{\mathrm{Reg}_{[1:s_0]}(\mathbf{w}^\ast)}\\
        =&\sum^{s_0}_{s=1}\sum_{t\in\mathcal{T}_s}
        \left[\ell(\hat{y}_t,y_t)-\ell(\langle \mathbf{w}_s,{\mathbf{x}}_t\rangle,y_t)
        +\ell(\langle \mathbf{w}_s,{\mathbf{x}}_t\rangle,y_t)-\ell(\langle\mathbf{w}^\ast,{\mathbf{x}}_t\rangle,y_t)\right]\\
        =&\sum^{s_0}_{s=1}\sum_{t\in\mathcal{T}_s}
        2(\hat{y}_t-y_t)(\hat{y}_t-\langle \mathbf{w}_s,{\mathbf{x}}_t\rangle)-
        (\hat{y}_t-\langle \mathbf{w}_s,{\mathbf{x}}_t\rangle)^2
        +(\langle\mathbf{w}_s-\mathbf{w}^\ast,{\mathbf{x}}_t\rangle)^2
        -2\eta_t\langle \mathbf{w}_s-\mathbf{w}^\ast,{\mathbf{x}}_t\rangle\\
        =&\sum^{s_0}_{s=1}\sum_{t\in\mathcal{T}_s}
        \langle \mathbf{g}_t,\bar{\mathbf{w}}_t(S_s)-\mathbf{w}_s\rangle-
        \frac{(\langle \mathbf{g}_t,\bar{\mathbf{w}}_t(S_s)-\mathbf{w}_s\rangle)^2}
        {4(\hat{y}_t-y_t)^2}+(\langle\mathbf{w}_s-\mathbf{w}^\ast,{\mathbf{x}}_t\rangle)^2
        -2\eta_t\langle \mathbf{w}_s-\mathbf{w}^\ast,{\mathbf{x}}_t\rangle\\
        \leq&\sum^{s_0}_{s=1}\sum_{t\in\mathcal{T}_s}
        \langle \mathbf{g}_t,\bar{\mathbf{w}}_t(S_s)-\mathbf{w}_s\rangle-
        \frac{(\langle \mathbf{g}_t,\bar{\mathbf{w}}_t(S_s)-\mathbf{w}_s\rangle)^2}{4(1+Y_{\delta})^2}
        +(\langle\mathbf{w}_s-\mathbf{w}^\ast,{\mathbf{x}}_t\rangle)^2
        -2\eta_t\langle \mathbf{w}_s-\mathbf{w}^\ast,{\mathbf{x}}_t\rangle.
    \end{align*}
    By the regret analysis of ONS \citep{Hazan2007Logarithmic},
    we have
    \begin{align*}
        \left\Vert \bar{\mathbf{w}}_{t+1}(S_s)-\mathbf{w}_s\right\Vert^2_{\mathbf{A}_t}
        \leq& \left\Vert \bar{\mathbf{w}}_t(S_s)-\mathbf{A}^{-1}_t\mathbf{g}_t-\mathbf{w}_s\right\Vert^2_{\mathbf{A}_t}\\
        =&\left\Vert \bar{\mathbf{w}}_t(S_s)-\mathbf{w}_s\right\Vert^2_{\mathbf{A}_t}
        -2\left\langle\bar{\mathbf{w}}_t(S_s)-\mathbf{w}_s,\mathbf{A}^{-1}_t\mathbf{g}_t\right\rangle_{\mathbf{A}_t}
        +\left\Vert \mathbf{A}^{-1}_t\mathbf{g}_t\right\Vert^2_{\mathbf{A}_t}.
    \end{align*}
    Recalling that $\rho=\frac{1}{2(1+Y_{\delta})^2}$.
    Rearranging terms yields
    \begin{align*}
        &\sum_{t\in\mathcal{T}_s}
        \langle \mathbf{g}_t,\bar{\mathbf{w}}_t(S_s)-\mathbf{w}_s\rangle-
        \frac{(\langle \mathbf{g}_t,\bar{\mathbf{w}}_t(S_s)-\mathbf{w}_s\rangle)^2}{4(1+Y_{\delta})^2}\\
        \leq&\sum^{(s+1)^2-1}_{t=s^2+1}
        \frac{\Vert \bar{\mathbf{w}}_t(S_s)-\mathbf{w}_s\Vert^2_{\mathbf{A}_t}
        -\Vert \bar{\mathbf{w}}_{t+1}(S_s)-\mathbf{w}_s\Vert^2_{\mathbf{A}_t}}{2}
        +\frac{\Vert \mathbf{A}^{-1}_t\mathbf{g}_t\Vert^2_{\mathbf{A}_t}}{2}
        -\frac{\Vert \bar{\mathbf{w}}_t(S_s)-\mathbf{w}_s\Vert^2_{\rho\cdot\mathbf{g}_t\mathbf{g}^\top_t}}{2}\\
        =&\sum^{(s+1)^2-1}_{t=s^2+1}
        \frac{\Vert \bar{\mathbf{w}}_t(S_s)-\mathbf{w}_s\Vert^2_{\mathbf{A}_{t-1}}
        -\Vert \bar{\mathbf{w}}_{t+1}(S_s)-\mathbf{w}_s\Vert^2_{\mathbf{A}_t}}{2}
        +\frac{\Vert \mathbf{A}^{-1}_t\mathbf{g}_t\Vert^2_{\mathbf{A}_t}}{2}\\
        =&
        \underbrace{
        \frac{\Vert \bar{\mathbf{w}}_{s^2+1}(S_s)-\mathbf{w}_s\Vert^2_{\mathbf{A}_{s^2}}}{2}
        -\frac{\Vert \bar{\mathbf{w}}_{(s+1)^2}(S_s)-\mathbf{w}_s\Vert^2_{\mathbf{A}_{(s+1)^2-1}}}{2}
        +\sum^{(s+1)^2-1}_{t=s^2+1}\frac{\mathbf{g}^\top_t\mathbf{A}^{-1}_t\mathbf{g}_t}{2}}_{:=\Xi_s},
    \end{align*}
    where
    $\mathbf{A}_t=\mathbf{A}_{t-1}+\rho\cdot \mathbf{g}_{t}\mathbf{g}^\top_t$.
    We have
    $$
        {\mathrm{Reg}_{[1:s_0]}(\mathbf{w}^\ast)}\leq \sum^{s_0}_{s=1}\Xi_s
        +\sum^{s_0}_{s=1}\left[\sum^{(s+1)^2-1}_{t=s^2+1}
        (\langle\mathbf{w}_s-\mathbf{w}^\ast,{\mathbf{x}}_t\rangle)^2
        -2\eta_t\langle \mathbf{w}_s-\mathbf{w}^\ast,{\mathbf{x}}_t\rangle\right].
    $$
    Similarly, the regret in the intervals $[(s_0+1)^2+1,(s_1+1)^2-1]$, $[(s_1+1)^2+1,(s_2+1)^2-1]$,
    and $[(s_2+1)^2+1,T]$ can be decomposed as follows,
    \begin{align*}
        {\mathrm{Reg}_{[s_0:s_1]}(\mathbf{w}^\ast)}
        :=&\sum^{s_1}_{s=s_0+1}\sum^{(s+1)^2-1}_{t=s^2+1}
        \left[\ell(\hat{y}_t,y_t)-\ell(\langle \mathbf{w}_s,{\mathbf{x}}_t\rangle,y_t)
        +\ell(\langle\mathbf{w}_s,{\mathbf{x}}_t\rangle,y_t)-\ell(\langle\mathbf{w}^\ast,{\mathbf{x}}_t\rangle,y_t)\right]\\
        =&\sum^{s_1}_{s=s_0+1}\Xi_s+\sum^{s_1}_{s=s_0+1}\left[\sum^{(s+1)^2-1}_{t=s^2+1}
        (\langle\mathbf{w}_s-\mathbf{w}^\ast,{\mathbf{x}}_t\rangle)^2
        -2\eta_t\langle \mathbf{w}_s-\mathbf{w}^\ast,{\mathbf{x}}_t\rangle\right],\\
        {\mathrm{Reg}_{[s_1:s_2]}(\mathbf{w}^\ast)}
        :=&\sum^{s_2}_{s=s_1+1}\sum^{(s+1)^2-1}_{t=s^2+1}
        \left[\ell(\hat{y}_t,y_t)-\ell(\langle\mathbf{w}^\ast,{\mathbf{x}}_t\rangle,y_t)\right]\\
        \leq&\sum^{s_2}_{s=s_1+1}\Xi_s+\sum^{s_2}_{s=s_1+1}\left[\sum^{(s+1)^2-1}_{t=s^2+1}
        (\langle\mathbf{w}_s-\mathbf{w}^\ast,{\mathbf{x}}_t\rangle)^2
        -2\eta_t\langle \mathbf{w}_s-\mathbf{w}^\ast,{\mathbf{x}}_t\rangle\right],\\
        {\mathrm{Reg}_{[s_2:s_T]}(\mathbf{w}^\ast)}
        :=&\sum^{s_T}_{s=s_2+1}\sum^{(s+1)^2-1}_{t=s^2+1}
        \left[\ell(\hat{y}_t,y_t)-\ell(\langle\mathbf{w}^\ast,{\mathbf{x}}_t\rangle,y_t)\right]\\
        \leq&\sum^{s_T}_{s=s_2+1}\Xi_s+\sum^{s_T}_{s=s_2+1}\left[\sum^{(s+1)^2-1}_{t=s^2+1}
        (\langle\mathbf{w}_s-\mathbf{w}^\ast,{\mathbf{x}}_t\rangle)^2
        -2\eta_t\langle \mathbf{w}_s-\mathbf{w}^\ast,{\mathbf{x}}_t\rangle\right].
    \end{align*}
    Summing over all results gives,
    with probability at least $1-T\delta$,
    \begin{align*}
        &\mathrm{Reg}(\mathbf{w}^\ast)=\\
        &\underbrace{\left[\sum^{s_0}_{s=1}+\sum^{s_1}_{s=s_0+1}+\sum^{s_2}_{s=s_1+1}+\sum^{s_T}_{s=s_2+1}\right]
        \frac{\Vert \bar{\mathbf{w}}_{s^2+1}(S_s)-\mathbf{w}_s\Vert^2_{\mathbf{A}_{s^2}}
        -\Vert \bar{\mathbf{w}}_{(s+1)^2}(S_s)-\mathbf{w}_s\Vert^2_{\mathbf{A}_{(s+1)^2-1}}}{2}}_{:=\mathrm{Reg}_1}+\\
        &\underbrace{\left(\sum^{s_0}_{s=1}+\sum^{s_1}_{s=s_0+1}+\sum^{s_2}_{s=s_1+1}+\sum^{s_T}_{s=s_2+1}\right)
        \sum^{(s+1)^2-1}_{t=s^2+1}\frac{\mathbf{g}^\top_t\mathbf{A}^{-1}_t\mathbf{g}_t}{2}}_{:=\mathrm{Reg}_2}+\\
        &\underbrace{\left(\sum^{s_0}_{s=1}+\sum^{s_1}_{s=s_0+1}+\sum^{s_2}_{s=s_1+1}+\sum^{s_T}_{s=s_2+1}\right)
        \sum^{(s+1)^2-1}_{t=s^2+1}
        (\langle\mathbf{w}_s-\mathbf{w}^\ast,{\mathbf{x}}_t\rangle)^2
        +\sum^{s_T}_{s=1}(\langle \hat{\mathbf{w}}_{s-1}(B_s)-\mathbf{w}^\ast,{\mathbf{x}}_{s^2}\rangle)^2}_{:=\mathrm{Reg}_3}+\\
        &\underbrace{-2\left(\sum^{s_0}_{s=1}+\sum^{s_1}_{s=s_0+1}+\sum^{s_2}_{s=s_1+1}+\sum^{s_T}_{s=s_2+1}\right)\sum^{(s+1)^2-1}_{t=s^2+1}
        \eta_t\langle \mathbf{w}_s-\mathbf{w}^\ast,{\mathbf{x}}_t\rangle
        -\sum^{s_T}_{s=1}2\eta_{s^2}\langle \hat{\mathbf{w}}_{s-1}(B_s)-\mathbf{w}^\ast,{\mathbf{x}}_{s^2}\rangle
        }_{:=\mathrm{Reg}_4}.
    \end{align*}

    We first analyze $\mathrm{Reg}_4$.
    By the concentration inequality of Gaussian variables,
    with probability at least $1-\delta$,
    \begin{align*}
        \mathrm{Reg}_4
        \leq2\sqrt{2\mathrm{Reg}_3\cdot\ln\frac{1}{\delta}}.
    \end{align*}
    Next we analyze $\mathrm{Reg}_3$.
    \begin{align*}
        \sum^{s_T}_{s=1}(\langle \hat{\mathbf{w}}_{s-1}(B_s)-\mathbf{w}^\ast,\mathbf{x}_{s^2}\rangle)^2
        \leq& s_T\cdot \Vert \hat{\mathbf{w}}_{s-1}(B_s)-\mathbf{w}^\ast\Vert^2_1\cdot \Vert \mathbf{x}_{s^2}\Vert^2_{\infty}
        \leq 4s_T,\\
        \sum^{s_0}_{s=1}\sum^{(s+1)^2-1}_{t=s^2+1}
        (\langle\mathbf{w}_s-\mathbf{w}^\ast,{\mathbf{x}}_t\rangle)^2\leq& 4s_0(s_0+1).
    \end{align*}
    Next we analyze the regret in the intervals
    $[(s_0+1)^2+1,(s_1+1)^2-1]$, $[s^2_1+1,(s_2+1)^2-1]$ and $[(s_2+1)^2+1,T]$.
    For any $s\geq 2$,
    if $S_s\neq S_{s-1}$,
    then we call such a round ``breakpoint''.
    Assuming that there are $n$ breakpoints in the set $\{2,3,\ldots, s_T\}$,
    denoted by $b_1,b_2,\ldots,b_n$.
    Specifically, for each $j=1,2,\ldots,n$,
    we have $S_{b_j}\neq S_{b_j-1}$.
    Without loss of generality,
    assuming that there are two breakpoints $b_r,b_m\in\{b_1,b_2,\ldots,b_n\}$ such that
    $$
        b_{r-1}< s_0+1< b_r,\quad b_{m-1}< s_1+1 < b_m.
    $$
    By Lemma \ref{lemma:ICML2025:convergence_S_s_to_S},
    under the condition that $\Delta_s(S)$ satisfies
    Lemma \ref{lemma:estimator_error:DS-OSLRC} for any $s\geq 1$,
    it must be
    $$
        \forall s\geq s_2+1,\quad S_s=S_{s+1}=\ldots=S_{s_T}=S,
    $$
    which means there is no breakpoints in the interval $[s_2+2,s_T]$
    and
    \begin{equation}
    \label{eq:ICML2025:upper_bound_restart}
        b_n\leq s_2+1,\quad n\leq s_2.
    \end{equation}
    Next we define the competitor $\mathbf{w}_s$ as follows.
    \begin{equation}
    \label{eq:ICML2025:definition:w_s}
    \left\{
    \begin{split}
        \forall s\in \{s_0+1,\ldots,b_r-1\}, \quad
        \mathbf{w}_s=&\mathbf{w}^\ast(S_{b_r-1}\cap S):=\mathbf{w}^{\ast}_{b_r-1},\\
        \forall s\in \{b_{\tau},b_{\tau+1}-1\},r\leq\tau\leq n-2,\quad
        \mathbf{w}_s=&\mathbf{w}^\ast(S_{b_{\tau+1}-1}\cap S):=\mathbf{w}^{\ast}_{b_{\tau+1}-1},\\
        \forall s\in \{b_{n-1},\ldots,s_2\},\quad
        \mathbf{w}_s=&\mathbf{w}^\ast(S_{s_2}\cap S):=\mathbf{w}^{\ast}_{s_2},\\
        \forall s\in \{b_n,\ldots,s_T\},\quad\mathbf{w}_s=&\mathbf{w}^\ast.
    \end{split}
    \right.
    \end{equation}
    By Lemma \ref{lemma:COLT2025:approximation_error_w_s},
    \eqref{eq:ICML2025:bounding_Delta(S_s)_by_Delta_s:ours}
    and \eqref{eq:ICML2025:upper_bound_restart},
    \begin{align*}
        &\mathrm{Reg}_3-\sum^{s_T}_{s=1}(\langle \hat{\mathbf{w}}_{s-1}(B_s)-\mathbf{w}^\ast,\mathbf{x}_{s^2}\rangle)^2
            -\sum^{s_0}_{s=1}\sum_{t\in\mathcal{T}_s}(\langle\mathbf{w}_s-\mathbf{w}^\ast,{\mathbf{x}}_t\rangle)^2\\
        \leq&\left(\sum^{b_r-1}_{s=s_0+1}+
            \sum^{n-1}_{\tau=r}\sum^{b_{\tau+1}-1}_{s=b_{\tau}}+
            \sum^{s_T}_{s=b_n}\right)\sum^{(s+1)^2-1}_{t=s^2+1}
            \Vert\mathbf{w}_s-\mathbf{w}^\ast\Vert^2_1\cdot \Vert {\mathbf{x}}_t\Vert^2_{\infty}\\
        \leq&\left(\sum^{b_r-1}_{s=s_0+1}+
            \sum^{n-2}_{\tau=r}\sum^{b_{\tau+1}-1}_{s=b_{\tau}}+
            \sum^{s_2}_{s=b_{n-1}}\right)\sum^{(s+1)^2-1}_{t=s^2+1}
            \Vert\mathbf{w}_s-\mathbf{w}^\ast\Vert^2_1\cdot \Vert {\mathbf{x}}_t\Vert^2_{\infty}\\
        \leq&2\left(\sum^{b_r-1}_{s=s_0+1}
            \Vert \mathbf{w}^\ast_{b_r-1}-\mathbf{w}^\ast\Vert^2_1+
            \sum^{n-2}_{\tau=r}\sum^{b_{\tau+1}-1}_{s=b_{\tau}}
            \Vert \mathbf{w}^\ast_{b_{\tau+1}-1}-\mathbf{w}^\ast\Vert^2_1+
            \sum^{s_2}_{s=b_{n-1}}\Vert \mathbf{w}^\ast_{s_2}-\mathbf{w}^\ast\Vert^2_1\right)\cdot s\\
        \leq&2\left(\sum^{b_r-1}_{s=s_0+1}
            \Vert \hat{\mathbf{w}}_{b_r-1}-\mathbf{w}^\ast\Vert^2_1+
            \sum^{n-2}_{\tau=r}\sum^{b_{\tau+1}-1}_{s=b_{\tau}}
            \Vert \hat{\mathbf{w}}_{b_{\tau+1}-1}-\mathbf{w}^\ast\Vert^2_1+
            \sum^{s_2}_{s=b_{n-1}}
            \Vert \hat{\mathbf{w}}_{s_2}-\mathbf{w}^\ast\Vert^2_1\right)\cdot s\\
        \leq&8\left(\sum^{b_r-1}_{s=s_0+1}\Vert \Delta_{b_r-1}(S)\Vert^2_1+
            \sum^{n-2}_{\tau=r}\sum^{b_{\tau+1}-1}_{s=b_{\tau}}\Vert \Delta_{b_{\tau+1}-1}(S)\Vert^2_1+
            \sum^{s_2}_{s=b_{n-1}}\Vert \Delta_{s_2}(S)\Vert^2_1\right)\cdot s\\
        \leq&8\sum^{s_1}_{s=s_0+1}\Vert \Delta_s(S)\Vert^2_1\cdot s +
            8\sum^{s_2}_{s=s_1+1}\Vert \Delta_s(S)\Vert^2_1\cdot s
            \qquad\qquad\qquad\qquad\qquad\qquad(\mathrm{by}~\mathrm{Lemma}~\ref{lemma:estimator_error:DS-OSLRC})\\
        \leq&22\frac{a^2_4}{\delta^8_S}k^4g^2_{d,k}\ln\frac{s_1}{s_0}
            +8\frac{(\mu_2a_5)^2}{\delta^4_S}\frac{k^2(d-1)}{k-1}(s_2-s_1).
    \end{align*}
    Summing all results gives
    \begin{align*}
        \mathrm{Reg}_3
        \leq& 4s_T+4s_0(s_0+1)
        +22\frac{a^2_4}{\delta^8_S}k^4g^2_{d,k}\ln\frac{s_1}{s_0}
        +8\cdot\frac{(\mu_2a_5)^2}{\delta^4_S}\frac{k^2(d-1)}{k-1}(s_2-s_1)\\
        \leq&4\sqrt{T}+22\frac{a^2_4}{\delta^8_S}k^4g^2_{d,k}\ln\frac{(d-2)\ln\frac{d}{\delta}}{k-2}
        +2\frac{(2\mu_2a_5)^4}{\delta^8_S}
        \cdot\frac{k^4(d-1)^2}{\min_{i\in S}\vert w^\ast_i\vert^2(k-1)^2}.
    \end{align*}
    Now we begin to analyze $\mathrm{Reg}_1$.
    For $s\in[1,b_1-1]$,
    we define
    $$
        \mathbf{w}_s=\mathbf{w}^{\ast}(S_{b_1-1}\cap S):=\mathbf{w}^{\ast}_{b_1-1}.
    $$
    Recalling \eqref{eq:ICML2025:definition:w_s},
    for $\tau\in[b_{\tau},b_{\tau+1}-1]$,
    in which $1\leq \tau\leq n-1$,
    we define $\mathbf{w}_s=\mathbf{w}^\ast_{b_{\tau+1}-1}$.
    For $s\in [b_n,s_T]$, $\mathbf{w}_s=\mathbf{w}^\ast$.
    \begin{align*}
        \mathrm{Reg}_1
        =&\sum^{b_1-1}_{s=1}
        \left[
        \frac{\Vert \bar{\mathbf{w}}_{s^2+1}(S_s)-\mathbf{w}^{\ast}_{b_1-1}\Vert^2_{\mathbf{A}_{s^2}}}{2}
        -\frac{\Vert \bar{\mathbf{w}}_{(s+1)^2}(S_s)-\mathbf{w}^{\ast}_{b_1-1}\Vert^2_{\mathbf{A}_{(s+1)^2-1}}}{2}\right]+\\
        &\sum^{n-1}_{r=1}\sum^{b_{r+1}-1}_{s=b_r}
        \frac{\Vert \bar{\mathbf{w}}_{s^2+1}(S_s)-\mathbf{w}^{\ast}_{b_{r+1}-1}\Vert^2_{\mathbf{A}_{s^2}}}{2}
        -\frac{\Vert \bar{\mathbf{w}}_{(s+1)^2}(S_s)-\mathbf{w}^{\ast}_{b_{r+1}-1}
        \Vert^2_{\mathbf{A}_{(s+1)^2-1}}}{2}+\\
        &\sum^{s_T}_{s=b_n}
        \left[
        \frac{\Vert \bar{\mathbf{w}}_{s^2+1}(S_s)-\mathbf{w}^{\ast}\Vert^2_{\mathbf{A}_{s^2}}}{2}
        -\frac{\Vert \bar{\mathbf{w}}_{(s+1)^2}(S_s)-\mathbf{w}^{\ast}\Vert^2_{\mathbf{A}_{(s+1)^2-1}}}{2}\right]+\\
        \leq&\frac{\Vert \bar{\mathbf{w}}_2(S_1)-\hat{\mathbf{w}}^\ast_{b_1-1}\Vert^2_{\mathbf{A}_1}}{2}+
        \sum^{n-1}_{r=1}
        \frac{\Vert \bar{\mathbf{w}}_{(b_r)^2+1}(S_{b_r})-\hat{\mathbf{w}}^{\ast}_{b_{r+1}-1}
        \Vert^2_{\mathbf{A}_{(b_r)^2}}}{2}+\\
        &\frac{\Vert \bar{\mathbf{w}}_{(b_n)^2+1}(S_s)-\mathbf{w}^\ast\Vert^2_{\mathbf{A}_{(b_n)^2}}}{2}\\
        \leq&4\varepsilon+2(n-1)\varepsilon,
    \end{align*}
    in which
    $\bar{\mathbf{w}}_{s^2+1}(S_s)=\bar{\mathbf{w}}_{s^2}(S_{s-1})$ and $\mathbf{A}_{s^2}=\mathbf{A}_{s^2-1}$
    for all $s$ such that $S_s=S_{s-1}$,
    and $\mathbf{A}_{s^2}=\varepsilon\cdot\mathbf{I}_{k\times k}$
    for $s=1,b_1,b_2,\ldots,b_n$.

    Finally, we analyze $\mathrm{Reg}_2$.
    By Lemma \ref{lemma:ICML2025:technical_lemma:ONS},
    \begin{align*}
        \mathrm{Reg}_2
        =&\sum^{b_1-1}_{s=1}\sum^{(s+1)^2-1}_{t=s^2+1}\frac{\mathbf{g}^\top_t\mathbf{A}^{-1}_t\mathbf{g}_t}{2}
        +\sum^{n-1}_{r=1}\sum^{b_{r+1}-1}_{s=b_r}\sum^{(s+1)^2-1}_{t=s^2+1}\frac{\mathbf{g}^\top_t\mathbf{A}^{-1}_t\mathbf{g}_t}{2}
        +\sum^{s_T}_{s=b_n}\sum^{(s+1)^2-1}_{t=s^2+1}\frac{\mathbf{g}^\top_t\mathbf{A}^{-1}_t\mathbf{g}_t}{2}\\
        \leq&\frac{nk}{2}\ln\left(\frac{4(1+Y_{\delta})^2kb^2_n}{\varepsilon}+1\right)
        +\frac{k}{2}\ln\left(\frac{4(1+Y_{\delta})^2kT}{\varepsilon}+1\right),
    \end{align*}
    in which $\Vert \mathbf{g}_t\Vert_2\leq
    2\left(1+Y_{\delta}\right)\Vert {\mathbf{x}}_t(S_s)\Vert_2\leq
    2\left(1+Y_{\delta}\right)\sqrt{k}$.
    By \eqref{eq:ICML2025:upper_bound_restart},
    $$
        \mathrm{Reg}_1+\mathrm{Reg}_2
        \leq 2s_2\varepsilon
        +\frac{s_2}{2}k\ln\left(\frac{4(1+Y_{\delta})^2k(s_2+1)^2}{\varepsilon}+1\right)
        +\frac{k}{2}\ln\left(\frac{4(1+Y_{\delta})^2kT}{\varepsilon}+1\right).
    $$
    Let $\varepsilon=k$.
    Summing all results gives,
    with probability at least $1-(T+\sqrt{T}(6+\log_{1.5}\frac{2(d-2)}{3(k-2)})+1)\delta$,
    we have
    \begin{align*}
        \mathrm{Reg}(\mathbf{w}^\ast)
        \leq&4\sqrt{T}+2ks_2
        +\frac{s_2k}{2}\ln\left(4(1+Y_{\delta})^2s^2_2+1\right)
        +\frac{k}{2}\ln\left(4(1+Y_{\delta})^2T+1\right)\\
        &\frac{22a^2_4}{\delta^8_S}k^4g^2_{d,k}\ln\frac{(d-2)\ln\frac{d}{\delta}}{k-2}
        +\frac{2(2\mu_2a_5)^4}{\delta^8_S}
        \frac{k^4(d-1)^2}{\min_{i\in S}\vert w^\ast_i\vert^2(k-1)^2}
        +2\sqrt{2\mathrm{Reg}_3\ln\frac{1}{\delta}}.
    \end{align*}
    Omitting the lower order terms concludes the proof.
\end{proof}

\section{Proof of Theorem \ref{thm:ICML2025:regret_bound_DS-POSLRC}}

    We first prove a technical lemma similar to Lemma \ref{lemma:COLT2025:approximation_error_w_s}.
    \begin{Mylemma}
    \label{lemma:COLT2025:approximation_error_hat_w(S_s)}
        For any $s\geq 1$,
        let $\hat{\mathbf{w}}_s$ be the solution of $\mathrm{DS}(\hat{\gamma}_s)$.
        Let $S_s\subseteq[d]$ satisfy
        $\vert S_s\vert=k$ and for any $i\in S_s$
        and $j\in [d]\setminus S_s$, $\vert \hat{w}_{s,i}\vert\geq \vert \hat{w}_{s,j}\vert$.
        If $\hat{\gamma}_{\tau}\geq \gamma_{\tau}$ for all $\tau\leq s$,
        then w.p. at least $1-s\left(5+\log_{1.5}\frac{2(d'-2)}{3(k_0-2)}\right)\delta$,
        $$
            \forall \tau\leq s,\quad
            \Vert \hat{\mathbf{w}}_{\tau}(S_{\tau}) - \mathbf{w}^\ast\Vert_1
            \leq 3\Vert \hat{\mathbf{w}}_{\tau}(S) - \mathbf{w}^\ast\Vert_1.
        $$
    \end{Mylemma}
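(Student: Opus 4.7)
The plan is to decompose $\hat{\mathbf{w}}_{\tau}(S_{\tau}) - \mathbf{w}^\ast$ coordinate-wise into three disjoint parts indexed by $S_{\tau}\cap S$, $S_{\tau}\setminus S$, and $S\setminus S_{\tau}$, and then reduce all three quantities to $\Vert\Delta_\tau(S)\Vert_1$ using Lemma \ref{lemma:ICML25:Dantzig2005}. Throughout we write $\Delta_\tau = \hat{\mathbf{w}}_\tau - \mathbf{w}^\ast$, noting $\mathbf{w}^\ast$ vanishes on $S^c$.

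First I would observe that, because $\mathbf{w}^\ast$ is zero off $S$ and $\hat{\mathbf{w}}_\tau(S_\tau)$ is zero off $S_\tau$,
$$
\Vert \hat{\mathbf{w}}_{\tau}(S_{\tau}) - \mathbf{w}^\ast\Vert_1
=\Vert \Delta_\tau(S_\tau\cap S)\Vert_1
+\Vert \Delta_\tau(S_\tau\setminus S)\Vert_1
+\Vert \mathbf{w}^\ast(S\setminus S_\tau)\Vert_1.
$$
The first two summands are immediately controlled: the first equals a piece of $\Delta_\tau(S)$, and the second lies inside $\Delta_\tau(S^c)$. So only the third summand $\Vert \mathbf{w}^\ast(S\setminus S_\tau)\Vert_1$ requires real work.

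The main obstacle is this third term, and here is where the definition of $S_\tau$ enters. Since $|S_\tau|=k\geq|S|$, one has $|S_\tau\setminus S|\geq |S\setminus S_\tau|$, so I can fix an injection $\varphi:S\setminus S_\tau \hookrightarrow S_\tau\setminus S$. By the defining property of $S_\tau$ (top-$k$ magnitudes of $\hat{\mathbf{w}}_\tau$), $|\hat{w}_{\tau,\varphi(j)}|\geq |\hat{w}_{\tau,j}|$ for every $j\in S\setminus S_\tau$. Writing $|w^\ast_j|\leq |w^\ast_j-\hat{w}_{\tau,j}|+|\hat{w}_{\tau,j}|$ and summing over $S\setminus S_\tau$, the injectivity of $\varphi$ then yields
$$
\Vert \mathbf{w}^\ast(S\setminus S_\tau)\Vert_1
\leq \Vert \Delta_\tau(S\setminus S_\tau)\Vert_1 + \Vert \hat{\mathbf{w}}_\tau(S_\tau\setminus S)\Vert_1
= \Vert \Delta_\tau(S\setminus S_\tau)\Vert_1 + \Vert \Delta_\tau(S_\tau\setminus S)\Vert_1,
$$
using $w^\ast$ vanishes on $S_\tau\setminus S\subseteq S^c$.

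To conclude, I would substitute this bound back, regroup $\Vert \Delta_\tau(S_\tau\cap S)\Vert_1+\Vert \Delta_\tau(S\setminus S_\tau)\Vert_1=\Vert \Delta_\tau(S)\Vert_1$, and observe $\Vert \Delta_\tau(S_\tau\setminus S)\Vert_1\leq \Vert \Delta_\tau(S^c)\Vert_1$. Invoking Lemma \ref{lemma:ICML25:Dantzig2005}, which is legitimate under the standing hypothesis $\hat\gamma_\tau\geq\gamma_\tau$ for all $\tau\leq s$ and holds with probability at least $1-s(5+\log_{1.5}\tfrac{2(d'-2)}{3(k_0-2)})\delta$, gives $\Vert \Delta_\tau(S^c)\Vert_1\leq \Vert \Delta_\tau(S)\Vert_1$. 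Putting the pieces together yields
$$
\Vert \hat{\mathbf{w}}_{\tau}(S_{\tau}) - \mathbf{w}^\ast\Vert_1
\leq \Vert \Delta_\tau(S)\Vert_1 + 2\Vert \Delta_\tau(S^c)\Vert_1
\leq 3\Vert \Delta_\tau(S)\Vert_1 = 3\Vert \hat{\mathbf{w}}_\tau(S)-\mathbf{w}^\ast\Vert_1,
$$
for every $\tau\leq s$ simultaneously on the high-probability event from Lemma \ref{lemma:ICML25:Dantzig2005}, which is exactly the stated conclusion.
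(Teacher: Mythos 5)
Your proposal is correct and follows essentially the same route as the paper: the identical three-way decomposition over $S_\tau\cap S$, $S_\tau\setminus S$, and $S\setminus S_\tau$, the triangle inequality on the last block, the top-$k$ property of $S_\tau$ to absorb $\sum_{i\in S\setminus S_\tau}\vert\hat w_{\tau,i}\vert$ into $\sum_{i\in S_\tau\setminus S}\vert\hat w_{\tau,i}\vert$, and a final appeal to Lemma \ref{lemma:ICML25:Dantzig2005} to get the factor $3$. Your explicit injection argument merely spells out a step the paper leaves implicit.
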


    \begin{proof}[of Lemma \ref{lemma:COLT2025:approximation_error_hat_w(S_s)}]
        Unfolding $\Vert \mathbf{w}^{\ast}_{\tau}(S_{\tau}) - \mathbf{w}^\ast\Vert_1$ gives
        \begin{align*}
            \Vert \hat{\mathbf{w}}_{\tau}(S_{\tau}) - \mathbf{w}^\ast\Vert_1
            =&\sum_{i\in S\cap S_{\tau}}\vert \hat{w}_{\tau,i}-w^\ast_i\vert
            +\sum_{i\in S_{\tau}\setminus S}\vert \hat{w}_{\tau,i}\vert+
            \sum_{i\in S\setminus S_{\tau}} \vert w^{\ast}_i\vert\\
            =&\sum_{i\in S\cap S_{\tau}}\vert \hat{w}_{\tau,i}-w^\ast_i\vert+
            \sum_{i\in S_{\tau}\setminus S}\vert \hat{w}_{\tau,i}\vert+
            \sum_{i\in S\setminus S_{\tau}} \vert w^{\ast}_i-\hat{w}_{\tau,i}+\hat{w}_{\tau,i}\vert\\
            \leq&\sum_{i\in S\cap S_{\tau}}\vert \hat{w}_{\tau,i}-w^\ast_i\vert+
            \sum_{i\in S_{\tau}\setminus S}\vert \hat{w}_{\tau,i}\vert+
            \sum_{i\in S\setminus S_{\tau}} \vert w^{\ast}_i-\hat{w}_{\tau,i}\vert
            +\sum_{i\in S\setminus S_{\tau}}\vert\hat{w}_{\tau,i}\vert\\
            \leq&\Vert \hat{\mathbf{w}}_{\tau}(S) - \mathbf{w}^\ast\Vert_1+
            2\sum_{i\in S_{\tau}\setminus S}\vert \hat{w}_{\tau,i}\vert\\
            \leq& 3\Vert \hat{\mathbf{w}}_{\tau}(S) - \mathbf{w}^\ast\Vert_1,
        \end{align*}
        where the last inequality comes from Lemma \ref{lemma:ICML25:Dantzig2005}.
        We conclude the proof.
    \end{proof}

\begin{proof}[of Theorem \ref{thm:ICML2025:regret_bound_DS-POSLRC}]
    For simplicity,
    we will alternately use the notation
    $\hat{y}_s=\langle\hat{\mathbf{w}}_{s-1}(S_{s-1}),{\mathbf{x}}_s(S_{s-1})\rangle$.
    With probability at least $1-(T(6+\log_{1.5}\frac{d+k}{k-2})+1)\delta$,
    the regret of DS-POSLRC satisfies
    \begin{align*}
        &\mathrm{Reg}(\mathbf{w}^\ast)\\
        =&\left(\sum^{s_0}_{s=1}+\sum^{s_1}_{s=s_0+1}+
        \sum^T_{s=s_1+1}\right)
        \left[\ell\left(\langle \hat{\mathbf{w}}_{s-1}(S_{s-1}),\mathbf{x}_s(S_{s-1})\rangle,y_s\right)
        -\ell\left(\langle\mathbf{w}^\ast,{\mathbf{x}}_s\rangle,y_s\right)\right]\\
        =&\left(\sum^{s_0}_{s=1}+\sum^{s_1}_{s=s_0+1}+
        \sum^T_{s=s_1+1}\right)
        \left[(\langle \hat{\mathbf{w}}_{s-1}(S_{s-1})-\mathbf{w}^\ast,{\mathbf{x}}_s\rangle)^2
        -2\eta_t\langle \hat{\mathbf{w}}_{s-1}(S_{s-1})-\mathbf{w}^\ast,{\mathbf{x}}_s\rangle\right]\\
        \leq&\left(\sum^{s_0}_{s=1}+\sum^{s_1}_{s=s_0+1}+\sum^T_{s=s_1+1}\right)
        \Vert\hat{\mathbf{w}}_{s-1}(S_{s-1})-\mathbf{w}^\ast\Vert^2_1+
        2\sqrt{2\sum^T_{s=1}\Vert\hat{\mathbf{w}}_{s-1}(S_{s-1})-\mathbf{w}^\ast\Vert^2_1\cdot\ln\frac{1}{\delta}}\\
        \leq&\frac{48^2k^2g'_{d',k_0}}{\delta^4_S}\ln\frac{d^2}{\delta}
        +\frac{2.7a^2_4k^2g_{d',k_0}}{64\delta^4_S\ln\frac{d^2}{\delta}}
        +9\mu^2_2a^2_5\frac{k^2(d'-1)}{\delta^4_S(k_0-1)}\ln\frac{T}{s_1+1}+\\
        &2\sqrt{2\left(\frac{48^2k^2g'_{d',k_0}}{\delta^4_S}\ln\frac{d^2}{\delta}
        +\frac{2.7a^2_4k^2g_{d',k_0}}{64\delta^4_S\ln\frac{d^2}{\delta}}
        +9\mu^2_2a^2_5\frac{k^2(d'-1)}{\delta^4_S(k_0-1)}\ln\frac{T}{s_1+1}\right)\ln\frac{1}{\delta}},
    \end{align*}
    in which by
    Lemma \ref{lemma:estimator_error:DS-POSLRC} and
    Lemma \ref{lemma:COLT2025:approximation_error_hat_w(S_s)},
    we have
    \begin{align*}
        &\left(\sum^{s_0}_{s=1}+\sum^{s_1}_{s=s_0+1}+\sum^T_{s=s_1+1}\right)
        \Vert\hat{\mathbf{w}}_{s-1}(S_{s-1})-\mathbf{w}^\ast\Vert^2_1\\
        \leq&4s_0+9\left(\sum^{s_1}_{s=s_0+1}+\sum^T_{s=s_1+1}\right)
        \Vert\hat{\mathbf{w}}_{s-1}(S)-\mathbf{w}^\ast\Vert^2_1\\
        \leq&4s_0+
        9\left(\frac{9}{9-2\sqrt{3}}\right)^2\frac{a^2_4k^4g^2_{d',k_0}}{\delta^8_S}
        \left(\frac{1}{s_0}-\frac{1}{s_1}\right)+
        9\mu^2_2a^2_5\frac{k^2(d'-1)}{\delta^4_S(k_0-1)}\ln\frac{T}{s_1+1}\\
        \leq&4s_0+\frac{2.7a^2_4k^2g_{d',k_0}}{64\delta^4_S\ln\frac{d^2}{\delta}}
        +9\mu^2_2a^2_5\frac{k^2(d'-1)}{\delta^4_S(k_0-1)}\ln\frac{T}{s_1+1}.
    \end{align*}
    Omitting the lower order terms concludes the proof.
\end{proof}

\end{document}